%File: formatting-instruction.tex
\documentclass[twoside,11pt]{article}
\usepackage{jair, rawfonts}

\usepackage[utf8]{inputenc}
\usepackage{graphicx}
\usepackage{amsmath}
\usepackage{amssymb}
\usepackage{csquotes}
\usepackage{centernot}
\usepackage{cleveref}

\usepackage{todonotes}
\usepackage{tikz}
\usepackage{subfig}
\usepackage{amsthm} 

\newtheoremstyle{break}
  {\topsep}{\topsep}%
  {\itshape}{}%
  {\bfseries}{}%
  {\newline}{}%
\theoremstyle{break}

\newtheorem{example}{Example}
\newtheorem{proposition}{Proposition}

\newtheorem{definition}{Definition}
\newtheorem{corollary}{Corollary}

\newcommand{\vsim}{\mathrel{\scalebox{1}[1.5]{$\shortmid$}\mkern-3.1mu\raisebox{0.15ex}{$\sim$}}}
\ShortHeadings{Abstract Argumentation and the Rational Man}
{To appear in the Journal of Logic and Computation}
\begin{document}

\title{Abstract Argumentation and the Rational Man}
\author{\name Timotheus Kampik \email tkampik@cs.umu.se \\
       \name Juan Carlos Nieves \email jcnieves@cs.umu.se \\
       \addr Department of Computing Science, Umeå University \\ 90187 Umeå, Sweden}

\maketitle

    \begin{abstract}
        Abstract argumentation has emerged as a method for non-monotonic reasoning that has gained popularity in the symbolic artificial intelligence community.
        In the literature, the different approaches to abstract argumentation that were refined over the years are typically evaluated from a formal logics perspective; an analysis that is based on models of \emph{economically rational} decision-making does not exist.
        In this paper, we work towards addressing this issue by analyzing abstract argumentation from the perspective of the \emph{rational man} paradigm in microeconomic theory. To assess under which conditions abstract argumentation-based decision-making can be considered \emph{economically rational}, we derive \emph{reference independence} as a non-monotonic inference property from a formal model of economic rationality and create a new argumentation principle that ensures compliance with this property. We then compare the reference independence principle with other reasoning principles, in particular with cautious monotony and rational monotony. 
        We show that the argumentation semantics as proposed in Dung's seminal paper, as well as other semantics we evaluate -- with the exception of naive semantics and the SCC-recursive CF2 semantics -- violate the reference independence principle.
        Consequently, we investigate how structural properties of argumentation frameworks impact the reference independence principle, and identify \emph{cyclic expansions} (both even and odd cycles) as the root of the problem.
        Finally, we put reference independence into the context of preference-based argumentation and show that for this argumentation variant, which explicitly models preferences, reference independence cannot be ensured in a straight-forward manner.
    \end{abstract}

%\keywords{Abstract argumentation \and Argumentation expansions \and Argumentation principles \and Economic Rationality \and Decision-making.}
%
%
%

\section{Introduction}
In recent decades, a large body of research emerged on argumentation as a method to instill intelligence, \emph{i.e.}, reasoning capabilities, into computing systems.
In this context, a popular theory for formal argumentation is so-called \emph{abstract argumentation}, as initially developed by Dung~\cite{dung1995acceptability}, which is frequently used as the foundation of research on formal argumentation and hence spawned a variety of frameworks that extend the initial work~\cite{baroni2018abstract}.
A key aspect of abstract argumentation research is the definition and evaluation of different \emph{argumentation semantics}.
An argumentation semantics is a decision function that specifies how an argumentation framework (a graph of arguments) should be resolved, \emph{i.e.}, which arguments can be considered valid conclusions and which cannot.
A variety of argumentation semantics, as well as a range of advanced frameworks that aim to augment Dung's basic definition of an argumentation framework exist; \emph{i.e.}, the way how an argumentation framework is to be resolved depends on the application scenario and the intended meaning of arguments and attacks.

Determining the arguments of an argumentation framework that can be considered valid conclusions is a decision process.
From the perspective of economic theory, a classical model of decision-making is the notion of the \emph{rational man} (see, \emph{e.g.}, Rubinstein~\cite{rubinstein1998modeling}), whose decisions imply clear and consistent preferences.
While the rational man as a sufficiently precise formal model of human decision-making has been debunked by a body of empirical research~\cite{kahneman2003maps}, it can still be considered relevant as a cornerstone of economic theory, for example for \emph{prescriptive} modeling of an ideal agent that strictly optimizes according to clear preferences when determining the decision outcome.
Aligning abstract argumentation with the rational man paradigm requires, as this paper shows, the introduction of a new \emph{argumentation principle} -- an axiomatic requirement to an argumentation semantics~\cite{van2017principle} -- that existing argumentation semantics typically do not satisfy.
In this paper, we use a novel\footnote{While Caminada introduces \emph{rationality postulates} for abstract argumentation semantics~\cite{caminada2017rationality}, his work focuses on rationality from a logics perspective, \emph{i.e.}, on properties of argumentation semantics with regards to specific argumentation frameworks, without considering the \emph{expansions} of any argumentation framework. Also, let us note that an application of the \emph{rational man's argumentation principles} we establish in this work to Amgoud's and Cayrol's preference-based argumentation is provided in Section~\ref{related}.}
formal approach to explore the intersection of abstract argumentation  as a non-monotonic reasoning method and (bounded) economic rationality and show that abstract argumentation can lead to economically not rational decision outcomes.

Let us provide an example that illustrates the problem this paper focuses on.
An agent ${\cal A}_1$ uses abstract argumentation for deciding on a subset $A^{*}$ of a set $A = \{a, b\}$ (\emph{e.g.,} to determine which actions in a set $Acts \subseteq A$ should be executed and/or which epistemic propositions $Ep \subseteq A$ should be considered true).
For this, the agent constructs the \emph{attack} $(a, b)$, \emph{i.e.}, $a$ attacks $b$.
Given the attack relation $\{(a, b)\}$, the arguments $\{a, b\}$ are typically resolved as $\{a\}$. Let us refer to the constructed argumentation framework $(AR, Attacks)= (\{a, b\}, \{(a, b)\})$ as $AF$.
According to the rational economic man paradigm, the agent's decision of $\{a\}$ implies that the agent prefers accepting $\{a\}$ over all other possible options; \emph{i.e.}, the agent constructs the following (implicit) preference relation: $\{a\} \succeq \{a, b\},  \{a\} \succeq \{b\}, \{a\} \succeq \{ \}$.
Now, let us assume that a second agent ${\cal A}_2$ consults ${\cal A}_1$ to consider a third item $c$, \emph{i.e.}, the set of which the agent determines a subset of valid conclusions is now $\{a, b, c\}$.
${\cal A}_2$ recommends to ${\cal A}_1$ to add the attacks $(b, c)$ and $(c, a)$.
Note that no attacks are added between the previously existing arguments $a$ and $b$: $AF'$ is a \emph{normal expansion} of $AF$~\cite{baumann2010expanding}.
Given the attack relation $\{(a, b), (b, c), (c, a)\}$, many argumentation semantics resolve the arguments $\{a, b, c\}$ as $\{ \}$.
Let us refer to the constructed argumentation framework $(\{a, b, c\}, \{(a, b), (b, c), (c, a)\})$ as $AF'$.
If ${\cal A}_1$ adopts the recommended attack relation, this implies, according to the rational man paradigm, that it now prefers accepting  $\{\}$ over all elements in $2^{\{a, b, c\}}$, which in turn implies the preference $\{\} \succeq \{ a \}$.
This is inconsistent with the preference $\{ a \} \succeq \{ \}$ that ${\cal A}_1$ has given the set $\{a, b\}$ (see Figure~\ref{fig:examples}).
In this case, ${\cal A}_1$ can be considered to behave \emph{economically irrationally}: it has changed its preference order on the powerset of $\{a, b\}$, although \emph{i)} no attacks have been added between $a$ and $b$ and \emph{ii)} all arguments that have been added to the initial framework $AF$ are not considered valid conclusions and the attacks between existing arguments have not changed; \emph{i.e.}, the \emph{ceteris paribus} assumption (\textquote{all else unchanged}) holds true.
Figure~\ref{fig:examples} depicts the argumentation graphs of $AF$ and $AF'$.
\begin{figure}
    \subfloat[$AF$.]{
        \begin{tikzpicture}[
            noanode/.style={dashed, circle, draw=black!60, minimum size=10mm, font=\bfseries},
            anode/.style={circle, fill=lightgray, draw=black!60, minimum size=10mm, font=\bfseries},
            ]
            \node[anode]    (A)    at(0,4)  {a};
            \node[noanode]    (B)    at(0,2)  {b};
            \path [->, line width=1mm]  (A) edge node[left] {} (B);
        \end{tikzpicture}
    }
    \hspace{50pt}
    \centering
    \subfloat[$AF'$]{
        \begin{tikzpicture}[
            noanode/.style={dashed, circle, draw=black!60, minimum size=10mm, font=\bfseries},
            anode/.style={circle, fill=lightgray, draw=black!60, minimum size=10mm, font=\bfseries},
            ]
            \node[noanode]    (A)    at(0,4)  {a};
            \node[noanode]    (B)    at(0,2)  {b};
            \node[noanode]    (C)    at(2,2)  {c};
            \path [->, line width=1mm]  (A) edge node[left] {} (B);
            \path [->, line width=1mm]  (B) edge node[left] {} (C);
            \path [->, line width=1mm]  (C) edge node[left] {} (A);
        \end{tikzpicture}
    }
\caption{Inconsistent preferences: Given an argumentation semantics $\sigma$, such that $\sigma(AF) = \{\{a\}\}$ and $\sigma(AF') = \{\{\}\}$, $AF$ implies that accepting $\{ a \}$ is preferred over accepting $\{\}$, while $AF'$ implies accepting $\{\}$ is preferred over accepting $\{ a \}$.}
\label{fig:examples}
\end{figure}
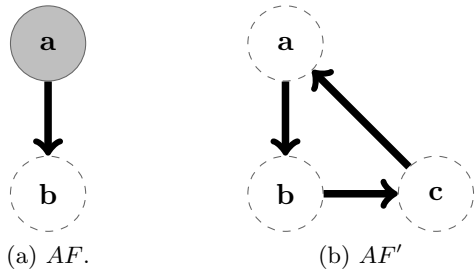
Let us note that we neither distinguish between different types of arguments (for example: utilitarian and epistemic arguments), nor do we assume that abstract argumentation frameworks model an exchange of natural language arguments.
In any case, the phenomenon of reference independence can be considered potentially problematic.
Given an agent that makes decisions on which utilitarian arguments in an argumentation framework should be considered valid conclusions, the analogy to economic theory and the decision-maker who has to choose a subset of a set of decision options is straight-forward: in both cases a utilitarian preference order is established.
But also given an agent that makes decisions on which epistemic arguments should be considered as true and false, it can be problematic if an agent changes its truth assessment of existing propositional atoms if the new atoms it considers are all disregarded\footnote{Note that in this context, we assume the agent wants to \textquote{commit} to a set of arguments it considers a valid conclusion. This is aligned with the \emph{clear preference} principle of economic decision-making (see: Rubinstein\cite[p.~7 et sqq.]{rubinstein1998modeling}).}, and the relations between existing atoms remain unchanged.
For example, given an agent has a knowledge base with the propositional atoms $a$ and $b$, and -- based on a binary relation between $a$ and $b$ -- considers $b$ as true and $a$ as false, it does not make sense that the addition of a new atom $c$ that the agent considers false changes the assessment of $b$ to false or of $a$ to true, as long as the relation between $a$ and $b$ remains unchanged.

In economics, the phenomenon that the presence of additional decision options changes existing preferences -- in this case $\{ a \} \succeq \{ \}$ to $\{ \} \succeq \{ a \}$ -- is called \emph{reference dependence}.
In this paper, we use the notion of preferences analogously, but with a broader intended meaning. An agent's preference over two sets $A \succeq B$ means that the agent considers the acceptance of the set $B$ preferable over the acceptance of the set $A$; this can imply the following, for example:
\begin{itemize}
    \item The agent prefers accepting a set of epistemic arguments $B$ as true over accepting a set of epistemic arguments $A$ as true.
    \item The agent prefers executing a set of actions $B$ over executing a set of actions $A$.
    \item The agent prefers executing a set of actions $B' \subseteq B$ and accepting a set of epistemic arguments $B'' = B \setminus B'$ as true over executing a set of actions $A' \subseteq A$ and considering a set of epistemic arguments $A'' = A \setminus A'$ as true. ($A$ and $B$ are sets of propositional atoms, and the agent interprets each propositional atom either as an action or as an epistemic argument.)
\end{itemize}
We show in this paper that changes
 in preferences can, given most argumentation semantics, be achieved without adding direct attacks between the arguments in an initial argumentation framework $AF$, \emph{i.e.}, by \emph{normally expanding} $AF$.
 
This means that in our scenario, ${\cal A}_2$ can attempt to deceive ${\cal A}_1$ into changing its preferences without noticing.
From the perspective of economic rationality, the existence of the \textquote{invalid} argument $c$ should not change the preferences ${\cal A}_1$ has already established given the arguments $\{a, b\}$.
Indeed, such a change of preferences can be problematic in practice.
For example, when an intelligent system needs to determine a set of actions, deciding $\{ a \}$ are valid conclusions given $\{a, b\}$, while deciding no arguments are valid conclusions ($\{ \}$) given $\{a, b, c\}$ does typically not make sense, given all other things remain the same (the so-called \emph{ceteris paribus} assumption in economic theory).

Note that we do not distinguish between arguments that are \textquote{out} and \textquote{undecided} (in contrast to many formal argumentation approaches, see, \emph{e.g.} Wu and Caminada for \emph{three-valued labeling}~\cite{wu2010labelling}).
A binary distinction of arguments that are either element of an extension or not is better aligned with the \emph{clear preferences} property that is inherent in the rational economic man paradigm (see Definition~\ref{clear-pref}) and that allows us to infer a clear preference relation from an argumentation framework, given a specific extension.
While we concede that a variant of \emph{boundedly} economically rational argumentation that considers the \textquote{undecided} status of arguments can be relevant future work, we consider this out of scope of this paper, analogously to formal argumentation approaches that qualify or quantify the uncertainty of arguments in other ways.
Let us highlight that with this approach, we are well-aligned with the problem in microeconomic theory to have a model that \textquote{include[s] a description of the \emph{resolution of all uncertainties} that influence [a] decision maker's interests}~\cite[p.~42]{rubinstein1998modeling}\footnote{Emphasis added by the authors}, as well as with the purely extension-based approach many other argumentation principles take (see Van der Torre and Vesic~\cite{van2017principle} for a survey).
Let us also note that we do not claim that the consistency of preference relations given an argumentation framework and its normal expansion is a principle that must be enforced given any application scenario.
Similarly to the general notion of economically rational decision-making, the practical usefulness of the principle may be use case-dependent. In this regard, this paper can be considered a first bridge between abstract argumentation and economically rational decision-making that can serve as a point of departure for more research, for example on argumentation-based models that can be of relevance to the economics community.

To analyze the problem of \emph{reference dependence} in abstract argumentation, this paper provides the following contributions:
\begin{enumerate}
    \item It characterizes the concepts of reference independence, cautious monotony, and rational monotony in the context of argumentation semantics and argumentation framework expansions, introducing \emph{strong} and \emph{weak} notions of each of the three concepts.
    \item It analyzes the relation between reference independence and monotony, \emph{cautious} \\ monotony, as well as \emph{rational} monotony. It proves that for semantics whose extensions are maximal conflict-free sets or maximal admissible sets (w.r.t. set inclusion), reference independence does not imply monotony but monotony implies reference independence. Reference independence does not imply cautious and rational monotony, and strong rational monotony does not imply strong reference independence, but weak rational monotony implies weak reference independence. However, the non-naive semantics that are evaluated in this paper all violate weak rational monotony, which indicates that weak rational monotony may be too strict to be useful as an argumentation principle. The paper also analyzes the relation between weak reference independence and directionality and SCC-recursiveness, and shows that weak reference independence neither implies directionality nor SCC-recursiveness (and vice versa).
    \item It proves that most well-established argumentation semantics cannot guarantee reference independence for normal argumentation framework expansions. A notably exception are CF2 semantics, which are \emph{weakly reference independent}.
    \item It proves that argumentation that is based on normal, non-cyclic expansions that also do not \textquote{add/change} attack sequences that originate from cycles -- which we refer to as \emph{rational man's expansions} -- guarantees reference independence for many argumentation semantics.
    \item It puts \emph{reference independence} and \emph{rational man's expansions} in the context of preference-based argumentation, an argumentation approach that explicitly models agent preferences. This shows that reference dependence is also a problem for preference-based argumentation, which can, however, in this case not be solved with the \emph{rational man's expansion} that this paper introduces.
    \item It introduces potential application examples for the presented theoretical results. An implementation of a dialogue reasoning engine that makes use of the theoretical results is presented in a short paper~\cite{kampik2020diarg}.
\end{enumerate}
The rest of this paper is organized as follows.
Section~\ref{ai-econ} grounds this paper in the context of research traditions at the intersection of economics and artificial intelligence, whereas Section~\ref{background} provides an overview of abstract argumentation frameworks, semantics and principles, as well as the relevant definitions of formal models of economic rationality.
Then, Section~\ref{inference} defines a set of non-monotonic reasoning properties in the context of abstract argumentation, before Section~\ref{preferences} presents argumentation frameworks in the context of decision functions.
Section~\ref{paradigms} establishes the argumentation principle the rational man paradigm requires. Hereby, the new rational man's \emph{reference independence} principle is defined, based on which Section~\ref{rational-semantics} shows that typically, with the notable exception of CF2 semantics (naive semantics are another exception), argumentation semantics do not guarantee reference independent decision-making.
To still allow for abstract argumentation in accordance with the rational man paradigm, a new argumentation framework \emph{expansion} that provides a naive approach for guaranteeing economic rationality by (severely) limiting the allowed expansions to normal, non-cyclic expansions that also do not \textquote{add} new attacks that originate from cycles, is defined in Section~\ref{rational-expansion}.
Subsequently, Section~\ref{dialoges} provides examples of how the established concepts can be applied in different scenarios.
Finally, Section~\ref{related} shows that preference-based argumentation cannot guarantee reference independent decision-making, either, before Section~\ref{conclusion} concludes the paper by highlighting relevant future work, in particular the importance of developing \textquote{loop-busting} approaches to allow for reference independent argumentation.

\section{Economic Rationality and Artificial Intelligence}\label{ai-econ}
Artificial intelligence research frequently draws from concepts that have first been established in economic theory.
For example, the notions of utility functions and preferences that are central to many models and algorithms of autonomous agents and multi-agent systems stem from ideas of the philosophers and economists Jeremy Bentham and John Stuart Mill~\cite{sen_1991}.
Von Neumann's and Morgenstern's ground-breaking game theoretical work \emph{Theory of Games and Economic Behavior}~\cite{von2007theory} has influenced generations of both economics and artificial intelligence researchers.
More recently, the works of Daniel Kahneman and Richard Thaler, who both have received the Nobel Memorial Prize in Economic Sciences for their research on behavioral economics, have inspired new work on and discussions about ethics and responsibility in artificial intelligence~\cite{Dignum:2018:EDN:3278721.3278745}.

A key concept at the intersection of economics and artificial intelligence is the notion of economic rationality.
In the same way that a rational economic decision-maker  (traditionally a human agent) is ideally expected to act according to clear and consistent preferences~\cite[p.~7 et sqq.]{rubinstein1998modeling}, artificially intelligent agents should \textquote{maximize [their] performance measure} as stated in the definition of a rational agent by Russel and Norvig~\cite[p.~37]{russell2016artificial}.
The concept of economical rationality provides a formal foundation for many ground-breaking works in economics, such as the Nash equilibrium~\cite{Nash48}, and Tversky's and Kahneman's empirical work on the limits of human agents to act rationally in the economic sense~\cite{kahneman2003maps}.

An important property in the context of economic rationality is \emph{reference independence}:
a decision-maker's preference order on a set of items $S$ should not be affected by the presence, or absence, of additional items $T$.
%Reference independence is not necessarily a property of human decision making.
\emph{Reference dependence}, the negation of reference independence as established as a theory by Tversky, Slovic, and Kahneman~\cite{10.2307/2006743} and empirically validated as a phenomenon in human decision-making by, \emph{e.g.}, Bateman \emph{et al.}~\cite{bateman1997test}, stipulates that an agent might change their initial preference $a \succeq b$ out of the set of possible decision outomes $A$, with $a, b \in A$ to $b \succeq a$, depending on \emph{reference points} that are either added to the set of decision options itself or provided as additional context, but that do not impact the value or quality of either $a$ or $b$.
As a real-world example, let us summarize a study by Doyle \emph{et al.}~\cite{doyle1999robustness}.
In a grocery store, two brands of baked beans $x$ and $y$ are sold.
$x$ and $y$ are sold in cans of the same size.
Although $x$ is cheaper than $y$, only 19\% of bean sales are of brand $x$.
By adding a new option $x'$ -- a smaller can of beans by brand $x$ that is sold at the same prize as the original can $x$ -- the share of $x$ increases to 33\% of total sales (while sales of $x'$ are negligible).

From the body of research we summarize above it is clear that human decision-making is often not reference independent.
However, reference independence can potentially be considered a \emph{desired} property of the decision process of artificially intelligent agents, as we have demonstrated above, in the example that is visualized by Figure~\ref{fig:examples}.
Yet, when evaluating artificially intelligence systems, economic rationality in general, and reference dependence in particular, is typically not considered.
Instead, research results are commonly evaluated based on traditional performance criteria like accuracy and computational complexity.
In this context, a group of well-established artificial intelligence researchers advocates for a paradigm shift in the evaluation of \textquote{machine behavior}~\cite{Rahwan2019MachineB} that can be considered a continuation of the cross-disciplinary work of Herbert Simon, who initially coined the term \emph{bounded rationality}~\cite{10.2307/1884852}.
In the spirit of Simon, but also of Kahneman, who uses empirical methods to systematically specify the \emph{boundaries} of human rationality, this paper presents an application of the study of economic rationality to the area of formal argumentation.
As we study -- in contrast to Kahneman -- formally specified frameworks and not humans, we are able to use formal methods in our evaluations.
Let us highlight that the interface between formal models of economic rationality and formal argumentation that this paper establishes can potentially also help to apply formal argumentation as a method to solve problems in microeconomic theory.

\section{Theoretical Background}\label{background}
This section introduces the theoretical foundations upon which this paper builds:
the \emph{rational man} paradigm in economic theory and abstract argumentation.

\subsection{The Rational Economic Man Paradigm}
As a prerequisite, we introduce a definition of a partially ordered set~\cite{DavPri02}.
\begin{definition}
    Let ${\cal Q}$ be a set.
    An order (or partial order) on ${\cal Q}$ is a binary relation $\succeq$ on ${\cal Q}$ such that, for all $x,y,z
    \in {\cal Q}$:
    \begin{enumerate}
      \item $x \succeq x$ (reflexivity);
      \item $x \succeq y$ and $y \succeq x$ imply $x = y$ (antisymmetry);
      \item $x \succeq y$ and $y \succeq z$ imply $x \succeq z$ (transitivity).
    \end{enumerate}
\end{definition}
\noindent We refer to a set ${\cal Q}$ that is equipped with an order relation $\succeq$ as an \emph{ordered set} (or partially ordered set).

In economic theory, the model of a rational decision maker -- the \emph{rational economic man} paradigm\footnote{In this paper, we often use the shortened term \emph{rational man}.} -- can be described as follows (based on a definition by Rubinstein~\cite[p.~7 et sqq.]{rubinstein1998modeling})\footnote{However, in contrast to the model used by Rubinstein, we use a decision model that allows for choosing any subset $A^{*} \subseteq A$ of the set of decision options $A$ instead of exactly one option ($x^{*} \in A$).}.
\begin{definition}[Rational Economic Man]\label{clear-pref}
Given a set of decision options $A$, the rational economic man's decision $A^{*} \subseteq A$ implies the preference order (partial order) $\succeq$, such that $\forall A' \in 2^A, A^{*} \succeq A'$; 
\emph{i.e.}, $A^{*}$ is the rational man's preferred option when compared to all possible alternatives.
\end{definition}
For example, given the set $\{coffee, snack\}$, the rational man's preferred option could be $\{coffee, snack\}$, given he chooses this element from the set $2^{\{coffee, snack\}}$.
The decision implies the preference relation $\forall A' \in \{ \{ coffee \}, \{ snack \}, \{ \} \}, \{coffee, snack\} \succeq A'$, as depicted in Figure~\ref{fig:lattice-prefs} (note that technically also $\{coffee, snack\} \succeq \{coffee, snack\}$).
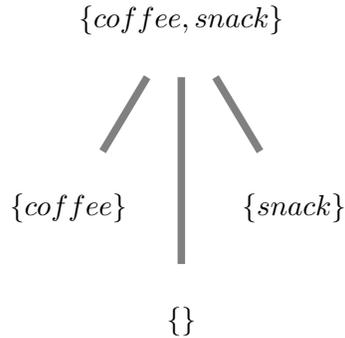
\begin{figure}
    \centering
    \pgfarrowsdeclare{arcs}{arcs}{...}
        {
        \pgfsetdash{}{0pt} % do not dash
        \pgfsetroundjoin   % fix join
        \pgfsetroundcap    % fix cap
        \pgfpathmoveto{\pgfpoint{-10pt}{10pt}}
        \pgfpatharc{180}{270}{10pt}
        \pgfpatharc{90}{180}{10pt}
        \pgfusepathqstroke
        }
        \begin{tikzpicture}[
            noanode/.style={dashed, minimum size=15mm, font=\bfseries},
            anode/.style={ minimum size=15mm, font=\bfseries},
            ]
            \node[anode]    (All)    at(1.5,4)  {$\{coffee, snack\}$};
            \node[noanode]    (Coffee)    at(0,1.5)  {$\{coffee\}$};
            \node[noanode]    (Snack)    at(3,1.5)  {$\{snack\}$};
            \node[noanode]    (None)    at(1.5,0)  {$\{\}$};
            \path [line width=1mm, gray]  (All) edge node[left] {} (None);
            \path [line width=1mm, gray]  (All) edge node[left] {} (Coffee);
            \path [line width=1mm, gray]  (All) edge node[left] {} (Snack);
        \end{tikzpicture}
        \caption{Example: Hasse diagram of a set, in which the following \emph{preference relation} of the rational man holds: $\forall A' \in \{ \{ coffee \}, \{ snack \}, \{ \} \}, \{coffee, snack\} \succeq A'$.}
        \label{fig:lattice-prefs}
\end{figure}

In the context of this paper, an important property of the rational man's decision process is \emph{reference independence}.
\begin{definition}[Reference Independence]\label{ref-in}
Given two sets of options $A$ and $A'$, such that $A \subseteq A'$  and the rational man's decisions $A^{*} \subseteq A$ and $A'^{*} \subseteq A'$, if $A'^{*} \subseteq A$, then $A'^{*} = A^{*}$.
\end{definition}
Colloquially expressed, the rational man's decision outcome is not affected if options that the agent does not prefer over the \textquote{previously} existing options are added to the set of potential options.
\begin{example}
    If a decision-maker, given the set $\{tea, cookie\}$ chooses $\{tea\}$ and given the set $\{tea, cookie, \\coffee\}$ chooses $\{tea, cookie\}$ the decision is not rational; the presence of the \textquote{new} \emph{irrelevant alternatives} $\{coffee\}$, $\{tea, coffee\}$, $\{cookie, coffee\}$, and $\{tea, cookie, coffee\}$ causes the preference $\{tea\} \succeq \{tea, cookie\}$ to change to $\{tea, cookie\} \succeq \{tea\}$.
\end{example}
The \emph{reference independence} property implies that the expansion of a set $A$ to $A' \supset A$ does not add new knowledge about the elements in $A$ that affects the decision-maker's preferences over any two elements in $2^{A}$.
Yet, in reality, human decision-makers can use the addition of options to a set as a way to infer new information about the quality of the original options, as for example shown by Doyle \emph{et al.}~\cite{doyle1999robustness}.
Still, in the context of computational argumentation-based decision-making, it can be useful to be able to distinguish between changes in options and outcomes that satisfy the \emph{reference independence} property and those that do not.
For example, in a consultation scenario, a consulted agent may want to check if the consulting agent is providing a proposal that is \emph{seemingly} barely adding new options, but covertly also altering the preference order on existing options.
Such a scenario is presented in greater detail in Section~\ref{dialoges}.

\subsection{Abstract Argumentation}\label{argumentation}
To allow for a concise overview of the relevant foundations of abstract argumentation, we introduce a formal definition of the basic structure of an Argumentation Framework (AF), provide the definitions of well-established argumentation semantics, and explain the notions of argumentation expansions and principles.
\begin{definition}[Argumentation Framework~\cite{dung1995acceptability}]\label{arg-f}
An argumentation framework is a pair $AF = (AR, Attacks)$, where $AR$ is a finite set of arguments, and $Attacks$ is a binary relation on $AR$, \emph{i.e.}, $Attacks \subseteq AR \times AR$.
\end{definition}
Note that we diverge from the original definition by Dung in that we constrain an argumentation framework to a finite set of arguments.
In the context of an argumentation framework $AF = (AR, Attacks)$, \emph{a attacks b} means that for $a, b \in AR$ it holds true that $(a, b) \in Attacks$. Similarly, any set \emph{$S \subseteq AR$ attacks $b$} if $b$ is attacked by an argument in $S$. We say that an argument $c \in AR$ is \emph{unattacked} iff $\nexists d \in AR$, such that $d$ attacks $c$ and we say that an argument $g \in AR$ is \emph{self-attacking} iff $g$ attacks $g$.
\\

\noindent To select coherent sets of arguments from an argumentation framework, Dung introduces the notion of \emph{conflict-free} and \emph{admissible} sets; based on the notion of admissible sets, Baroni \emph{et al.} define \emph{strongly admissible} sets.
\begin{definition}[Conflict-free, Acceptable, and (Strongly) Admissible Arguments~\cite{dung1995acceptability,BARONI2007675}]
Let $AF = (AR, Attacks)$ be an argumentation framework.
\begin{itemize}
    \item A set $S \subseteq AR$ is \emph{conflict-free} iff there are no arguments $a, b$ in $S$ such that $a$ attacks $b$.
    \item An  argument $a \in AR$ is \emph{acceptable} w.r.t. an argument set $S \subseteq AR$ iff for each argument $b \in AR$ it holds true that if $b$ attacks $a$, then $b$ is attacked by $S$.
    \item A conflict-free set $S \subseteq AR$ is \emph{admissible} iff each argument in $S$ is acceptable with regards to $S$.
    \item A set $S \subseteq AR$ is strongly admissible iff $\forall c \in S$, there exists a set $S_{sub} \subseteq S \setminus \{c\}$, such that $c$ is acceptable with regards to $S_{sub}$ and $S_{sub}$ is strongly admissible.
\end{itemize}
\end{definition}
Based on these concepts, Dung and others define different \emph{semantics} for resolving argumentation frameworks.
\begin{definition}[Argumentation Semantics~\cite{dung1995acceptability}]\label{semantics-def} 
An argumentation semantics $\sigma$ is a function that takes an argumentation framework $AF = (AR, Attacks)$ and returns a set of sets of arguments denoted by $\sigma(AF)$.
\begin{align*}
    \sigma: {\cal AF} \rightarrow 2^{2^{AR}},
\end{align*}
with ${\cal AF}$ denoting the set of all possible argumentation frameworks.
Each set of $\sigma(AF)$ is called a $\sigma$-extension.
\end{definition}
Some argumentation semantics make use of the notion of \emph{range}.
\begin{definition}[Range~\cite{verheij1996two}]
    Let $AF = (AR, Attacks)$ be an argumentation framework and let $S \subseteq AR$.
    We define the \emph{range} of $S$ as $S \cup S^+$,
    where $S^+ =  \{b|b \in AR,  a \in S, (a, b) \in Attacks\}$.
\end{definition}
Below, we provide an overview of some of the semantics that are commonly applied and analyzed by the formal argumentation community.
\begin{definition}[Argumentation Extensions and Semantics~\cite{dung1995acceptability,DungMT07,caminada2012semi,caminada2007comparing,verheij1996two,baroni2018abstract}\label{arg-extensions}]
Given an argumentation framework $AF = (AR, Attacks)$, an \emph{admissible} set $S \subseteq AR$ is a:  
    \begin{itemize}
        \item \textbf{Stable extension} of $AF$ iff $S$ attacks each argument that does not belong to $S$. Stable semantics $\sigma_{stable}(AF)$ returns all stable extensions of $AF$.
        \item \textbf{Preferred extension} of $AF$ iff $S$ is maximal (w.r.t. set inclusion) among admissible sets. Preferred semantics $\sigma_{preferred}(AF)$ returns all preferred extensions of $AF$.
        \item \textbf{Complete extension} of $AF$ iff each argument that is acceptable w.r.t. $S$ belongs to $S$. Complete semantics $\sigma_{complete}(AF)$ returns all complete extensions of $AF$.
        \item \textbf{Grounded extension} of $AF$ iff $S$ is the minimal (w.r.t. set inclusion) complete extension.
        Grounded semantics $\sigma_{grounded}(AF)$ denotes the set $\{S\}$, such that $S$ is the grounded extension of $AF$.
        \item \textbf{Ideal extension} of $AF$ iff $S$ is the maximal (w.r.t. set inclusion) ideal set, where a set $S' \subseteq AR$ is \emph{ideal} iff $S'$ is a subset of every preferred extension of $AF$. Ideal semantics $\sigma_{ideal}(AF)$ denotes the set $\{S\}$, such that $S$ is the ideal extension of $AF$.
        \item \textbf{Semi-stable extension} of $AF$ iff $S$ is a complete extension where $S \cup S^+$ is maximal (w.r.t. set inclusion) among complete extensions. Semi-stable semantics $\sigma_{semi-stable}(AF)$ returns all semi-stable extensions of $AF$.
        \item \textbf{Eager extension} of $AF$ iff $S$ is the maximal (w.r.t. set inclusion) admissible set which is included in every semi-stable extension. Eager semantics $\sigma_{eager}(AF)$ denotes the set $\{S\}$, such that $S$ is the eager extension of $AF$.
    \end{itemize}
    Given an argumentation framework $AF = (AR, Attacks)$, a set $S \subseteq AR$ is a:
    \begin{itemize}
        \item \textbf{Naive extension} iff $S$ is a maximal (w.r.t. set inclusion) conflict-free set. Naive semantics $\sigma_{naive}(AF)$ returns all naive extensions of $AF$.
        \item \textbf{Stage extension} of $AF$, iff $S$ is conflict-free and $S \cup S^+$ is maximal (w.r.t. set inclusion) among all conflict-free sets, \emph{i.e.}, there is no conflict-free set $S' \subseteq AR$, such that $(S' \cup S'^{+}) \supset (S \cup S^{+})$. Stage semantics $\sigma_{stage}(AF)$ returns all stage extensions of $AF$.
    \end{itemize}
    \end{definition}
    To introduce the definitions of \emph{stage2} and \emph{CF2} semantics, let us first define the notion of \emph{attack sequences} and \emph{strongly connected components}.
    \begin{definition}[Attack Sequences in Argumentation Frameworks]~\label{attack-seq}
    An attack sequence $V_{a_1, a_n}$ in an argumentation framework $AF = (AR, Attacks)$ is an argument sequence $\langle a_1, ... , a_n \rangle$ such that $(a_{i}, a_{i+1}) \in Attacks$, $a_i \in AR$, $a_i \neq a_j, 1 \leq j \le i$, $1 \leq i \leq n-1$. ${\cal V}(AF)$ denotes all attack sequences of AF. %$\forall: a, j \in V: j \neq a$
    Given an attack sequence $V_{a_1, a_n}$, $AR^{V}_{a_1, a_n}$ denotes all arguments that occur in $V$.
    An argument $b$ is reachable from an argument $a$ iff there exists an attack sequence $V_{a_1, a_n} \in {\cal V}(AF)$, such that $a_1 = a$ and $a_n = b$.
    \end{definition}
    Strongly connected components (SCCS) \textquote{provide a unique partition of a directed graph into disjoint parts where all nodes are mutually reachable}~\cite{baroni2018abstract}.
    \begin{definition}[Strongly Connected Components]
    Let $AF = (AR, Attacks)$ be an argumentation framework. $S \subseteq AR$ is a strongly connected component of $AF$, iff $\forall a, b \in S$, $a$ is reachable from $b$ and $b$ is reachable from $a$ and $\nexists c \in AR \setminus S$, such that $a$ is reachable from $c$ and $c$ is reachable from $a$.
    Let us denote the strongly connected components of $AF$ as $SCCS_{AF}$.
    \end{definition}    
    For example, the graph of the argumentation framework $(\{a, b, c\}, \{(a, b), (b, a), (b, c)\})$ has the SCCS $\{a, b\}$ and $\{ c \}$.
    
    Let us also define the concept of a \emph{restriction} of an argumentation framework to a subset of its arguments.
    \begin{definition}[Argumentation Framework Restriction~\cite{baroni2018abstract}]
        Let $AF = (AR, Attacks)$ be an argumentation framework and let $S \subseteq AR$ be a set of arguments. The restriction of $AF$ to $S$, denoted by $AF\downarrow_{S}$, is the argumentation framework $(AR \cap S, Attacks \cap (S \times S))$.
    \end{definition}
    Let us introduce some additional preliminaries.
\begin{definition}[D, P, U, and UP~\cite{BARONI2005162}]
    Let $AF = (AR, Attacks)$ be an argumentation framework, let $E \subseteq AR$, and let $S$ be a strongly connected component of $AF$ ($S \in SCCS(AF)$).
    Let us define $S_{out}^{-} = \{a | a \in AR$, $a \not \in S$ and $a$ attacks $S\}$.
    Given an argument $a \in AR$, let us define $a^{-} = \{b | b \in AR$ and $a$ attacks $b\}$.
    We define:
    \begin{itemize}
        \item $D_{AF}(S, E) = \{a | a \in S, (E \cap S_{out}^{-})$ attacks $a\}$;
        \item $P_{AF}(S, E) = \{a | a \in S, (E \cap S_{out}^{-})$ does not attack a and $\exists b \in (S_{out}^{-} \cap a^{-})$ such that $E$ does not attack $b\}$;
        \item $U_{AF}(S, E) = S \setminus (D_{AF}(S, E) \cup P_{AF}(S, E))$;
        \item $UP_{AF}(S, E) = U_{AF}(S, E) \cup P_{AF}(S, E)$.
    \end{itemize}
\end{definition}
    Now, we have the necessary concepts established that allow us to define SCC-recursive extensions.

    \begin{definition}[CF2 Extensions~\cite{BARONI2005162}]\label{cf-2}
        Let $AF = (AR, Attacks)$ be an argumentation framework.
        A set of arguments $S \subseteq AR$ is an extension of CF2 semantics iff:
    \begin{itemize}
        \item $S$ is a naive extension of $AF$, if $|SCCS_{AF}| = 1$;
        \item $\forall Args \in SCSS_{AF}, (S \cap Args) \in \sigma_{CF2}(AF\downarrow_{UP_{AF}(Args,S)})$, otherwise.
    \end{itemize}
    CF2 semantics $\sigma_{CF2}(AF)$ returns all CF2 extensions of $AF$.
    \end{definition}

    \begin{definition}[Stage2 Extensions~\cite{10.1093/logcom/exu006}]\label{stage-2}
        Let $AF = (AR, Attacks)$ be an argumentation framework. A set $S \subseteq AR$ is an extensions of stage2 semantics, iff:
        \begin{itemize}
            \item $S$ is a stage extension of $AF$, if $|SCCS_{AF}| = 1$;
            \item $\forall Args \in SCSS_{AF}, (S \cap Args) \in \sigma_{stage2}(AF\downarrow_{UP_{AF}(Args,S)})$, otherwise.
        \end{itemize}
        Stage2 semantics $\sigma_{stage2}(AF)$ returns all stage2 extensions of $AF$.
    \end{definition}
In words, CF2 and stage2 semantics decompose an argumentation framework into SCCS and recursively resolve the framework component by component, starting with the SCCS that are not attacked by any other SCC.
Note that a detailed explanation of CF2 and stage2 semantics is beyond the scope of the paper.

Finally, we provide the established definitions for \emph{credulous} and \emph{skeptical} acceptance of arguments~\cite{10.1007/978-3-540-30227-8_39}\footnote{From a decision-making perspective, \emph{credulous and skeptical acceptance} are often referred to as \emph{objective and subjective acceptance}~\cite{bench2003persuasion}; however, we use the terms \emph{credulous and skeptical} because they are arguably more prevalent in the community and because our work is relevant no matter the type of reasoning (making a decision about an action to take or deciding which beliefs in a set should be considered as true).
}.

\begin{definition}[Credulous and Skeptical Reasoning Modes]
    Given an argumentation framework $AF = (AR, Attacks)$ and an argumentation semantics $\sigma$ we say that:
    \begin{itemize}
        \item an argument $a \in AR$  is \textbf{credulously accepted} w.r.t. $\sigma$ iff it is contained in \emph{at least one} of the extensions of $\sigma(AF)$;
        \item an argument $a \in AR$  is \textbf{skeptically accepted} w.r.t. $\sigma$ iff it is contained in \emph{all} of the extensions of $\sigma(AF)$. The skeptical extension of an argumentation framework $AF$ w.r.t. $\sigma$ is the intersection of all extensions of $\sigma(AF)$ ($\sigma^{\cap}(AF) = \{\bigcap_{E \in \sigma(AF)} E \}$).
    \end{itemize}
\end{definition}
Note that to be consistent with the general function signature of argumentation semantics, we provide the aggregation as the intersection of sets as made by a semantics' skeptical reasoning mode wrapped into an additional set.

Given the variety of argumentation semantics that have been established throughout the years, it can be challenging to assess which semantics are suitable for a specific application.
Consequently, \emph{argumentation principles} have been defined; evaluating the satisfaction of one or multiple principles can guide the assessment of argumentation semantics in the context of a specific use case. 
For example, a principle can specify that an argumentation semantics should determine exactly one extension for every argumentation framework.
An overview of argumentation principles is provided by Van der Torre and Vesic~\cite{van2017principle}.

\subsection{Argumentation Expansions}
To describe a specific type of relation between argumentation frameworks, the notion of an argumentation framework \emph{expansion} has been introduced by Baumann and Brewka~\cite{baumann2010expanding}.
A concise introduction and overview of expansions is presented by Baumann and Woltran~\cite{8209878}.
The general concept of an argumentation framework expansion can be defined as follows.
\begin{definition}[Argumentation Framework Expansion~\cite{baumann2010expanding}] 
    An argumentation framework $AF' = (AR', Attacks')$ is an expansion of another argumentation framework $AF = (AR, Attacks)$ (denoted by $AF \preceq_E AF'$) iff $AR \subseteq AR'$ and $Attacks \subseteq Attacks'$.
\end{definition}
Several expansion types have been defined in the literature.
In the context of this paper, the notion of a \emph{normal} expansion is of relevance.
\begin{definition}[Normal Expansion~\cite{baumann2010expanding}] 
   An argumentation framework $AF' = (AR', Attacks')$ is a normal expansion of an argumentation framework $AF = (AR, Attacks)$ (denoted by $AF \preceq_N AF'$) iff $AF \preceq_E AF'$ and $\nexists (a, b) \in Attacks' \setminus Attacks$, such that $a \in AR \land b \in AR$.
\end{definition}
In words, a normal expansion of an argumentation framework adds additional arguments to the framework that can attack and be attacked by any other argument but neither removes existing arguments nor changes attacks (neither adds nor removes) between existing arguments.

\section{Properties of Non-monotonic Inference in Argumentation}
\label{inference}
Previous research exists that explores the relation of non-monotonic inference properties and formal argumentation methods.
In particular, \v{C}yras and Toni provide a series of proofs that show which semantics (among grounded, ideal, skeptically preferred, stable, preferred, and complete semantics) satisfy the \emph{cautions monotony} properties\footnote{\emph{Cautions monotony} is based on a property initially introduced by Gabbay as \emph{restricted monotony}~\cite{10.1007/978-3-642-82453-1_15}.} for assumption-based argumentation~\cite{10.1007/978-3-319-28460-6_6}.
However, \v{C}yras' and Toni's definition cannot be applied to abstract argumentation as it relies on specific characteristics of assumption-based argumentation.
We use the same notions of \emph{strong} and \emph{weak} monotony properties as introduced by \v{C}yras and Toni to characterize monotony in the context of abstract argumentation.
Let us note that we define the properties in the context of \emph{normal expansions} because changes to attacks between existing arguments will obviously cause problems for most semantics w.r.t. compliance with the properties.
As a preliminary, let us introduce the definition of universally defined semantics, in our case given finite argumentation semantics (see: Baumann~\cite{baumann2017nature}).

\begin{definition}[Universally Defined Semantics]
 An argumentation semantics $\sigma$ is universally defined iff for every argumentation framework $AF = (AR, Attacks)$ it holds true that $|\sigma(AF)| \geq 1$.
\end{definition}
Let us now define monotony as an argumentation principle.
\begin{definition}[Monotony in Argumentation Semantics]
Let $\sigma$ be an argumentation semantics.
$\sigma$ satisfies:
    \begin{itemize}
        \item \textbf{Strong monotony} iff $\sigma$ is universally defined and for every two argumentation frameworks $AF = (AR, Attacks)$ and $AF' = (AR', Attacks')$, such that $AF \preceq_N AF'$, $\forall E \in \sigma(AF), \forall E' \in \sigma(AF')$ it holds true that $E \subseteq E'$.
        \item \textbf{Weak monotony} iff for every two argumentation frameworks $AF = (AR, Attacks)$ and $AF' = (AR', Attacks')$, such that $AF \preceq_N AF'$, $\forall E \in \sigma(AF)$, it holds true that $\exists E' \in \sigma(AF')$, such that $E \subseteq E'$.
    \end{itemize}
\end{definition}
Now, let us define cautious monotony, based on a definition provided (\emph{e.g.}) by Schröder \emph{et al.}~\cite{10.5555/1860967.1861106}, \emph{i.e.}, $\textit{if } (a \vsim b) \textit{ and } (a \vsim c) \textit{ then }(a \land b \vsim c)$.
In words, the property can be described as \emph{if we infer $c$ from $a$, and we infer $b$ from $a$, then we infer $c$ from \textquote{$a$ and $b$}}.
In the context of abstract argumentation we can say that an argumentation semantics \textquote{infers sets of arguments as parts of extensions from argumentation frameworks}. Using an argumentation semantics to infer the sets of arguments $c$ and $b$ from an argumentation framework $a$ implies that $b$ is a subset of the arguments in argumentation framework $a$; \emph{i.e.}, a straight-forward adaptation of the cautious monotony principle is not useful in the case of abstract argumentation (this problem does not exist in assumption-based argumentation, see \v{C}yras and Toni~\cite{10.1007/978-3-319-28460-6_6}).
Instead we state that:
\begin{itemize}
    \item Given a semantics $\sigma$, if we infer a set of arguments $c$ from an argumentation framework $a$, and
    \item If normally expanding $a$ by the arguments and attacks $b'$ to an argumentation framework $b$ and arguments in $b'$ do not attack $c$,
    \item Then we can infer $c$ from $b$.
\end{itemize}

We can consequently define cautious monotony for argumentation semantics as follows.
\begin{definition}[Cautious Monotony in Argumentation Semantics]
    Let $\sigma$ be an argumentation semantics.
$\sigma$ satisfies:
    \begin{itemize}
        \item \textbf{Strong cautious monotony} iff $\sigma$ is universally defined and for every two argumentation frameworks $AF = (AR, Attacks)$ and $AF' = (AR', Attacks')$, such that $AF \preceq_N AF'$, $\forall E \in \sigma(AF)$ and $\forall E' \in \sigma(AF')$ it holds true that if $\{(a,b) \mid (a,b) \in Attacks', a \in AR' \setminus AR, b \in E \}=\emptyset$ then $E \subseteq E'$;
        \item \textbf{Weak cautious monotony}: iff for every two argumentation frameworks $AF = (AR, Attacks)$ and $AF' = (AR', Attacks')$, such that $AF \preceq_N AF'$ and $\forall E \in \sigma(AF)$, it holds true that if $\{(a,b) \mid (a,b) \in Attacks', a \in AR' \setminus AR, b \in E \}=\emptyset$ then $\exists E' \in \sigma(AF')$ such that $E \subseteq E'$.
    \end{itemize}
\end{definition}

A property that has so far not been used to analyze abstract argumentation approaches is \emph{rational monotony}, as for example provided by Benferhat \emph{et al.} as $\textit{if } \neg  (a \vsim \neg b) \textit{ and } (a \vsim c) \textit{ then } (a \land b \vsim c)$~\cite{10.1016/S0004-3702(97)00012-X}.
In words, the property can be described as \emph{if we infer $c$ from $a$, and we do not infer \textquote{$\neg b$} from $a$, then we infer $c$ from \textquote{$a$ and $b$}}.
Analogously to cautious monotony, which requires an adaptation to be useful in the context of abstract argumentation, we need to \textquote{tweak} the $\neg  (a \vsim \neg b)$ condition and we state that:
\begin{itemize}
    \item Given a semantics $\sigma$, if we infer a set of arguments $c$ from an argumentation framework $a$, and
    \item If normally expanding $a$ by the arguments and attacks $b'$ to an argumentation framework $b$ and arguments in $b'$ that are in any $\sigma$-extension of $b$ do not attack $c$,
    \item Then we can infer $c$ from $b$.
\end{itemize}
\begin{definition}[Rational Monotony in Argumentation Semantics] 
         Let $\sigma$ be an argumentation semantics.
        $\sigma$ satisfies:
        \begin{itemize}
            \item \textbf{Strong rational monotony}: iff $\sigma$ is universally defined and for every two argumentation frameworks $AF = (AR, Attacks)$ and $AF' = (AR', Attacks')$, such that $AF \preceq_N AF'$, $\forall E \in \sigma(AF)$ and $\forall E' \in \sigma(AF')$ it holds true that if $\{(a,b) \mid (a,b) \in Attacks', a \in UE'_{new}, b \in E \}=\emptyset$ then $E \subseteq E'$, where $UE'_{new} = \bigcup_{E' \in \sigma(AF')} (E' \setminus AR)$;
            \item \textbf{Weak rational monotony}: iff for every two argumentation frameworks $AF = (AR, Attacks)$ and $AF' = (AR', Attacks')$, such that $AF \preceq_N AF'$ and $\forall E \in \sigma(AF)$, it holds true that if $\{(a,b) \mid (a,b) \in Attacks', a \in  UE'_{new}, b \in E \}=\emptyset$ then $\exists E' \in \sigma(AF')$ such that $E \subseteq E'$, where $UE'_{new} = \bigcup_{E' \in \sigma(AF')} (E' \setminus AR)$.
        \end{itemize}
        
\end{definition}

Let us observe that when considering the two principles of non-monotonic reasoning, rational monotony implies cautious monotony:
\begin{itemize}
    \item Rational monotony: $\: \textit{if } \neg  (a \vsim \neg b) \textit{ and } (a \vsim c) \textit{ then } (a \land b \vsim c)$.
    \item Cautious monotony: $\textit{if } \:\:\: (a \vsim  \:\:\: b) \textit{ and } (a \vsim c) \textit{ then }(a \land b \vsim c)$.
\end{itemize}
We can see that the if-condition of rational monotony is weaker, \emph{i.e.}, $(a \vsim b)$ implies $\neg  (a \vsim \neg b)$, but not vice versa, from which it follows that the principle of rational monotony is stricter.
Analogously, the rational monotony argumentation principle implies the cautious monotony argumentation principle, assuming that an argumentation semantics, given any argumentation framework $AF = (AR, Attacks)$, and $E \in \sigma(AF)$, $\forall AF' = (AR', Attacks')$, such that $AR' = AR$ and $Attacks \subseteq Attacks'$, $Attacks' \supseteq (Attacks \setminus \{(a, b) | (a, b) \in AR', b \in E\})$, it holds true that $\exists E' \in \sigma(AF')$, such that $E \subseteq E'$.
Colloquially speaking, we assume that \textquote{removing} attacks from arguments in $AR$ to an extension $E$ does not cause any argument in $E$ to be no longer considered part of a valid conclusion (extension).
This follows from the fact that the only difference between the rational monotony and cautious monotony argumentation principles is that the constraints of rational monotony, given an argumentation semantics $\sigma$ and two argumentation frameworks $AF = (AR, Attacks)$, $AF' = (AF', Attacks')$, such that $AF \preceq_N AF'$, are weaker:
\begin{itemize}
    \item Rational monotony: \\
    $\{(a,b) \mid (a,b) \in Attacks', a \in \bigcup_{E' \in \sigma(AF')} (E' \setminus AR), b \in E \}=\emptyset$.
    \item Cautious monotony: \\ 
    $\{(a,b) \mid (a,b) \in Attacks', a \in AR' \setminus AR, b \in E \}=\emptyset$.
\end{itemize}

Considering the paper's primary focus on the intersection of economic rationality and abstract argumentation, a detailed analysis of cautious and rational monotony can be considered out-of-scope.
The purpose of the establishment of the cautious monotony and rational monotony argumentation principles is to show the difference between these variants of \textquote{relaxed} monotony and the reference independence argumentation principle we derive from a formal model of economic rationality.

\section{Rational Argumentation-based Decision Functions}
\label{preferences}
In this section, we define the concept of \emph{economically rational argumentation-based decision functions}.

\subsection{Rational Argumentation-based Decision-Making}
To build the foundation for exploring the intersection of abstract argumentation semantics and economic rationality, we introduce the notion of an argumentation-based decision function.
\begin{definition}[Argumentation-based Decision Function]\label{arg-function}
    \label{preference-function}
    The argumentation-based decision function $g \circ \sigma$ of an agent is the function composition between a function $g$ and an argumentation semantics $\sigma$ that takes an argumentation framework $AF = (AR, Attacks)$ as its input and returns a set of decision outcomes $AR^{*} \subseteq AR$:
    \begin{align*}
        g \circ \sigma: {\cal AF} \rightarrow 2^{AR}
    \end{align*}
\end{definition}
Because we assume an economically rational decision function, given an argumentation framework $AF = (AR, Attacks)$, $A^{*} =  g \circ \sigma(AF)$ implies $\forall A \in 2^{AR}, A^{*} \succeq A$ (\emph{i.e.}, the agent establishes the preference relation $A^{*} \succeq A$ for all $A \in 2^{AR}$).

To determine the function's input, an economically rational agent needs to construct an argumentation framework $AF$ that consists of the propositional atoms $AR$ and an attack relation $Attacks \subseteq AR \times AR$.
The framework is resolved by an argumentation semantics $\sigma$; different argumentation semantics can be used.
In the context of rational economic decision-making, we want to determine exactly one set of arguments that can be considered valid conclusions.
To achieve this, the decision function can be defined, for instance, as $g^{\cup} \circ \sigma(AF) = \bigcup_{E \in \sigma(AF)} E$; \emph{i.e.}, if a semantics $\sigma$ returns more than one extension, the decision function $g^{\cup} \circ \sigma (AF) = \bigcup_{E \in \sigma(AF)} E$ returns the union of the set of extensions returned by $\sigma$. We call such a function a \emph{lenient} decision function.
For example, given a particular argumentation semantics, we can define argumentation-based decision-functions as follows:
\begin{enumerate}
    \item $g^{\cap} \circ \sigma_{stage}(AF)$ corresponds to the skeptical reasoning mode of stage semantics.
    \item $g^{\cap} \circ \sigma_{grounded}(AF)$ and $g^{\cup} \circ \sigma_{grounded}(AF)$ return the grounded extension of $AF$. There is always exactly one grounded extension, from which it follows that $g^{\cap} \circ \sigma_{grounded}(AF) = g^{\cup} \circ \sigma_{grounded}(AF)$. The same applies to ideal and eager semantics.
\end{enumerate}
Other approaches to use $g$ as an aggregate function for \textquote{selecting} an extension from the set of extensions a semantics returns are possible, of course.

It is important to note that argumentation-based decision-making does not necessarily imply choice from a set of goods or indeed any type of scenario that is typical for economic decision-making examples, but can cover any decision process, for example the selection of epistemic arguments from an argumentation framework that an agent will consider as valid.

\begin{example}
    As an example, let us assume we have a consultant who has to suggest whether or not to launch a product $p$.
    Initially, she does not find any arguments against launching the product, \emph{i.e.}, she establishes the argumentation framework $AF = (\{l_p\}, \{\})$, where $l_p$ stands for \textquote{launch product}.
    When asked by a decision-maker about whether to launch the product or not, our consultant can give a clear recommendation given any of the common argumentation semantics, let us assume, for example, stage semantics $\sigma_{stage}$ or preferred semantics $\sigma_{preferred}$: $\sigma_{stage}(AF) = \sigma_{preferred}(AF) = \{\{l_p\}\}$.
    From the perspective of microeconomic decision theory, we can infer that this result implies the consultant has established the preference order that $\{l_p\}$ is preferred over all other items in $2^{\{l_p\}} = \{\{\}, \{l_p\}\}$, \emph{i.e.}, $\forall A \in \{\{\}, \{l_p\}\}, \{l_p\} \succeq A$.
    However, let us now assume that the decision-maker asks the consultant to collect feedback from different management stakeholders on whether or not the product should be launched.
    After doing so, the consultant constructs an expansion of $AF$ that reflects the different arguments for and against launching the product (directly or indirectly). She ends up with the argumentation framework $AF' = (\{l_p, a, b, c\},\{(a, l_p), (a, b), (b, c), (c, a)\})$.
    Let us assume she again uses either preferred or stage semantics to determine the framework's extensions:
    \begin{itemize}
        \item Preferred semantics. $\sigma_{preferred}(AF) = \{\{\}\}$. This means that the addition of the arguments $a$, $b$, and $c$ changes the status of $l_p$ although $a$, $b$, and $c$ are all not considered valid conclusions and although the relationships between the arguments in $AF$ have not changed; \emph{i.e.}, the consultant reverses a preference from $\{l_p\} \preceq \{\}$ to $\{\} \preceq \{l_p\}$ and is economically not rational.
        Of course, the consultant could search for new knowledge until she can make an economically rational decision (see work on \textquote{loop-busting} in abstract argumentation as presented by Gabbay~\cite{10.1093/logcom/exu007}), but preferred semantics do not allow her to commit to an economically rational decision at the moment.
        \item Stage semantics.
        $\sigma_{stage}(AF) = \{\{l_p, b\}, \{l_p, c\}, \{a\}\}$.
        This result means that the consultant can \textquote{pick} an extension (either $\{l_p, b\}$ or $\{l_p, c\}$, or $\{a\}$) that implies preferences that are consistent with the previous recommendation based on $AF$.
        This can be considered a useful property in this scenario, where we can assume that the decision-maker expects a clear recommendation on whether the product should be launched or not that is consistent with regards to the previous interactions between decision-maker and consultant, and indecisiveness is most likely not an option.
    \end{itemize}
    Figure~\ref{fig:example2} depicts the example's argumentation graphs.
\end{example}

\begin{figure}
    \subfloat[$AF$.]{
        \begin{tikzpicture}[
            noanode/.style={dashed, circle, draw=black!60, minimum size=10mm, font=\bfseries},
            anode/.style={circle, fill=lightgray, draw=black!60, minimum size=10mm, font=\bfseries},
            ]
            \node[anode]    (lp)    at(0,4)  {l$_{\text{p}}$};
        \end{tikzpicture}
    }
    \hspace{50pt}
    \centering
    \subfloat[$AF'$]{
        \begin{tikzpicture}[
            noanode/.style={dashed, circle, draw=black!60, minimum size=10mm, font=\bfseries},
            unode/.style={circle, draw=black!60, minimum size=10mm, font=\bfseries},
            anode/.style={circle, fill=lightgray, draw=black!60, minimum size=10mm, font=\bfseries},
            ]
           \node[unode]    (lp)    at(0,4)  {l$_{\text{p}}$};
            \node[unode]    (a)    at(0,2)  {a};
            \node[unode]    (b)    at(2,2)  {b};
             \node[unode]    (c)    at(2,4)  {c};
            \path [->, line width=1mm]  (a) edge node[left] {} (lp);
            \path [->, line width=1mm]  (a) edge node[left] {} (b);
            \path [->, line width=1mm]  (b) edge node[left] {} (c);
            \path [->, line width=1mm]  (c) edge node[left] {} (a);
        \end{tikzpicture}
    }
\caption{Inconsistent preferences, given preferred semantics: $\sigma_{preferred}(AF) = \{\{l_{\text{p}}\}\}$ implies accepting $\{ l_{\text{p}} \}$ is preferred over accepting $\{\}$, while $\sigma_{preferred}(AF') = \{\{\}\}$ implies accepting $\{\}$ is preferred over accepting $\{ l_{\text{p}} \}$.}
\label{fig:example2}
\end{figure}
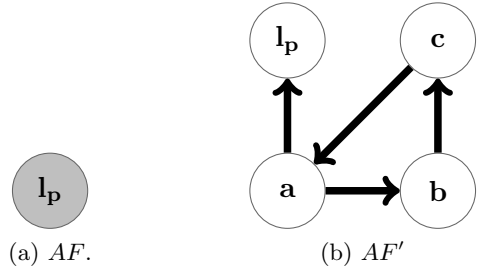

Let us now formally put argumentation-based decision functions in the context of economic rationality.
By considering Definition~\ref{clear-pref}, we define the \emph{clear preferences} principle for argumentation-based decision-making.
\begin{definition}[\emph{Clear Preferences} for Argumentation-based Decision-Making]
   Let $g \circ \sigma$ be an agent's argumentation-based decision function.
 Given any argumentation framework $AF = (AR, Attacks)$, it follows from $A^{*} = g \circ \sigma(AF)$ that the agent has established the following preference order: $\forall A \in 2^{AR}, A^{*} \succeq A$.
\end{definition}
In words, the agent's decision $A^{*}$ from $2^{AR}$ means that the agent prefers $A^{*}$ over any of the other sets in $2^{AR}$.

The \emph{clear preferences} principle is obvious when evaluating one-off argumentation-based decision-making.
However, when considering an argumentation process or \emph{dialogue}, during which arguments and attacks are added to an argumentation framework over time, it is clear that we need to consider the reference independence property as introduced in Definition~\ref{ref-in}.
We prove that, analogous to the economic principles, reference independence is implied by the \emph{clear preferences} property in the context of argumentation-based decision-making.
\begin{proposition}[\emph{Reference Independence} for Argumentation-based Decision-Making]\label{ref-dep-prop}
    \hspace{10pt} \\
    Let $AF = (AR, Attacks)$ and $AF' = (AR', Attacks')$ be two argumentation frameworks for which it holds true that $AR \subseteq AR'$ and let $g \circ \sigma$ be an argumentation-based decision function.
     An economically rational argumentation-based decision $A^{*} = g \circ \sigma(AF)$ implies that if $g \circ \sigma(AF') \subseteq AR$, then $g \circ \sigma(AF') = g \circ \sigma(AF)$.
\end{proposition}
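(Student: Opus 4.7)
The plan is to reduce the statement to a direct application of the \emph{Clear Preferences} principle for argumentation-based decision-making together with the antisymmetry of the preference order. Nothing about the internal workings of $\sigma$ or the aggregation $g$ needs to be invoked beyond the fact that the outputs $g \circ \sigma(AF)$ and $g \circ \sigma(AF')$ are, by assumption of economic rationality, the agent's top choices from $2^{AR}$ and $2^{AR'}$ respectively.

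First, I would name the decisions: let $A^{*} = g \circ \sigma(AF)$ and $A'^{*} = g \circ \sigma(AF')$, and assume the hypothesis $A'^{*} \subseteq AR$. Applying the \emph{Clear Preferences} principle to $AF$ yields the preference relation $A^{*} \succeq A$ for every $A \in 2^{AR}$. Applying the same principle to $AF'$ yields $A'^{*} \succeq A'$ for every $A' \in 2^{AR'}$. Since $AR \subseteq AR'$, we have the inclusion $2^{AR} \subseteq 2^{AR'}$, so any subset of $AR$ is also eligible in the second preference relation.

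Next, I would instantiate these universally quantified statements on the two decisions themselves. Because $A'^{*} \subseteq AR$ by hypothesis, $A'^{*} \in 2^{AR}$, and so $A^{*} \succeq A'^{*}$ follows from the preference order implied by $AF$. Conversely, $A^{*} \subseteq AR \subseteq AR'$ gives $A^{*} \in 2^{AR'}$, and so $A'^{*} \succeq A^{*}$ follows from the preference order implied by $AF'$. At this point antisymmetry of the partial order $\succeq$ (clause~2 of the definition of an order) delivers $A^{*} = A'^{*}$, which is exactly the conclusion $g \circ \sigma(AF') = g \circ \sigma(AF)$.

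There is no genuine obstacle here; the proposition is essentially a rephrasing of Definition~\ref{ref-in} once one has identified $g \circ \sigma(AF)$ with the rational man's chosen subset. The only point that warrants a line of justification is the powerset inclusion $2^{AR} \subseteq 2^{AR'}$, which licenses applying both preference relations to each decision. I would keep the proof to roughly the four sentences sketched above, making explicit that the argument uses only (i) the \emph{Clear Preferences} principle for each of $AF$ and $AF'$, (ii) monotonicity of the powerset construction under $\subseteq$, and (iii) antisymmetry of $\succeq$.
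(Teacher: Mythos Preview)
Your argument is correct and uses exactly the same ingredients as the paper's proof: the \emph{Clear Preferences} principle applied to each of $AF$ and $AF'$, the inclusion $2^{AR} \subseteq 2^{AR'}$, and antisymmetry of $\succeq$. The paper packages the same reasoning as a proof by contradiction with a two-case split on which direction of set inclusion fails, whereas you proceed directly; your version is more economical but not a different route.
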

\begin{proof}
    We provide a proof by contradiction.
    Let us suppose that $g \circ \sigma(AF) \neq g \circ \sigma(AF')$. It follows that $g \circ \sigma(AF) \not \subseteq g \circ \sigma(AF') \lor g \circ \sigma(AF') \not \subseteq g \circ \sigma(AF)$.
    \begin{enumerate}
        \item If $g \circ \sigma(AF) \not \subseteq g \circ \sigma(AF')$, then $\exists a \in g \circ \sigma(AF)$, such that $a \not \in g \circ \sigma(AF')$.
        \\ It follows that: \\ \emph{i)} By definition of $g \circ \sigma(AF)$, $\exists A^{*} \in 2^{AR}$, such that $\forall A \in 2^{AR}, A^{*} \succeq A \land a \in A^{*}$,
        \\ \emph{ii)} By definition of  $g \circ \sigma(AF')$, $\exists A'^{*} \in 2^{AR}$ such that $\forall A \in 2^{AR}, A'^{*} \succeq A \land a \not \in A'^{*}$.
        \\ $\implies$ Because $A'^{*} \neq A^{*}$, \emph{i}) contradicts \emph{ii}).
        \item If $g \circ \sigma(AF') \not \subseteq g \circ \sigma(AF)$, then $\exists a \in g \circ \sigma(AF')$ such that $a \not \in g \circ \sigma(AF)$.
       \\ It follows that: \\  \emph{i)} By definition of $g \circ \sigma(AF)$, $\exists A^{*} \in 2^{AR}$, such that $\forall A \in 2^{AR}, A^{*} \succeq A \land a \not \in A^{*}$,
        \\  \emph{ii)} By definition of $g \circ \sigma(AF')$, $\exists A'^{*} \in 2^{AR}$, such that $\forall A \in 2^{AR}, A'^{*} \succeq A \land a \in A'^{*}$.
        \\ $\implies$ Because $A'^{*} \neq A^{*}$, \emph{i}) contradicts \emph{ii}).
    \end{enumerate}
\end{proof}
In words, given a decision $A^{*}$ from $2^{AR}$ that implies $\forall A \in 2^{AR}, A^{*} \succeq A$, no decision $A'$ from $2^{AR'}$ with $AR' \supseteq AR$ should change the preferences implied by $A^{*}$.
Colloquially speaking, the proof is the \textquote{select a subset of a set}-equivalent to the economics proof that clear preferences imply consistent preferences for \textquote{select an item from a set}-scenarios, as for example presented by Rubinstein~\cite[p. 11]{rubinstein1998modeling}.
Note that in the proof, attack relations are irrelevant.
However, it can be assumed that given two argumentation frameworks $AF = (AR, Attacks)$ and $AF' = (AR', Attacks')$, with $AF'$ being an expansion, but not a normal expansion of $AF$, the reference independence property does not need to hold, because the knowledge that is modeled in the \textquote{original} argumentation framework $AF$ has changed. In contrast, if $AF \preceq_N AF'$, the relationships between arguments in $AR$ have not changed, which means that a rational decision-maker should maintain the previously established preference order on the items in the powerset of $AR$ given that no \textquote{new} arguments are considered valid conclusions.

\section{An Argumentation Principle for the Rational Man}
\label{paradigms}
From the \emph{rational man} properties that we have established in the context of argumentation-based decision functions, we derive a principle that an argumentation semantics $\sigma$ needs to satisfy to guarantee rational argumentation-based decision-making, given a decision function $g \circ \sigma$.
Ideally, the argumentation semantics $\sigma$ of an argumentation-based choice function $g \circ \sigma$ is uniquely defined ($|\sigma(AF)| = 1$ for any argumentation framework $AF$) and hence satisfies the clear preferences principle of the rational economic man, and  the aggregation function $g$ merely needs to flatten the set $\sigma$ returns.
An overview of semantics that fulfill this principle is provided in Appendix C, based on an analysis by Baumann~\cite{baumann2017nature}.
Let us note that the \emph{clear preferences} principle only implies clear preferences in one-off decision-making \emph{i.e.}, it is self-evident that \emph{i)} argumentation semantics that always return exactly one extension do not necessarily return extensions that imply consistent preference relations when resolving a sequence of normally expanding argumentation frameworks and \emph{ii)} argumentation semantics that may return multiple extensions may allow to \textquote{pick} an extension that implies consistent preference relations when resolving a normally expanding sequence.
The focus of this paper is on exactly this problem, which we analyze using the \emph{reference independence} principle.
\begin{definition}[\emph{Reference Independence} in Argumentation-based Decision Functions] 
    An argumentation-based decision function $g \circ \sigma$ satisfies the \emph{reference independence} principle iff for every two argumentation frameworks $AF = (AR, Attacks)$ and $AF' = (AR', Attacks')$, such that $AF \preceq_N AF'$, the following implication holds true, given an argumentation-based decision function $g \circ \sigma$:
    \begin{align*}
        \text{If } g \circ \sigma(AF') \subseteq AR \text{ then } g \circ \sigma(AF') = g \circ \sigma(AF)
    \end{align*}
\end{definition}
\noindent To illustrate the principle, let us introduce an example.
\begin{example}
    We have the following argumentation frameworks:
    \begin{enumerate}
        \item $AF = (AR, Attacks) =  (\{a, b\}, \{(a, b)\})$
        \item $AF' =  (AR', Attacks') =  (\{a, b, c\}, \{(a, b), (b, c), (c, a)\})$
    \end{enumerate}
    Note that $AF \preceq_N AF'$.
    Now, we apply complete semantics $\sigma^{complete}$ to both frameworks.
    \begin{enumerate}
        \item $g^{\cap} \circ \sigma_{complete}(AF') = \{a\}$, which implies $\forall Args \in 2^{\{a, b\}}, \{a\} \succeq Args$.
        \item $g^{\cap} \circ \sigma_{complete}(AF') = \{ \}$, which implies $\forall Args \in 2^{\{a, b, c\}}, \{\} \succeq Args$.
    \end{enumerate}
    $g^{\cap} \circ \sigma_{complete}(AF') \subset AR$ and $g^{\cap} \circ \sigma_{complete}(AF') \neq g^{\cap} \circ \sigma_{complete}(AF)$; \emph{i.e.},
    the decisions of $\{a\}$ given $\{a, b\}$ and $\{\}$ given $\{a, b, c\}$ are economically not rational, because the preference orders they imply are inconsistent: $g^{\cap} \circ \sigma_{complete}(AF)$ implies $\{a\} \succeq \{\}$, whereas $g^{\cap} \circ \sigma_{complete}(AF')$ implies $\{\} \succeq \{a\}$.
    Hence, it is clear that the argumentation semantics do not satisfy the \emph{reference independence} principle.
\end{example}
Let us again highlight that the principle applies by definition only to normal expansions of a framework ($AF \preceq_N AF'$).
We introduced this restriction because any expansion $AF'$ of $AF = (AR, Attacks)$ with $AF \not \preceq_N AF'$ that adds attacks between arguments of $AF$ is revising the assumptions about $AF$, \emph{i.e.} the \emph{ceteris paribus} assumption of economic rationality does not hold true.

Let us now introduce the reference independence principle for argumentation semantics.
\begin{definition}[Reference Independence Principle for Argumentation Semantics] 
    Let $\sigma$ be an argumentation semantics.
    $\sigma$ satisfies:
    \begin{itemize}
        \item \textbf{Strong reference independence}: iff $\sigma$ is universally defined and for every two argumentation frameworks $AF = (AR, Attacks)$ and $AF' = (AR', Attacks')$, such that $AF \preceq_N AF'$, $\forall E \in \sigma(AF), \forall E' \in \sigma(AF'), E' \not \subseteq AR \lor E' = E$.
        \item \textbf{Weak reference independence}: iff for every two argumentation frameworks $AF = (AR, Attacks)$ and $AF' = (AR', Attacks')$, such that $AF \preceq_N AF'$, $\forall E \in \sigma(AF)$, it holds true that $\exists E' \in \sigma(AF')$, such that $E' \not \subseteq AR \lor E' = E$.
    \end{itemize}
\end{definition}
It is clear that strong reference independence cannot be guaranteed by a \textquote{reasonable} abstract argumentation semantics.
To illustrate this, let us claim that $\sigma((\{a\}, \{\})) = \{\{a\}\}$, $\sigma((\{b\}, \{\})) = \{\{b\}\}$ are \textquote{reasonable} assumptions about the behavior of an argumentation semantics.
Let us now consider the frameworks $AF = (AR, Attacks)$, such that $AF = (\{a, b\}, \{(a, b), (b, a)\})$ and $AF' = (AR', Attacks')$, such that $AF' = (\{a, b, c\}, \{(a, b), (b, a), \\ (b, c), (c, b), (c, a), (a, c)\})$ (note that $AF \preceq_N AF'$). 
One can see that $\sigma(AF)$ needs to return $\{\{a\}, \{b\}\}$ (if it returned the empty set, we could not achieve weak reference independence when expanding from an argumentation framework containing only $a$ ($(\{a\}, \{\})$) or $(\{b\}, \{\})$), whereas $\sigma(AF')$ needs to return $\{\{a\}, \{b\}, \{c\}\}$ to be weakly reference independent. However $\sigma(AF')$ is not strongly reference independent, because $\forall E \in \sigma(AF)$ it does not hold true that $\forall E' \in \sigma(AF'), (E' \not \subseteq AR \lor E' = E)$:
\begin{itemize}
    \item Let $E$ be $\{b\}$ and let $E'$ be $\{a\}$. Note that $\{b\} \in \sigma(AF)$ and $\{a\} \in \sigma(AF')$. $\{a\} \subseteq AR \land \{a\} \neq \{b\}$.
    \item Let $E$ be $\{a\}$ and let $E'$ be $\{b\}$. Note that $\{a\} \in \sigma(AF)$ and $\{b\} \in \sigma(AF')$. $\{b\} \subseteq AR \land \{b\} \neq \{a\}$.
\end{itemize}
In contrast, we show further below that weak reference independence can be satisfied by an argumentation semantics; hence it is the more relevant property in the context of abstract argumentation and we may refer to it as \emph{reference independence} without any further qualifier.
It is important to highlight that the strong reference independence principle is still a crucial building block for this paper, as it can be derived in a more obvious manner from the reference independence property of economic rationality; showing that relaxing this principle is necessary to motivate the weak reference independence principle.
Let us colloquially summarize how we have transformed a formal model of an economically rational decision-maker (\textquote{rational economic man}) into the abstract argumentation principle of weak reference independence.
\begin{enumerate}
    \item Rational economic man selects a subset of a set of items, which are, in the case of abstract argumentation, arguments in an argumentation framework.
    \item The selection implies a preference order (partial order) on the powerset of items/arguments: the selected set is preferred over all other items in the powerset.
    \item When adding items to the set, the preference order on the powerset of items remains consistent as long as the properties of the initial items do not change.
    Consequently, in an argumentation scenario, when expanding an argumentation framework, the preference order on the powerset of arguments remains consistent as long as the relations among the initial arguments remain unchanged, \emph{i.e.}, as long as the expansion is \emph{normal}. Let us highlight that attacks from \textquote{new} to \textquote{initial} arguments must not affect the preference order as long as no \textquote{new} argument is considered a valid conclusion and if a \textquote{new} argument is considered a valid conclusion (is part of the extension) the new preference order is by definition consistent with the initial preference order.
\end{enumerate}

\subsection{Reference Independence and Monotony}
Considering that abstract argumentation is a method for non-monotonic reasoning, and given that the \emph{reference independence} property may seem -- at first glance -- to imply monotony, let us provide an intuition for distinguishing between reference independence and the different forms of monotony as defined in Section~\ref{inference} in the context of abstract argumentation.
In particular, we provide proofs that show the following, given any argumentation semantics whose extensions are maximal conflict-free sets, and in case an implication relationship holds, also given any argumentation semantics whose extensions are maximal admissible sets:
\begin{itemize}
    \item Strong monotony implies strong reference independence and weak monotony implies weak reference independence, but not vice versa.
    \item Strong rational monotony does not imply strong reference independence and vice versa. Weak rational monotony implies weak reference independence, but not vice versa. We observe that in contrast to weak reference independence, weak rational monotony is not satisfied by any non-naive semantics that is evaluated in this paper; \emph{i.e.}, weak rational monotony is too strict to be useful in the context of abstract argumentation.
    \item Strong/weak cautious monotony does not imply strong/weak reference independence and vice versa.
\end{itemize}
By definition, strong monotony implies strong rational monotony and strong cautious monotony, and weak monotony implies weak rational monotony and weak cautious monotony.
We have observed that typically, strong/weak rational monotony implies strong/weak cautious monotony (see: Section~\ref{inference}).
Figure~\ref{fig:implications} visualizes the relations between monotony, cautious monotony, rational monotony, and reference independence.
\begin{figure}
        \centering
        \begin{tikzpicture}[
            noanode/.style={dashed, circle, draw=black!60, minimum size=10mm, font=\bfseries},
            unode/.style={font=\bfseries},
            anode/.style={circle, fill=lightgray, draw=black!60, minimum size=10mm, font=\bfseries},
            ]
           \node[unode]    (m)    at(0,2)  {Monotony};
            \node[unode]    (cm)    at(10,2)  {Cautious Monotony};
            \node[unode]    (rm)    at(5,2)  {Rational Monotony};
             \node[unode]    (ri)    at(7.5,0)  {Reference Independence};
             \path [->, line width=0.5mm]  (rm) edge node[left] {weak} (ri);
            \path [->, line width=1mm]  (m) edge node[left] {} (rm);
            \path [->, line width=1mm]  (m) edge node[left] {} (ri);
            \path [->, line width=1mm]  (rm) edge node[left] {} (cm);
        \end{tikzpicture}
\caption{Monotony implies reference independence, rational monotony and cautious monotony (given any semantics that satisfies maximal conflict-freeness or maximal admissibility). Rational monotony typically implies cautious monotony (see Section~\ref{inference}). Weak rational monotony implies weak reference independence; however, rational monotony is violated by all non-naive argumentation semantics that are evaluated in this paper.}
\label{fig:implications}
\end{figure}
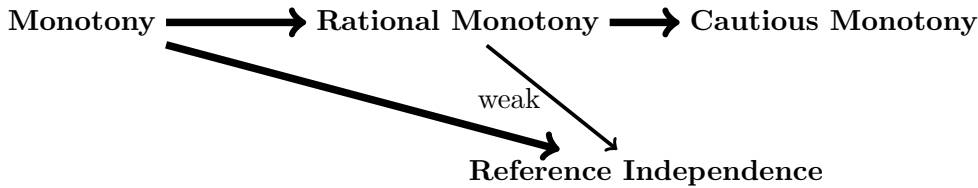
 We conclude that only monotony and (in the \emph{weak} case, rational monotony), which are too strict for common-sense reasoning approaches, imply reference independence. Hence, we can conclude that reference independence is novel and useful as a property for analyzing abstract argumentation approaches from the perspective of economic rationality.

Let us motivate our decision to require semantics that are based on maximal conflict-free sets or maximal admissible sets in our proofs instead of merely requiring conflict-freeness as the only principle that needs to be satisfied.
\begin{example}
 Let $AF = (\{a\}, \{\})$ and $AF' = (\{a, b\}, \{\})$ be argumentation frameworks and let $\sigma$ be an argumentation semantics such that $\sigma(AF) = \{\{\}\}$ and $\sigma(AF') = \{\{a\}\}$. While $\sigma$ returns conflict-free (and also admissible) sets for both argumentation frameworks, the behavior of $\sigma$ is clearly flawed from a common-sense perspective: the structure of the argumentation framework provides no reason to consider $a$ but not $b$ part of a valid conclusion. For semantics with such behavior, the implication relationship between monotony and reference independence that we are demonstrating below does not hold true.
\end{example}
Introducing a new property that is weaker than maximal conflict-freeness and maximal admissibility, yet does not allow for semantics with \textquote{unreasonable} behavior as shown in the example can be considered relevant future work, but is not in the scope of this paper.
The proofs of the observations regarding weak monotony properties and reference independence follow below. For the sake of conciseness, the corresponding proofs regarding \emph{strong} monotony and reference independence, which are analogous to the proofs regarding the corresponding \emph{weak} properties, are available in Appendix A.
In Appendix B we show that it holds true for maximal admissible set-based semantics that strong monotony implies strong reference independence and weak monotony implies weak reference independence.

Let us first prove that weak reference independence does not imply weak monotony.
\begin{proposition}[Weak Reference Independence does not Imply Weak Monotony]
    There exist an argumentation semantics $\sigma$ whose extensions are maximal conflict-free sets (w.r.t. set inclusion) and argumentation frameworks $AF = (AR, Attacks)$, $AF' = (AR', \\ Attacks')$, such that $AF \preceq_N AF'$, such that the following statement does not hold true:
    \begin{align*}
        {}&\text{If }(\forall E \in \sigma(AF), \exists E' \in \sigma(AF'), \text{ such that } (E' \not \subseteq AR \lor E' = E)) \\
        {}& \phantom{eee} \text{ then } \\
        {}& (\forall E \in \sigma(AF), \exists E' \in \sigma(AF'), \text{ such that }  E \subseteq E') 
    \end{align*}
\end{proposition}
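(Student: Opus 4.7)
Since the statement is a non-implication, the proof will proceed by exhibiting a concrete counterexample: a semantics $\sigma$ whose extensions are maximal conflict-free sets, together with frameworks $AF \preceq_N AF'$, for which weak reference independence is satisfied but weak monotony fails. My first observation is that naive semantics itself will not work as the witness: for any maximal conflict-free $E$ in $AF$ and any normal expansion $AF'$, $E$ remains conflict-free in $AF'$ (because attacks between arguments of $AR$ are preserved by normal expansion), so $E$ extends to some maximal conflict-free $E' \supseteq E$ in $AF'$. Hence naive semantics always satisfies weak monotony, and I need a semantics that is strictly more selective among the maximal conflict-free sets.

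Stage semantics fits the bill once one records the standard fact that every stage extension is maximal conflict-free: if $S$ were a stage extension with a strict conflict-free superset $S' \supsetneq S$, then $S$ cannot attack any $x \in S' \setminus S$ (since $S \subseteq S'$ and $S'$ is conflict-free), so $x$ lies in the range of $S'$ but not of $S$, contradicting range-maximality of $S$. For the counterexample I take $AF = (\{a\}, \emptyset)$ and $AF' = (\{a, b\}, \{(b, a)\})$; the inclusion $AF \preceq_N AF'$ is immediate because the only new attack involves the new argument $b \notin AR$.

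A short computation gives $\sigma_{stage}(AF) = \{\{a\}\}$, while in $AF'$ the two maximal conflict-free sets $\{a\}$ and $\{b\}$ have ranges $\{a\}$ and $\{a, b\}$ respectively, so $\sigma_{stage}(AF') = \{\{b\}\}$. Weak reference independence then holds because, for the unique extension $E = \{a\}$ of $AF$, the witness $\{b\} \in \sigma_{stage}(AF')$ satisfies $\{b\} \not\subseteq \{a\} = AR$. Weak monotony, on the other hand, fails because $\{a\} \not\subseteq \{b\}$ and $\{b\}$ is the only extension of $AF'$. The only non-mechanical step --- and what I expect to be the main obstacle --- is identifying a semantics that both restricts to maximal conflict-free sets and can genuinely lose extensions under normal expansion; stage semantics accomplishes this precisely because a newly added argument that attacks something old automatically enlarges the range and can squeeze out previously range-maximal extensions of $AF$.
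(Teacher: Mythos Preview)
Your proof is correct and uses exactly the same pair of frameworks as the paper, $AF = (\{a\},\emptyset)$ and $AF' = (\{a,b\},\{(b,a)\})$, with the same verification that the antecedent holds (because $\{b\}\not\subseteq AR$) while the consequent fails (because $\{a\}\not\subseteq\{b\}$). The only difference is the witnessing semantics: the paper picks CF2, whereas you pick stage. Both choices are valid since both yield $\{\{a\}\}$ on $AF$ and $\{\{b\}\}$ on $AF'$, and both have maximal conflict-free extensions; your extra justification that stage extensions are maximal conflict-free is a clean addition that the paper does not spell out (for CF2 it is immediate from the single-SCC base case). One could note that CF2 is slightly more natural in the paper's narrative because it is later shown to satisfy weak reference independence globally, whereas stage does not; but for the local statement at hand your choice is equally adequate.
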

\begin{proof}
    Let us provide a proof by counter-example.
    \begin{enumerate}
        \item Let $AF = (AR, Attacks) = (\{a\}, \{\})$, $AF' = (\{a, b\}, \{(b, a)\})$ and let us take CF2 semantics, denoted by $\sigma_{cf2}$, as an example of a semantics whose extensions are maximal conflict-free sets. Note that $AF \preceq_N AF'$.
        \item $\sigma_{cf2}(AF) = \{\{a\}\}$ and $\sigma_{cf2}(AF') = \{\{b\}\}$. Let $E = \{a\}, E \in \sigma_{cf2}(AF)$. $\forall E' \in \sigma(AF')$ it holds true that $E \not \subseteq E'$. The condition for weak monotony does not hold.
        \item $\exists E' \in \sigma(AF')$ such that $E' \not \subseteq AR$. The condition for weak reference independence holds. We have proven the proposition.
    \end{enumerate}
\end{proof}
The proof below shows that weak monotony implies weak reference independence. The intuition is that given any maximal conflict-free set $S$ of an argumentation framework $AF = (AR, Attacks)$, adding arguments (and attacks) to the argumentation framework (without changing attacks between existing arguments) will never cause any argument in $AR \setminus S$ to be conflict-free w.r.t. $S$. This means it is not possible to \textquote{add} arguments within $AR$ to a maximal conflict-free set by normally expanding $AF$, which in turn means that to violate reference independence we must in our expansion of $AF$ successfully attack arguments that are accepted in $AF$, which then violates monotony.
%.
\begin{proposition}[Weak Monotony implies Weak Reference Independence]\label{monotony-implies-ref-dep}
    For every argumentation semantics $\sigma$ whose extensions are maximal conflict-free sets (w.r.t. set inclusion) and every two argumentation frameworks $AF = (AR, Attacks), AF' = (AR', \\ Attacks')$, such that $AF \preceq_N AF'$, the following statement holds true:
    \begin{align*}
       {}& \text{If }  (\forall E \in \sigma(AF), \exists E' \in \sigma(AF'), \text{ such that } E \subseteq E') \\
       {}& \phantom{eee} \text{ then } \\
       {}& (\forall E \in \sigma(AF), \exists E' \in \sigma(AF'), \text{ such that } E' \not \subseteq AR \lor E' = E)
    \end{align*}
\end{proposition}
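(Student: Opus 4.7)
The plan is to fix an arbitrary $E \in \sigma(AF)$ and use the weak monotony hypothesis to obtain some $E' \in \sigma(AF')$ with $E \subseteq E'$. From there I would split on whether $E'$ contains any ``new'' argument (i.e.\ an element of $AR' \setminus AR$): if it does, then $E' \not\subseteq AR$ and the witness required by weak reference independence is supplied immediately by $E'$ itself. The remaining case, $E' \subseteq AR$, is where the maximal-conflict-freeness assumption on $\sigma$ does the work.

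In that remaining case I would argue that $E'$ is already conflict-free in $AF$. Because $AF \preceq_N AF'$ we have $Attacks \subseteq Attacks'$, so any attack within $E'$ present in $AF$ is also present in $AF'$; since $E'$ is a $\sigma$-extension of $AF'$ and hence conflict-free there, no such attack exists, so $E'$ is conflict-free in $AF$ too. (In fact, normality gives the even stronger statement that $Attacks' \cap (AR \times AR) = Attacks$, so the attack graphs on $AR$ coincide.) Now $E \subseteq E' \subseteq AR$ and $E'$ is conflict-free in $AF$. Maximality of $E$ (as a conflict-free set of $AF$) forces $E = E'$, which is precisely the second disjunct of the weak reference independence condition.

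Putting the two cases together yields, for every $E \in \sigma(AF)$, an $E' \in \sigma(AF')$ with $E' \not\subseteq AR$ or $E' = E$, as required. The main conceptual obstacle is simply recognizing why the hypothesis that extensions are \emph{maximal} conflict-free sets is indispensable: without maximality one could have a strict extension $E \subsetneq E' \subseteq AR$ witnessing weak monotony while violating weak reference independence. The normal-expansion hypothesis is only needed in its weak form ($Attacks \subseteq Attacks'$) for this direction, since we merely propagate conflict-freeness from $AF'$ back to $AF$; the ``no new attacks inside $AR$'' clause of normality is what would be required for the converse direction, which the previous proposition already shows fails.
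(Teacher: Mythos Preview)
Your proposal is correct and follows essentially the same approach as the paper's proof: both take $E \subseteq E'$ from the weak monotony hypothesis and argue that any strict inclusion forces $E'$ to contain an argument outside $AR$, using the maximal conflict-freeness of $E$ in $AF$. Your write-up is in fact slightly more explicit than the paper's, since you spell out why $E' \subseteq AR$ entails that $E'$ is conflict-free in $AF$ (via $Attacks \subseteq Attacks'$), a step the paper leaves implicit.
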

\begin{proof}
    \phantom{eee}
    \begin{enumerate}
        \item Given $\sigma$ such that each all $\sigma$-extensions are maximal conflict-free sets (w.r.t. set inclusion), $\forall E \in \sigma(AF), E' \in \sigma(AF')$, $E$ and $E'$ are maximal conflict-free (w.r.t. set inclusion). It follows that, if $E \subseteq E'$, we have the following cases:
        \\ \emph{i)} E = E' or
        \\ \emph{ii)} $E \subset E'$, which implies that $\exists a \in E'$, such that $a \not \in E, a \in AR' \setminus AR$, from which it follows that $E' \not \subseteq AR$.
        \item Consequently, by i) and ii) the following statement holds true:
        \begin{align*}
            \forall E \in \sigma(AF), \forall E' \in \sigma(AF'), \text{ if } E \subseteq E' \text{ then } E' \not \subseteq AR \lor E' = E
        \end{align*}
        Hence, the proposition (which is implied by the proven statement) holds true.
    \end{enumerate}
\end{proof}
We can prove that weak \emph{cautious} monotony does not imply weak reference independence.
\begin{proposition}[Weak Cautious Monotony does not Imply Weak Reference Independence]
    There exists an argumentation semantics $\sigma$ whose extensions are maximal conflict-free sets (w.r.t. set inclusion) and argumentation frameworks $AF = (AR, Attacks), AF' = (AR', Attacks')$, such that $AF \preceq_N AF'$ and the following statement does not hold true:
    \begin{align*}
        &{} \forall E \in \sigma(AF), \\
        &{}\quad \text{If }  (\{(a,b) \mid (a,b) \in Attacks', a \in AR' \setminus AR, b \in E \}=\emptyset \\
         &{} \quad \text{ implies } \exists E' \in \sigma(AF') \text{ such that } E \subseteq E') \\
        &{} \quad \quad \text{ then } \\
        &{} \quad (\exists E' \in \sigma(AF') \text{ such that } E' \not \subseteq AR \lor E' = E))
    \end{align*}
\end{proposition}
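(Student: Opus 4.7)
The plan is a proof by counter-example: exhibit a semantics $\sigma$ whose extensions are maximal conflict-free together with frameworks $AF \preceq_N AF'$ and some $E \in \sigma(AF)$ where the cautious-monotony implication is satisfied yet the reference-independence condition is violated. The guiding observation is that whenever every $E \in \sigma(AF)$ is attacked by some new argument of $AF'$, the premise of the inner cautious-monotony implication is false for each $E$ and hence the implication is vacuously true; and whenever $\sigma(AF') = \emptyset$, there cannot exist any $E'$ witnessing reference independence. Combining these two ingredients gives, for each $E$, a true implication paired with a false reference-independence condition, refuting the outer universally quantified statement.

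For the concrete witness I will take $\sigma = \sigma_{stable}$. A preliminary step is to check that stable qualifies under the hypothesis: if $S$ is stable and $a \notin S$ then $S$ attacks $a$, so $S \cup \{a\}$ is not conflict-free, and every stable extension is maximal conflict-free. Let $AF = (\{a, b\}, \{(a, b), (b, a)\})$, for which $\sigma_{stable}(AF) = \{\{a\}, \{b\}\}$, and let $AF' = (\{a, b, c\}, \{(a, b), (b, a), (c, a), (c, b), (c, c)\})$; since every attack in $Attacks' \setminus Attacks$ involves the new argument $c$, we have $AF \preceq_N AF'$. The central computation is that $\sigma_{stable}(AF') = \emptyset$: any conflict-free set containing $c$ is ruled out by $(c, c) \in Attacks'$, and each of $\emptyset, \{a\}, \{b\}$ fails stability because it does not attack $c$.

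To close the argument I will instantiate at $E = \{a\}$ (the case $E = \{b\}$ is symmetric). The set $\{(x, y) \mid (x, y) \in Attacks', x \in AR' \setminus AR, y \in E\}$ contains $(c, a)$, so the premise of the inner cautious-monotony implication fails and that implication is vacuously true. On the other hand, the reference-independence condition demands some $E' \in \sigma_{stable}(AF')$ with $E' \not\subseteq AR$ or $E' = E$, which is impossible because $\sigma_{stable}(AF')$ has no elements at all. Hence the outer universally quantified statement fails at $E = \{a\}$, furnishing the required counter-example.

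The main obstacle I anticipate is the case analysis showing $\sigma_{stable}(AF') = \emptyset$; it is short but must be careful, since stability requires attacking every outside argument and one has to verify that no conflict-free subset of $AR'$ counter-attacks the self-attacker $c$. A secondary subtlety is defending the choice of semantics: although one could trivially concoct an \emph{ad hoc} $\sigma$ returning no extensions on $AF'$, anchoring the counter-example in stable semantics makes the maximal-conflict-freeness hypothesis immediate and situates the phenomenon within standard argumentation machinery rather than in a pathological construction.
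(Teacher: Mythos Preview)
Your proof is correct. You exhibit stable semantics together with $AF=(\{a,b\},\{(a,b),(b,a)\})$ and $AF'$ obtained by adding a self-attacking $c$ that attacks both $a$ and $b$; for $E=\{a\}$ the cautious-monotony premise is false (since $(c,a)\in Attacks'$), so the inner implication holds vacuously, while $\sigma_{stable}(AF')=\emptyset$ makes the reference-independence condition fail outright. The verification that every stable extension is maximal conflict-free is clean, and the case analysis showing $\sigma_{stable}(AF')=\emptyset$ is complete.

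The paper takes a different route: it uses \emph{stage} semantics with $AF=(\{c,d\},\{(c,d),(d,c)\})$ and $AF'=(\{c,d,e\},\{(c,d),(d,c),(d,e),(e,c),(e,e)\})$, obtaining $\sigma_{stage}(AF')=\{\{d\}\}$. There too the cautious-monotony premise is vacuous for $E=\{c\}$ (the new $e$ attacks $c$), but reference independence fails because the sole extension $\{d\}$ of $AF'$ lies inside $AR$ yet differs from $\{c\}$. The key structural difference is that the paper's counter-example keeps $\sigma(AF')$ non-empty, so the failure of reference independence is witnessed by a genuine ``wrong'' extension rather than by the absence of any extension. This arguably makes the paper's example more informative about the phenomenon---it shows the mismatch can arise even for a universally defined semantics---whereas your construction leans on the well-known non-universality of stable semantics. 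On the other hand, your argument is shorter: once $\sigma_{stable}(AF')=\emptyset$ is established, both the vacuous truth of the cautious-monotony implication and the failure of reference independence are immediate, with no need to identify which surviving extension violates the condition. Either way, the formal proposition as stated is established.
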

\begin{proof}
    Let us provide a proof by counter-example.
    \begin{enumerate}
        \item Let $AF = (AR, Attacks) = (\{c, d\}, \{(c, d), (d, c)\})$, $AF' = (\{c, d, e\}, \{(c, d), (d, c), (d, e), \\ (e, c), (e, e)\})$ and let us take stage semantics, denoted by $\sigma_{stage}$, as an example of a semantics whose extensions are maximal conflict-free sets. Note that $AF \preceq_N AF'$.
        \item $\sigma_{stage}(AF) = \{\{c\}, \{d\}\}$ and  $\sigma_{stage}(AF') = \{\{d\}\}$.
        \item Consequently, given $E = \{c\}$, $\{(a,b) \mid (a,b) \in Attacks', a \in AR' \setminus AR, b \in E \}=\emptyset$ is false and it holds true that $\forall E \in \sigma(AF)$, $(\{(a,b) \mid (a,b) \in Attacks', a \in AR' \setminus AR, b \in E \}=\emptyset$ implies $\exists E' \in \sigma(AF') \text{ such that } E \subseteq E')$.
        \item However, it does not hold true that $\forall E \in \sigma(AF), \exists E' \in \sigma(AF')$, such that $(E' \not \subseteq AR \lor E' = E)$. We have proven the proposition.
    \end{enumerate}
\end{proof}
In a similar manner, we can prove that weak reference independence does not imply weak cautious monotony.
\begin{proposition}[Weak Reference Independence does not Imply Weak Cautious Monotony]\label{ref-dep-not-monotony}
    There exists an argumentation semantics $\sigma$ whose extensions are maximal conflict-free sets (w.r.t. set inclusion) and argumentation frameworks $AF = (AR, Attacks), AF' = (AR', Attacks')$ such that $AF \preceq_N AF'$ and the following statement does not hold true:
    \begin{align*}
         &{} \forall E \in \sigma(AF), \\
         &{} \quad (\text{if } (\exists E' \in \sigma(AF'), \text{ such that } E' \not \subseteq AR \lor E' = E) \\
         &{} \quad \quad \text{ then } \\
         &{} \quad (\{(a,b) \mid (a,b) \in Attacks', a \in AR' \setminus AR, b \in E \}=\emptyset \text{ implies } \\
         &{} \quad \quad \exists E' \in \sigma(AF'), \text{ such that } E \subseteq E')
    \end{align*}
\end{proposition}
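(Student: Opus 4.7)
The plan is to provide a proof by counter-example, since the claim is an existential statement. The natural candidate for $\sigma$ is CF2 semantics, because its extensions are maximal conflict-free sets and its SCC-recursive nature makes it well-behaved with respect to weak reference independence (as the paper announces will be proven later), yet the recursion can cause a previously accepted argument to drop out when a new argument is added at a different location in the graph. The task reduces to finding a normal expansion $AF \preceq_N AF'$ in which CF2 changes extensions in a way that preserves weak reference independence while violating weak cautious monotony for at least one $E \in \sigma_{CF2}(AF)$.

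Concretely, I would take $AF = (\{a,b,c\}, \{(a,b),(b,c),(c,a)\})$, the canonical three-cycle, so $\sigma_{CF2}(AF) = \{\{a\},\{b\},\{c\}\}$ (the whole graph is a single SCC, so CF2 returns naive extensions). I would then add one new argument $d$ together with the single attack $(d,a)$, giving $AF' = (\{a,b,c,d\}, \{(a,b),(b,c),(c,a),(d,a)\})$; this is a normal expansion. Now the SCCS of $AF'$ are $\{d\}$ and $\{a,b,c\}$. Running the CF2 recursion: $d$ is unattacked and must be accepted; in the $\{a,b,c\}$ SCC we have $D = \{a\}$ (since $d$ attacks $a$) so the recursion restricts to $\{b,c\}$, where $b$ is unattacked and is accepted, and $c$ is then excluded. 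Hence $\sigma_{CF2}(AF') = \{\{b,d\}\}$.

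With these two frameworks the two halves of the proposition follow by inspection. For weak reference independence: every $E \in \sigma_{CF2}(AF)$ can be paired with the unique $E' = \{b,d\}$, which satisfies $E' \not\subseteq AR$ because $d \notin AR$. For weak cautious monotony, consider $E = \{c\}$: no new attacker targets $c$, i.e.\ $\{(x,y) \mid (x,y) \in Attacks', x \in AR' \setminus AR, y \in E\} = \emptyset$, yet no $E' \in \sigma_{CF2}(AF')$ contains $c$, so the required $E' \supseteq E$ does not exist.

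The only subtle step is the CF2 computation on $AF'$, in particular verifying that after quotienting by $D_{AF'}(\{a,b,c\}, \{d\}) = \{a\}$ the recursion on $(\{b,c\},\{(b,c)\})$ indeed yields $\{b\}$ and not $\{c\}$; this is where one must be careful to apply Definition~\ref{cf-2} strictly, since the remaining restriction has two singleton SCCs and the unattacked one ($\{b\}$) forces $b$'s acceptance and thereby excludes $c$. Everything else — normality of the expansion, maximal conflict-freeness of the CF2 extensions, and the set comparisons needed to check both principles — is routine.
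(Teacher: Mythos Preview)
Your proof is correct and follows essentially the same route as the paper's: take a three-cycle, normally expand it by a single fresh argument attacking one vertex of the cycle, and use as the witnessing $E$ the singleton at the vertex that attacks the targeted node. The only difference is cosmetic: the paper instantiates $\sigma$ with stage semantics on the relabelled frameworks $AF=(\{c,d,e\},\{(c,d),(d,e),(e,c)\})$ and $AF'$ obtained by adding $f$ with $(f,d)$, whereas you use CF2; on these particular frameworks stage and CF2 yield the same extensions, so the choice is immaterial to the argument.
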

\begin{proof}
    We provide the following proof by counter-example.
    \begin{enumerate}
        \item Let $AF = (AR, Attacks) = (\{c, d, e\}, \{(c, d), (d, e), (e, c)\})$, \\ $AF' = (\{c, d, e, f\}, \{(c, d), (d, e), (e, c), (f, d)\})$ and let us take stage semantics, denoted by $\sigma_{stage}$, as an example of a semantics whose extensions are maximal conflict-free sets. Note that $AF \preceq_N AF'$.
        \item $\sigma_{stage}(AF) = \{\{c\}, \{d\}, \{e\}\}$ and  $\sigma_{stage}(AF') = \{\{e, f\}\}$.
        \item Given the extension $E \in \sigma_{stage}(AF)$, $E = \{c\}$, it holds true that $\{(a,b) \mid (a,b) \in Attacks', a \in AR' \setminus AR, b \in E \}=\emptyset$.
        \item Given $E' = \{e, f\}$ as the only extension in $\sigma(AF')$ it holds true that $E' \not \subseteq AR$; the result does not violate the weak reference independence property.
        \item However, given $E' = \{e, f\}$ as the only extension in $\sigma(AF')$ and given $E = \{c\}, E \in \sigma_{stage}(AF)$, we have $E \not \subseteq E'$, which violates weak cautious monotony. Hence, we have proven the proposition.
    \end{enumerate}
\end{proof}
Figure~\ref{fig:examples-rat-mon-1} depicts the frameworks used in the proof of Proposition~\ref{ref-dep-not-monotony}.
\begin{figure}
    \subfloat[$AF$.]{
        \begin{tikzpicture}[
            noanode/.style={dashed, circle, draw=black!60, minimum size=10mm, font=\bfseries},
            unanode/.style={circle, draw=black!60, minimum size=10mm, font=\bfseries},
            anode/.style={circle, fill=lightgray, draw=black!60, minimum size=10mm, font=\bfseries},
            ]
             \node[unanode]    (c)    at(0,2)  {c};
            \node[unanode]    (d)    at(2,2)  {d};
            \node[unanode]    (e)    at(0,4)  {e};
            \path [->, line width=1mm]  (c) edge node[left] {} (d);
            \path [->, line width=1mm]  (d) edge node[left] {} (e);
            \path [->, line width=1mm]  (e) edge node[left] {} (c);
        \end{tikzpicture}
    }
    \hspace{50pt}
    \centering
    \subfloat[$AF'$]{
        \begin{tikzpicture}[
            noanode/.style={dashed, circle, draw=black!60, minimum size=10mm, font=\bfseries},
            anode/.style={circle, fill=lightgray, draw=black!60, minimum size=10mm, font=\bfseries},
            ]
             \node[noanode]    (c)    at(0,2)  {c};
            \node[noanode]    (d)    at(2,2)  {d};
            \node[anode]    (e)    at(0,4)  {e};
            \node[anode]    (f)    at(2,4)  {f};
            \path [->, line width=1mm]  (c) edge node[left] {} (d);
            \path [->, line width=1mm]  (d) edge node[left] {} (e);
            \path [->, line width=1mm]  (e) edge node[left] {} (c);
            \path [->, line width=1mm]  (f) edge node[left] {} (d);
        \end{tikzpicture}
    }
\caption{Example: weak reference independence does not imply weak cautious monotony.} 
\label{fig:examples-rat-mon-1}
\end{figure}
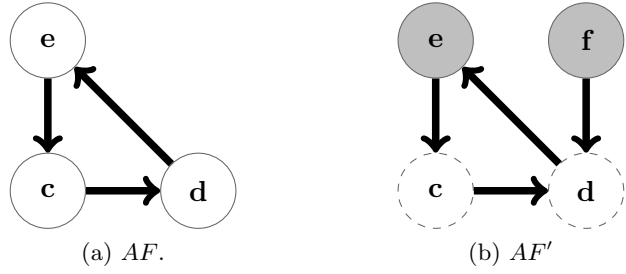
Let us now prove that weak rational monotony implies weak reference independence.
\begin{proposition}[Weak Rational Monotony  Implies Weak Reference Independence]\label{rational-monotony-not-ref-dep}
    For every argumentation semantics $\sigma$ whose extensions are maximal conflict-free sets (w.r.t. set inclusion) and every two argumentation frameworks $AF = (AR, Attacks), AF' = (AR', \\ Attacks')$, such that $AF \preceq_N AF'$, the following statement holds true:
    \begin{align*}
             &{} \forall E \in \sigma(AF), \\
             &{} \quad (\text{If } (\{(a,b) \mid (a,b) \in Attacks', a \in UE'_{new}, b \in E \}=\emptyset \\
              &{} \quad \quad \text{ implies } \exists E' \in \sigma(AF'), \text{ such that } E \subseteq E') \\ 
             &{} \quad \quad \text{ then } \\
             &{}\quad (\exists E' \in \sigma(AF'), \text{ such that } (E' \not \subseteq AR \lor E' = E))),
        \end{align*}
        where $UE'_{new} = \bigcup_{E' \in \sigma(AF')} (E' \setminus AR)$.
\end{proposition}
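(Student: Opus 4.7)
The plan is to fix an arbitrary $E \in \sigma(AF)$, assume the weak rational monotony implication for $E$, and split on whether its antecedent -- that no argument in $UE'_{new}$ attacks $E$ -- holds. The key observation is that in each case, a suitable witness $E' \in \sigma(AF')$ for weak reference independence appears, either directly from the definition of $UE'_{new}$ (vacuous case) or via the same maximal-conflict-freeness transfer used in Proposition~\ref{monotony-implies-ref-dep} (non-vacuous case).

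First I would handle the case in which the antecedent of weak rational monotony fails for $E$: there is some $(a,b) \in Attacks'$ with $a \in UE'_{new}$ and $b \in E$. By the definition $UE'_{new} = \bigcup_{E' \in \sigma(AF')} (E' \setminus AR)$, the argument $a$ lies in $E^{*} \setminus AR$ for some $E^{*} \in \sigma(AF')$, so $E^{*} \not\subseteq AR$. Taking this $E^{*}$ as the witness immediately discharges weak reference independence for $E$. Note that the assumed implication itself is used only vacuously here; the useful information is extracted from the shape of $UE'_{new}$.

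Next I would handle the case in which the antecedent holds, so the assumed implication yields some $E' \in \sigma(AF')$ with $E \subseteq E'$. From here the reasoning mirrors Proposition~\ref{monotony-implies-ref-dep} almost verbatim. If $E = E'$ we are done. Otherwise $E \subset E'$, and I would show $E' \not\subseteq AR$: pick $a \in E' \setminus E$; were $a \in AR$, maximal conflict-freeness of $E$ in $AF$ would force a conflict between $a$ and $E$ under $Attacks$ (possibly $a$ self-attacking), and since $Attacks \subseteq Attacks'$ this conflict survives in $AF'$, contradicting conflict-freeness of $E'$. Hence $a \in AR' \setminus AR$ and $E'$ witnesses weak reference independence.

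The main obstacle, more bookkeeping than technical, is the vacuous case of the material implication: one must resist the temptation to try to invoke the conclusion of weak rational monotony when its antecedent fails, and instead read off a witness directly from $UE'_{new}$. Everything else is a reprise of the conflict-transfer argument of Proposition~\ref{monotony-implies-ref-dep}, which goes through because normal expansion never deletes or adds attacks among arguments of $AR$.
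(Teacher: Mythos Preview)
Your proposal is correct and follows essentially the same approach as the paper: a case split on whether the antecedent of the weak rational monotony implication holds, extracting a witness $E' \not\subseteq AR$ directly from $UE'_{new}$ when it fails, and invoking the maximal-conflict-freeness transfer of Proposition~\ref{monotony-implies-ref-dep} when it holds. If anything, you spell out Case~2 in more detail than the paper, which simply cites the earlier proof.
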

\begin{proof}
    Given any $E \in \sigma(AF)$, we have two cases for which the weak rational monotony condition is satisfied.
    \begin{description}
        \item[Case 1:] $\{(a,b) \mid (a,b) \in Attacks', a \in UE'_{new}, b \in E \} \neq \emptyset$: \\
        If $\{(a,b) \mid (a,b) \in Attacks', a \in UE'_{new}, b \in E \} \neq \emptyset$, then $\exists E' \in \sigma(AF'), \text{ such that } (E' \not \subseteq AR)$. Consequently, weak reference independence is satisfied.
         \item[Case 2:] $\{(a,b) \mid (a,b) \in Attacks', a \in UE'_{new}, b \in E \} = \emptyset$ and $\exists E' \in \sigma(AF'), \text{ such that } E \subseteq E'$: \\
         Because it holds true that $\exists E' \in \sigma(AF'), \text{ such that } E \subseteq E'$ and because all $\sigma$-extensions are maximal conflict-free sets (w.r.t. set inclusion), from the proof of Proposition~\ref{monotony-implies-ref-dep} it follows that weak reference independence is satisfied. We have proven the proposition.
    \end{description}
\end{proof}
The proof that weak rational monotony implies weak reference independence in the case of maximal admissible set-based semantics is provided in the Appendix.
However, let us informally observe that weak rational monotony is violated by all semantics that are analyzed in this paper, with the exception of naive semantics. For this, we merely need to consider the following examples:
\begin{itemize}
    \item Complete, grounded, preferred, stable, ideal, semi-stable, eager: $AF_1 = (\{a\}, \{\})$, $AF'_1 = (\{a, b, c\}, \{(a, b), (b, c), (c, a)\})$, $AF_1 \preceq_N AF'_1$\footnote{This example can also be applied to the weakly admissible set-based semantics as introduced by Baumann \emph{et al.}~\cite{BaumannBU20} to observe that these semantics violate weak rational monotony.};
    \item Stage, CF2, Stage2: $AF_2 = (\{a, b, c\}, \{(a, b), \\ (b, c), (c, a)\})$, $AF'_2 = (\{a, b, c, d\}, \{(a, b), (b, c), (c, a), (d, c)\})$, $AF_2 \preceq_N AF'_2$.
\end{itemize}

Now, let us prove that weak reference independence does not imply weak rational monotony.
\begin{proposition}[Weak Reference Independence does not Imply Weak Rational Monotony]
    There exists an argumentation semantics $\sigma$ whose extensions are maximal conflict-free sets and argumentation frameworks $AF = (AR, Attacks), AF' = (AR', Attacks')$, such that $AF \preceq_N AF'$, such that the following statement does not hold true:
    \begin{align*}
             &{} \forall E \in \sigma(AF), \\
             &{} \quad (\text{if } (\exists E' \in \sigma(AF'), \text{such that } E' \not \subseteq AR \lor E' = E)) \\
             &{} \quad \quad  \text{then} \\
             &{} \quad (\{(a,b) \mid (a,b) \in Attacks', a \in UE'_{new}, b \in E \}=\emptyset \\
             &{} \quad \quad \text{ implies }  \exists E' \in \sigma(AF'), \text{ such that } E \subseteq E')),
    \end{align*}
    where $UE'_{new} = \bigcup_{E' \in \sigma(AF')} (E' \setminus AR)$
\end{proposition}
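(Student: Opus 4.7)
The plan is to reuse the counter-example from the immediately preceding proposition (weak reference independence does not imply weak cautious monotony). Concretely, I would take $\sigma = \sigma_{stage}$, $AF = (\{c,d,e\}, \{(c,d),(d,e),(e,c)\})$, and $AF' = (\{c,d,e,f\}, \{(c,d),(d,e),(e,c),(f,d)\})$, so that $AF \preceq_N AF'$ holds and we have already computed $\sigma_{stage}(AF) = \{\{c\},\{d\},\{e\}\}$ and $\sigma_{stage}(AF') = \{\{e,f\}\}$.

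The key observation that makes this example carry over is that $UE'_{new} = \{e,f\} \setminus \{c,d,e\} = \{f\}$, which coincides exactly with $AR' \setminus AR = \{f\}$. Thus the precondition of weak rational monotony reduces, in this particular example, to the precondition of weak cautious monotony. For the extension $E = \{c\} \in \sigma_{stage}(AF)$, the only new argument $f$ attacks $d$ but not $c$, so $\{(a,b) \mid (a,b) \in Attacks', a \in UE'_{new}, b \in E\} = \emptyset$; however, the unique extension $\{e,f\}$ of $AF'$ does not contain $c$, so there is no $E' \in \sigma_{stage}(AF')$ with $E \subseteq E'$. Weak rational monotony therefore fails.

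On the other side, weak reference independence is satisfied, since for every $E \in \sigma_{stage}(AF)$ the unique $E' = \{e,f\} \in \sigma_{stage}(AF')$ satisfies $E' \not\subseteq AR$ (because $f \notin AR$), meeting the disjunctive requirement $E' \not\subseteq AR \lor E' = E$ of the weak reference independence principle.

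There is really no genuine obstacle here: the only thing worth pausing on is the remark that $UE'_{new} = AR' \setminus AR$ in this specific framework, which is precisely what allows the previous counter-example to also witness the failure of the stricter rational monotony condition. Had the new argument $f$ not been accepted in the unique stage extension of $AF'$, we would have needed to engineer a new example in which some new accepted argument avoids attacking a given $E$ while still preventing $E$ from being extended; but since $f$ is accepted and avoids attacking $\{c\}$, the single example handles both propositions at once.
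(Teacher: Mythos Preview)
Your proposal is correct and uses exactly the same counter-example as the paper: $\sigma_{stage}$ with $AF = (\{c,d,e\}, \{(c,d),(d,e),(e,c)\})$ and $AF' = (\{c,d,e,f\}, \{(c,d),(d,e),(e,c),(f,d)\})$, verifying both that weak reference independence holds (since $\{e,f\} \not\subseteq AR$) and that weak rational monotony fails for $E = \{c\}$. Your additional remark that $UE'_{new} = AR' \setminus AR$ here, which explains why the cautious-monotony counter-example transfers, is a nice clarification the paper does not make explicit.
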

\begin{proof}
    Let us provide a proof by counter-example.
    \begin{enumerate}
        \item Let $AF = (AR, Attacks) = (\{c, d, e\}, \{(c, d), (d, e), (e, c)\})$, \\ $AF' = (\{c, d, e, f\}, \{(c, d), (d, e), (e, c), (f, d)\})$ and let us take stage semantics, denoted by $\sigma_{stage}$, as an example of a semantics whose extensions are maximal conflict-free sets. Note that $AF \preceq_N AF'$.
        \item $\sigma_{stage}(AF) = \{\{c\}, \{d\}, \{e\}\}$ and  $\sigma_{stage}(AF') = \{\{e, f\}\}$.
        \item Given $E' = \{e, f\}$ as the only extension in $\sigma(AF')$ it holds true that $E' \not \subseteq AR$; the result does not violate the weak reference independence property.
        \item However, given $E' = \{e, f\}$ as the only extension in $\sigma(AF')$ and given $E = \{c\}, E \in \sigma_{stage}(AF)$, it holds true that $\{(a,b) \mid (a,b) \in Attacks', a \in UE'_{new}, b \in E \}=\emptyset$ and $E \not \subseteq E'$, which violates weak rational monotony. Hence, we have proven the proposition.
    \end{enumerate}
\end{proof}

\subsection{Weak Reference Independence, Directionality, and SCC-Recursiveness}
\label{directionality}
Two argumentation principles that have received much attention because, colloquially speaking, they reflect the intuitive idea to traverse an argumentation graph in a \textquote{top-down} manner, are directionality and SCC-recursiveness~\cite{BARONI2007675}.
Let us highlight (and show by formal analysis) that weak reference independence does not imply directionality and SCC-recursiveness (and vice versa).
Let us first provide the definition of unattacked sets as a preliminary.
\begin{definition}[Unattacked Sets~\cite{BARONI2007675}]~\label{unattacked-sets}
Let $AF = (AR, Attacks)$ be an argumentation framework.
A set $S \subseteq AR$ is \emph{unattacked} iff $\nexists a \in AR \setminus S$ such that $a$ attacks $S$.  $US(AF)$ denotes all unattacked sets in $AF$.
\end{definition}
Now, let us provide the definition of the directionality principle
\begin{definition}[Directionality~\cite{BARONI2007675}]\label{PropertyDirectionality}
An argumentation semantics $\sigma$ is directional iff for every argumentation framework $AF = (AR, Attacks)$, for every unattacked set of arguments $U \subseteq AR$ it holds true that \\ $\sigma(AF \downarrow_{U}) = \{E \cap U | E \in \sigma(AF)\}$.
\end{definition}
Let us now introduce the SCC-recursiveness principle, on which semantics like CF2 and stage2 are based.
\begin{definition}[SCC-recursiveness~\cite{BARONI2005162}]
    Let $\sigma$ be an argumentation semantics. $\sigma$ is SCC-recursive iff $\sigma(AF) = GF(AF, Attacks)$, where for every $AF = (AR, Attacks)$, $C \subseteq AF$, $GF(AF, C) \subseteq 2^{AR}$ is defined as follows.
    For every $E \subseteq AR, E \in GF(AF, C)$ iff
    \begin{itemize}
        \item if $|SCSS(AF)| = 1, E \in BF(AR, C)$;
        \item otherwise, $\forall S \in SCSS(AF)$, $(E \cap S) \in GF(AF \downarrow_{UP_{AF}(S, E)}, U_{AF}(S, E) \cap C)$, 
    \end{itemize}
    where $BF(AF, C)$ is a function that, given an argumentation framework $AF = (AR, Attacks)$, such that $|SCSS(AF)| = 1$ and a set $C \subseteq AR$, returns a subset of $2^{A}$. 
\end{definition}
To show that weak reference independence does not imply directionality and SCC-recursiveness, let us prove that naive semantics, which does not satisfy directionality and SCC-recursiveness~\cite{van2017principle}, satisfies the weak reference independence principle.
\begin{proposition}~\label{naive-ref-dep}
    Let $\sigma_{naive}$ be naive argumentation semantics. For every $AF = (AR, Attacks), AF' = (AR', Attacks')$, such that $AF \preceq_N AF', \forall E \in \sigma_{naive}(AF)$, the following statement holds true:
    \begin{align*}
        \exists E' \in \sigma_{naive}(AF'), \text{ such that } E' \not \subseteq AR \lor E' = E 
    \end{align*}
\end{proposition}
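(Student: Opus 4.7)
The plan is to exploit the interplay between the definition of a normal expansion and the maximality (w.r.t.\ set inclusion) of naive extensions. Fix any $E \in \sigma_{naive}(AF)$; I need to produce a naive extension $E'$ of $AF'$ such that $E' \not\subseteq AR$ or $E' = E$.

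First, I would show that $E$ remains conflict-free in $AF'$. This is immediate from the definition of a normal expansion: since $AF \preceq_N AF'$, no attacks have been added between arguments of $AR$, so $Attacks \cap (E \times E) = Attacks' \cap (E \times E)$, and the conflict-freeness of $E$ in $AF$ carries over to $AF'$.

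Next, since $E$ is conflict-free in $AF'$, there exists a maximal (w.r.t.\ set inclusion) conflict-free set $E' \subseteq AR'$ with $E \subseteq E'$; that is, $E' \in \sigma_{naive}(AF')$. Now I do a case split. If $E' = E$, the conclusion holds trivially. Otherwise, $E \subsetneq E'$, so pick any $a \in E' \setminus E$. I would then argue by contradiction that $a \notin AR$: if $a$ were in $AR$, then $E \cup \{a\} \subseteq E' \cap AR$ would be conflict-free in $AF'$, and because no attacks between arguments of $AR$ differ between $AF$ and $AF'$, $E \cup \{a\}$ would also be conflict-free in $AF$, contradicting the maximality of $E$ in $\sigma_{naive}(AF)$. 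Hence $a \in AR' \setminus AR$, which gives $E' \not\subseteq AR$, as required.

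There is no serious obstacle here; the argument essentially packages two standard observations (normal expansions preserve conflict-freeness of old sets, and maximal conflict-free sets can always be extended within a larger framework). The only point that deserves care is ensuring the case split is exhaustive and that the contradiction in the second case uses the \emph{normal} nature of the expansion rather than just an arbitrary expansion — it is precisely the absence of new attacks within $AR$ that lets conflict-freeness travel back from $AF'$ to $AF$.
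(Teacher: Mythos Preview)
Your proof is correct and takes essentially the same approach as the paper: both arguments extend $E$ to a maximal conflict-free set of $AF'$ and then observe that any strictly new element must lie in $AR' \setminus AR$ by the maximality of $E$ in $AF$ together with the normality of the expansion. The only cosmetic difference is organizational---the paper case-splits on whether some $a \in AR' \setminus AR$ is compatible with $E$, whereas you first fix a maximal $E' \supseteq E$ and then case-split on $E' = E$ versus $E' \supsetneq E$---but the underlying reasoning is identical.
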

\begin{proof}
    $\forall E \in \sigma_{naive}(AF)$, $E$ is a maximal conflict-free set w.r.t. set inclusion. $\sigma_{naive}(AF')$ contains all $S \subseteq AR'$, such that $S$ is a maximal conflict-free set w.r.t. set inclusion. It follows that because $AF \preceq AF'$, we have two cases $\forall E \in \sigma_{naive}(AF)$:
    \begin{enumerate}
        \item $\exists a \in AR' \setminus AR$, such that $a$ does not attack $E$, $E$ does not attack $a$, and $a$ does not attack $a$. It follows that $E \cup \{a\}$ is a conflict-free set and hence it holds true that $\exists E' \in \sigma_{naive}(AF)$, such that $E' \not \subseteq AR$.
        \item $\nexists a \in AR' \setminus AR$, such that $a$ does not attack $E$, $E$ does not attack $a$, and $a$ does not attack $a$. It follows that $E$ is a maximal conflict-free (w.r.t. set inclusion) subset of $AR'$ and hence $\exists E' \in \sigma_{naive}(AF)$, such that $E' = E$.
    \end{enumerate}
    We have proven the proposition.
\end{proof}

From our analysis in the next section, it follows that directionality and SCC-recursiveness do not imply weak reference independence: the directional and SCC-recursive stage2 semantics does not satisfy weak reference independence.
However, the directional and SCC-recursive CF2 semantics satisfies weak reference independence, \emph{i.e.}, SCC-recursiveness and directionality do not imply violation of weak directionality.
\section{Reference Dependence in Abstract Argumentation}
\label{rational-semantics}
Let us show violation of the \emph{weak reference independence} principle of complete, stable preferred, semi-stable, as well as grounded, ideal and eager semantics (credulous and skeptical).
These observations are formalized by the following proposition.
Note that for the sake of providing more concise proposition and proofs, we denote credulous semantics (typically denoted by $\sigma$) by $\sigma^{i}$, with $i$ being the identity function, \emph{i.e.},  $\sigma^{i}(AF) =\sigma(AF)$ for every argumentation framework AF.
\begin{proposition}\label{irrational-1}
    Let $\sigma^{y}_{x}$ be an argumentation semantics, such that $x \in \{complete, stable, grounded, preferred, \\ ideal, \text{semi-stable}, eager\}$ and 
    $y \in \{\cap, i\}$. There exist argumentation frameworks $AF = (AR, Attacks)$ and $AF' = (AR', Attacks')$, such that $AF \preceq_N AF'$, for which the following statement does not hold true:
    \begin{align*}
        \forall E \in \sigma^{y}_{x}(AF), (\exists E' \in \sigma^{y}_{x}(AF'), \text{such that } (E' \not \subseteq AR \lor E' = E))
    \end{align*}
\end{proposition}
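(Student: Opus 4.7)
The plan is to disprove the universal statement for all fourteen combinations of semantics $x$ and reasoning mode $y$ via a uniform counter-example, reusing the frameworks from the paper's introduction: $AF = (\{a,b\}, \{(a,b)\})$ and $AF' = (\{a,b,c\}, \{(a,b), (b,c), (c,a)\})$. First I would verify $AF \preceq_N AF'$, which is immediate since each of the two new attacks $(b,c)$ and $(c,a)$ has exactly one endpoint in $AR = \{a,b\}$.

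Next, for every $x$ in the listed semantics and every $y \in \{i,\cap\}$, I would argue that $\sigma^{y}_{x}(AF) = \{\{a\}\}$: because $a$ is unattacked and attacks the only other argument $b$, the unique grounded, complete, preferred, stable, ideal, semi-stable, and eager extension of $AF$ is $\{a\}$, and both credulous and skeptical aggregations return this. Thus the only candidate $E$ to falsify against is $E = \{a\}$.

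The central step is to analyze $AF'$. For every $s \in \{a,b,c\}$, the unique attacker of $s$'s attacker is $s$ itself, so no non-empty conflict-free subset of $AR'$ can be admissible. This immediately yields $\sigma_{x}(AF') = \{\emptyset\}$ for $x \in \{complete, grounded, preferred, ideal, semi\text{-}stable, eager\}$, and $\sigma_{stable}(AF') = \emptyset$ because the empty set attacks nothing. In the non-stable cases the only candidate $E'$ is $\emptyset$, which satisfies $\emptyset \subseteq AR$ and $\emptyset \ne \{a\}$, so the disjunction $E' \not\subseteq AR \lor E' = E$ is false. In the credulous stable case, no $E'$ exists at all and the existential fails vacuously.

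The main obstacle is the skeptical stable case, where $\bigcap_{E \in \emptyset} E$ requires a convention. Under the reading \textquote{no stable extensions yields nothing skeptically accepted}, the argument above applies directly. Under the alternative reading where the intersection is taken to be $AR'$, a small adjustment to the counter-example suffices: take instead $AF = (\{a,b\}, \{(a,b),(b,a)\})$, for which $\sigma_{stable}(AF) = \{\{a\},\{b\}\}$ and $\sigma^{\cap}_{stable}(AF) = \{\emptyset\}$, together with $AF' = (\{a,b,c\}, \{(a,b),(b,a),(b,c),(c,a)\})$, for which the only stable extension is $\{b\}$; then with $E = \emptyset$ the only $E' = \{b\}$ satisfies $\{b\} \subseteq AR$ and $\{b\} \ne \emptyset$, so the disjunction fails. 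Beyond this bookkeeping of cases and conventions, no deeper machinery is needed: the proposition is a direct verification against well-known behavior of the listed semantics on the three-cycle.
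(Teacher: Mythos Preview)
Your proposal is correct and follows essentially the same route as the paper: the same counter-example $AF=(\{a,b\},\{(a,b)\})$, $AF'=(\{a,b,c\},\{(a,b),(b,c),(c,a)\})$, the same observation that $\sigma^y_x(AF)=\{\{a\}\}$ and that $AF'$ has only the empty admissible set (no stable extension), and the same conclusion that $\emptyset\subseteq AR$ while $\emptyset\neq\{a\}$. Your additional case analysis for $\sigma^{\cap}_{stable}$ under the empty-intersection-equals-universe convention, together with the alternative pair of frameworks you supply, is in fact more careful than the paper, which simply asserts $\sigma^y_{stable}(AF')=\{\}$ without discussing that convention.
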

\begin{proof}\label{ref-dep-p1a}
    We provide a proof by counter-example. Let us introduce the following argumentation frameworks:
    \begin{itemize}
        \item $AF = (AR, Attacks) = (\{a, b\}, \{(a, b)\})$;
        \item $AF' = (AR', Attacks') = (\{a, b, c\}, \{(a, b), (b, c), (c, a)\})$.
    \end{itemize}
    We can see that $AF \preceq_N AF'$. 
    The argumentation frameworks are resolved as follows:
    \begin{itemize}
        \item $\sigma^{y}_{x}(AF) = \{\{a\}\}$.
        \item If $x = stable$, $\sigma^{y}_{x}(AF') = \{\}$; otherwise, $\sigma^{y}_{x}(AF') = \{\{\}\}$. Note that $\{\}\ \subseteq AR \land \{\}\ \neq \{a\}$. We have proven the proposition.
    \end{itemize}
\end{proof}
The proof that skeptical stage, stage2, and CF2 semantics do not satisfy the weak reference independence principle can be provided in the same way.
\begin{proposition}\label{irrational-2}
    Let $\sigma^{\cap}_{x}$ be an argumentation semantics, such that $x \in \{ stage, stage2, CF2 \}$. There exist argumentation frameworks $AF = (AR, Attacks)$ and $AF' = (AR', Attacks')$, such that $AF \preceq_N AF'$, for which the following statement does not hold true:
    \begin{align*}
        \forall E \in \sigma^{\cap}_{x}(AF), (\exists E' \in \sigma^{\cap}_{x}(AF'),  \text{such that } (E' \not \subseteq AR \lor E' = E))
    \end{align*}
\end{proposition}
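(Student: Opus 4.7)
I plan to prove the proposition by a direct counter-example, reusing essentially the same argumentation frameworks as in the proof of Proposition~\ref{irrational-1}, namely
\begin{align*}
AF &= (\{a,b\},\{(a,b)\}), \\
AF' &= (\{a,b,c\},\{(a,b),(b,c),(c,a)\}).
\end{align*}
Clearly $AF \preceq_N AF'$, since $c$ is a fresh argument and the only added attacks are $(b,c)$ and $(c,a)$, neither of which lives entirely inside $AR=\{a,b\}$.

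The core of the argument is to compute, for each semantics $x \in \{stage, stage2, CF2\}$, the extensions of $AF$ and $AF'$ and then take their intersection. First I would observe that on $AF$ the single SCC decomposition is trivial and stage, stage2 and CF2 all yield $\{\{a\}\}$ (since $\{a\}$ is the unique naive/maximal conflict-free set whose range is maximal). Hence $\sigma^{\cap}_{x}(AF) = \{\{a\}\}$ for each of the three semantics. Then, on the 3-cycle $AF'$, I would invoke the standard observation that stage, stage2 and CF2 all produce the set of extensions $\{\{a\},\{b\},\{c\}\}$: for stage this is because each singleton in a 3-cycle is conflict-free with maximal range; for CF2 it follows from Definition~\ref{cf-2} since $AF'$ is itself a single SCC and its naive extensions are $\{\{a\},\{b\},\{c\}\}$; and for stage2 an analogous argument based on Definition~\ref{stage-2} applies. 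Taking the intersection, $\sigma^{\cap}_{x}(AF') = \{\bigcap\{\{a\},\{b\},\{c\}\}\} = \{\emptyset\}$.

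With these computations in hand, the weak reference independence condition is checked by taking $E = \{a\} \in \sigma^{\cap}_{x}(AF)$ and noting that the unique $E' \in \sigma^{\cap}_{x}(AF')$ is $\emptyset$. Since $\emptyset \subseteq AR$ and $\emptyset \neq \{a\}$, neither disjunct of $E' \not\subseteq AR \lor E' = E$ holds, which refutes the universally quantified statement and completes the proof.

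I do not expect any genuine obstacle: the example is simple and the cycle-based semantics behave as expected on a 3-cycle. The only small care needed is in justifying the equalities $\sigma_{stage2}(AF') = \sigma_{CF2}(AF') = \sigma_{stage}(AF') = \{\{a\},\{b\},\{c\}\}$, which I would do briefly by invoking the definitions of SCC-recursive semantics together with the fact that $AF'$ is itself a single strongly connected component, so the recursion bottoms out immediately in the base case.
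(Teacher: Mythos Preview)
Your proposal is correct and follows essentially the same approach as the paper's own proof, using the identical counter-example $AF = (\{a,b\},\{(a,b)\})$ and $AF' = (\{a,b,c\},\{(a,b),(b,c),(c,a)\})$ and arriving at $\sigma^{\cap}_{x}(AF)=\{\{a\}\}$, $\sigma^{\cap}_{x}(AF')=\{\emptyset\}$. Your write-up is in fact slightly more explicit than the paper's in justifying why stage, stage2, and CF2 each yield $\{\{a\},\{b\},\{c\}\}$ on the 3-cycle before intersecting, which is a welcome addition.
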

\begin{proof}\label{ref-dep-p1b}
    To provide a proof by counter-example, let us again introduce the following argumentation frameworks:
    \begin{itemize}
        \item $AF = (AR, Attacks) = (\{a, b\}, \{(a, b)\})$;
        \item $AF' = (AR', Attacks') = (\{a, b, c\}, \{(a, b), (b, c), (c, a)\})$.
    \end{itemize}
    We can see that $AF \preceq_N AF'$. 
    The argumentation frameworks are resolved as follows:
    \begin{itemize}
        \item $\sigma^{\cap}_{x}(AF) = \{\{a\}\}$.
        \item $\sigma^{\cap}_{x}(AF') = \{\{\}\}$.
    \end{itemize}
    Note that $\{\}\ \subseteq AR \land \{\}\ \neq \{a\}$. We have proven the proposition. 
\end{proof}
Note that the counter-example in the proof of Proposition~\ref{irrational-2} is analogous to the counter-example in the proof of Proposition~\ref{irrational-1}.
In words, given $AF$, $\{a\}$ is preferred over $\{\}$ and given $AF'$, $\{\}$ is preferred over $\{a\}$. Hence, the semantics violate the \emph{reference independence} principle; adding an element $c$ to the set of elements $\{a, b\}$ can affect the preference relation an agent has on elements in $2^{\{a, b\}}$.
For credulous $stage$ and $stage2$ semantics, another example proves violation of the weak reference independence principle. 
\begin{proposition}\label{irrational-3}
    Let $\sigma^{i}_{x}$ be an argumentation semantics, such that $x \in \{ stage, stage2 \}$. There exist argumentation frameworks $AF = (AR, Attacks)$ and $AF' = (AR', Attacks')$, such that $AF \preceq_N AF'$, for which the following statement does not hold true:
    \begin{align*}
        \forall E \in \sigma^{i}_{x}(AF), (\exists E' \in \sigma^{i}_{x}(AF'),  \text{such that } (E' \not \subseteq AR \lor E' = E))
    \end{align*}
\end{proposition}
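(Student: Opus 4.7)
The plan is to give a proof by counter-example, similar in spirit to the proof of Proposition~\ref{irrational-2}, but noting that the 3-cycle framework used there no longer suffices in the credulous setting: for credulous stage and stage2 the 3-cycle admits each singleton $\{a\}, \{b\}, \{c\}$ as an extension, so the extension $\{a\}$ of $AF$ is recovered and weak reference independence is satisfied. I therefore need a stronger construction in which some extension of $AF$ is irretrievably lost in $AF'$, while no extension of $AF'$ uses any newly added argument.

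I would take $AF = (\{a, b\}, \{(a, b), (b, a)\})$, whose single SCC and symmetric attacks give $\sigma^{i}_{stage}(AF) = \sigma^{i}_{stage2}(AF) = \{\{a\}, \{b\}\}$. Then I would define the normal expansion $AF' = (\{a, b, c\}, \{(a, b), (b, a), (a, c), (c, b), (c, c)\})$; the new attacks all involve $c$, so $AF \preceq_N AF'$. Because $c$ is self-attacking, the conflict-free sets of $AF'$ are still exactly $\emptyset, \{a\}, \{b\}$. Their ranges in $AF'$ are $\emptyset$, $\{a, b, c\}$, and $\{a, b\}$ respectively, so $\sigma_{stage}(AF') = \{\{a\}\}$ and the extension $\{b\}$ of $AF$ is lost. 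For stage2, I would check that $AF'$ is strongly connected via the cycle $a \to c$, $c \to b$, $b \to a$ (combined with the mutual attacks between $a$ and $b$), so $|SCCS_{AF'}| = 1$ and $\sigma_{stage2}(AF') = \sigma_{stage}(AF') = \{\{a\}\}$.

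Picking $E = \{b\} \in \sigma^{i}_{x}(AF)$ for $x \in \{stage, stage2\}$, the unique candidate $E' \in \sigma^{i}_{x}(AF')$ is $\{a\}$, which satisfies $\{a\} \subseteq AR$ and $\{a\} \neq \{b\}$, contradicting the existential statement in the proposition. The delicate design choice is the attack $(c, b)$: without it, $\{c\}$ forms its own SCC and the stage2 recursion recovers both $\{a\}$ and $\{b\}$ (because on the restriction to $\{a, b\}$ the two singletons are still stage extensions), so the counter-example would no longer apply to stage2. The attack $(c, b)$ is precisely what collapses the SCC structure of $AF'$ into a single component, forcing stage2 to coincide with stage and allowing a single framework to serve as a counter-example for both semantics simultaneously.
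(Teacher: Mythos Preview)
Your proof is correct and follows essentially the same approach as the paper: both start from $AF = (\{a,b\},\{(a,b),(b,a)\})$ and add a self-attacking argument $c$ so that exactly one of $\{a\},\{b\}$ attains full range in $AF'$, with the extra attack from $c$ into $\{a,b\}$ ensuring a single SCC so that stage2 coincides with stage. The paper's specific expansion is $AF' = (\{a,b,c\},\{(a,b),(b,a),(b,c),(c,b),(c,c)\})$, which is the mirror image of yours (it singles out $\{b\}$ rather than $\{a\}$), and the lost extension is $E = \{a\}$ instead of $E = \{b\}$; your additional commentary on why the $3$-cycle fails and why the attack $(c,b)$ is needed for the SCC collapse is accurate and adds useful insight not spelled out in the paper.
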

\begin{proof}\label{ref-dep-p1c}
    To provide a proof by counter-example, let us again introduce the following argumentation frameworks:
    \begin{itemize}
        \item $AF = (AR, Attacks) = (\{a, b\}, \{(a, b), (b, a)\})$;
        \item $AF' = (AR', Attacks') = (\{a, b, c\}, \{(a, b), (b, a), (b, c),  (c, b),  (c, c)\})$.
    \end{itemize}
    We can see that $AF \preceq_N AF'$. 
    The argumentation frameworks are resolved as follows:
    \begin{itemize}
        \item $\sigma^{i}_{x}(AF) = \{\{a\}, \{b\}\}$.
        \item $\sigma^{i}_{x}(AF') = \{\{b\}\}$.
    \end{itemize}
    Note that $\{b\}\ \subseteq AR \land \{b\}\ \neq \{a\}$. We have proven the proposition. 
\end{proof}
Figure~\ref{fig:examples-rat-mon-3} shows the frameworks that the proof uses.
\begin{figure}
    \subfloat[$AF$.]{
        \begin{tikzpicture}[
            noanode/.style={dashed, circle, draw=black!60, minimum size=10mm, font=\bfseries},
            unanode/.style={circle, draw=black!60, minimum size=10mm, font=\bfseries},
            anode/.style={circle, fill=lightgray, draw=black!60, minimum size=10mm, font=\bfseries},
            ]
            \node[unanode]    (a)    at(0,4)  {a};
            \node[unanode]    (b)    at(0,2)  {b};
            \path [->, line width=1mm]  (a) edge node[left] {} (b);
            \path [->, line width=1mm]  (b) edge node[left] {} (a);
        \end{tikzpicture}
    }
    \hspace{50pt}
    \centering
    \subfloat[$AF'$]{
        \begin{tikzpicture}[
            noanode/.style={dashed, circle, draw=black!60, minimum size=10mm, font=\bfseries},
            anode/.style={circle, fill=lightgray, draw=black!60, minimum size=10mm, font=\bfseries},
            ]
            \node[noanode]    (a)    at(0,4)  {a};
            \node[anode]    (b)    at(0,2)  {b};
            \node[noanode]    (c)    at(2,2)  {c};
            \path [->, line width=1mm]  (a) edge node[left] {} (b);
            \path [->, line width=1mm]  (b) edge node[left] {} (a);
            \path [->, line width=1mm]  (b) edge node[left] {} (c);
             \path [->, line width=1mm]  (c) edge node[left] {} (b);
            \draw[->, line width=1mm] (c.+90) arc (180:180-264:4mm);
        \end{tikzpicture}
    }
\caption{Inconsistent preferences: Given the extensions $\sigma_{stage}(AF) = \{\{a\}, \{b\}\}$, an agent can \textquote{pick} the extension $\{a\}$, which then implies the preference $\{ a \} \preceq \{ b \}$, while $\sigma_{stage}(AF')$ implies the preference $\{ b \} \preceq \{ a \}$.}
\label{fig:examples-rat-mon-3}
\end{figure}
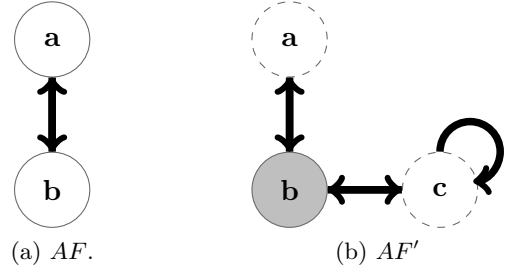
Finally, let us prove that CF2 semantics is weakly reference independent. Note that we use the notion of \emph{attack sequences} (see Definition~\ref{attack-seq}) in the proof.
\begin{proposition}~\label{theorem-stage}
Let $\sigma_{CF2}$ be CF2 semantics.
    For every two argumentation frameworks $AF = (AR, Attacks)$ and $AF' =  (AR', Attacks')$, such that $AF \preceq_N AF'$, the following statement holds true:
    \begin{align*}
        \forall E \in \sigma_{CF2}(AF),  (\exists E' \in \sigma_{CF2}(AF'),  \text{such that } (E' \not \subseteq AR \lor E' = E))
    \end{align*}
\end{proposition}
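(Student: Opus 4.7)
The plan is to prove the claim by induction on $n = |AR' \setminus AR|$, reducing the general case to a single-argument expansion. The base case $n=0$ is immediate: a normal expansion with $AR = AR'$ forbids any new attack, so $AF' = AF$ and $E' := E$ works. For the inductive step with $n \geq 1$, fix $c \in AR' \setminus AR$ and form the intermediate framework
\[
AF'' = \bigl(AR \cup \{c\},\; Attacks \cup \{(x,y) \in Attacks' : x, y \in AR \cup \{c\}\}\bigr).
\]
Then $AF \preceq_N AF''$ adds a single argument while $AF'' \preceq_N AF'$ adds $n-1$ arguments. Applying the single-argument case from $AF$ to $AF''$ yields $E'' \in \sigma_{CF2}(AF'')$ with $E'' = E$ or $E'' \not\subseteq AR$; the inductive hypothesis from $AF''$ to $AF'$ yields $E' \in \sigma_{CF2}(AF')$ with $E' = E''$ or $E' \not\subseteq AR \cup \{c\}$; a routine case split gives $E' = E$ or $E' \not\subseteq AR$.

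For the single-argument case ($AR' = AR \cup \{c\}$), the key structural observation is that the SCCs of $AF'$ split into two kinds: type-(i) SCCs $S \subseteq AR$ that coincide with SCCs of $AF$---since any old-only SCC of $AF'$ must be mutually reachable without passing through $c$ (otherwise maximality of SCCs would force $c$ into it)---and at most one type-(ii) SCC $S^*$ containing $c$ together with all SCCs of $AF$ fused by cycles through $c$. I would construct a candidate $E' \in \sigma_{CF2}(AF')$ by processing the SCCs of $AF'$ in topological order: at each type-(i) SCC $S$ set $E' \cap S := E \cap S$, and at the type-(ii) SCC $S^*$ extend $E \cap (S^* \setminus \{c\})$ to a maximal conflict-free subset of $AF' \downarrow_{UP_{AF'}(S^*, E')}$, adjoining $c$ precisely when conflict-freeness permits. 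If $c$ is admitted then $E' \not\subseteq AR$; otherwise $E' = E$.

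The main obstacle is verifying that the constructed $E'$ satisfies Definition~\ref{cf-2}. For each type-(i) SCC $S$ of $AF'$ one must show $E \cap S \in \sigma_{CF2}(AF' \downarrow_{UP_{AF'}(S, E')})$, which via $E \in \sigma_{CF2}(AF)$ reduces to the identity $UP_{AF'}(S, E') = UP_{AF}(S, E)$ (together with the observation that, because these $UP$-sets lie in $AR$ and $Attacks' \cap (AR \times AR) = Attacks$, the restricted subframeworks coincide). This identity requires a delicate bookkeeping across the $D$-, $P$-, and $U$-sets along the topological order, exploiting that $c$'s attacks affect only $S^*$ and that $E'$ agrees with $E$ on the old part of $S^*$, so downstream defeats induced by upstream choices in $AF'$ mirror those in $AF$. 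For the type-(ii) SCC $S^*$ and its recursive sub-restrictions one argues that the maximal conflict-free extension we built is preserved by the CF2 recursion, ultimately because the recursion within $S^*$ terminates in naive extensions on single-SCC sub-restrictions, where maximal conflict-freeness is exactly the CF2 criterion. The confinement of $c$'s influence to $S^*$, together with the maximal conflict-freeness of $E$ on every SCC of $AF$, is the essential ingredient that closes the argument.
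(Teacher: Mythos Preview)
Your approach differs substantially from the paper's. The paper gives a terse argument based on attack sequences: it asserts (without an explicit construction) that for each $E \in \sigma_{CF2}(AF)$ one can find $E' \in \sigma_{CF2}(AF')$ such that any SCC of $AF$ on which $E$ and $E'$ disagree is reachable, via an attack sequence in $AF'$, from some new argument lying in $E'$; from this $E' \not\subseteq AR$ follows. Your reduction by induction on $|AR' \setminus AR|$ to the single-argument case is a clean decomposition, and the type-(i)/type-(ii) SCC split is the right structural observation for that case.

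The gap in your argument is at the type-(ii) SCC $S^*$. You set $E' \cap S^*$ by extending $E \cap (S^* \setminus \{c\})$ to a maximal conflict-free subset of $AF'_* := AF' \downarrow_{UP_{AF'}(S^*, E')}$. This produces a \emph{naive} extension of $AF'_*$, but Definition~\ref{cf-2} demands a \emph{CF2} extension of $AF'_*$, and these differ once $AF'_*$ has more than one SCC---which readily happens, since deleting $D_{AF'}(S^*,E')$ can break the single SCC $S^*$ into several pieces. On the three-node chain $(\{x,y,z\},\{(x,y),(y,z)\})$, for instance, $\{y\}$ is naive but not CF2; nothing in your sketch rules out $AF'_*$ having this shape with $E \cap S^*$ playing the role of $\{y\}$. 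Your remark that ``the recursion within $S^*$ terminates in naive extensions on single-SCC sub-restrictions'' is true of the leaves of the recursion but does not show that the particular maximal conflict-free set you built is the one the recursion selects. Closing this requires a genuine inner induction: one must observe that $AF \downarrow_{UP_{AF'}(S^*,E') \cap AR} \preceq_N AF'_*$ is again a normal single-argument expansion, verify that $E \cap S^*$ is a CF2 extension of the left-hand restriction (itself nontrivial, because $UP_{AF'}(S^*,E') \cap AR$ strictly contains $\bigcup_i UP_{AF}(S_i,E)$ whenever $E$ attacks across the merged $S_i$'s), and then re-invoke the whole argument on a strictly smaller framework. Also, the phrase ``$c$'s attacks affect only $S^*$'' is false as written---$c$ may well attack downstream type-(i) SCCs; what you actually need (and what suffices) is that when $c \notin E'$ such attacks leave the $D$-sets unchanged.
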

\begin{proof}\label{ref-dep-p3}
    \phantom{eee}
    \begin{enumerate}
        \item In the proof of Proposition~\ref{naive-ref-dep}, we have shown that naive semantics satisfies weak reference independence. It follows that by definition of CF2 semantics as an SCC-recursive and naive-based semantics, $\forall E \in \sigma_{CF2}(AF), \exists E' \in \sigma_{CF2}(AF')$, such that the following statement holds true:
        \begin{align*}
            &{} \text{if } \exists SCC \in SCCS_{AF}, \text{ such that } (E \cap SCC) \neq (E' \cap SCC), \\
            &{} \text{then }  \exists V_{a, b} \in {\cal V}(AF'), \\ 
            &{} \quad \text{ such that } a \in AR' \setminus AR, a \in E',  b \in (E \cap SCC), V_{b, a} \not \in {\cal V}(AF')
        \end{align*}
        \item Now, let us observe that $\forall E' \in \sigma_{CF2}(AF')$, (if $(\exists V_{c, d} \in {\cal V}(AF'), V_{c, d} \not \in {\cal V}(AF), d \in AR', c \in AR' \setminus AR, c \in E')$ then $E' \not \subseteq AR$). Consequently, from 1. it follows that $\forall E \in \sigma_{CF2}(AF), \exists E' \in \sigma_{CF2}(AF')$ such that the following statement holds true:
        \begin{align*}
            &{} \forall SCC \in SCCS_{AF}, (E \cap SCC) = (E' \cap SCC) \\
            &{} \lor E' \not \subseteq AR
        \end{align*}
        \item By definition of CF2 semantics as an SCC-recursive, naive-based semantics, given $E \in \sigma_{CF2}(AF)$, if $(\exists E' \in \sigma_{CF2}(AF')$ such that $\forall SCC \in SCCS_{AF}, (E \cap SCC) = (E' \cap SCC))$ then $E = E'$ or $E' \not \subseteq AR$. Consequently, it follows from 2. that $\forall E \in \sigma_{CF2}(AF), \exists E' \in \sigma_{CF2}(AF')$ such that the following statement holds true:
        \begin{align*}
            E = E' \lor E' \not \subseteq AR
        \end{align*}
        We have proven the proposition.
    \end{enumerate}
\end{proof}

\section{Cycles and Reference Independence}
\label{rational-expansion}
In the previous section we have shown that only one of the evaluated argumentation semantics -- CF2 semantics -- satisfies the weak reference independence principle.
In order to guarantee economic rationality, and in particular reference independence, it is relevant to look beyond argumentation semantics.
At first glance, it is striking that the example expansions in the proofs of Propositions~\ref{irrational-1},~\ref{irrational-2}, and~\ref{irrational-3} add new cycles to the argumentation graphs.
Consequently, we examine if an argumentation framework expansion can be defined that can guarantee compliance with the rational man's argumentation principle by further restricting the relationship of two argumentation frameworks $AF \preceq_N AF'$.
For this, we first introduce a definition of cycles in the context of argumentation frameworks.
\begin{definition}[Attack Cycles in Argumentation Frameworks]
    An attack cycle $C$ in an argumentation framework $AF = (AR, Attacks)$ is an argument sequence $\langle a_1, a_2, ..., a_{n-1}, a_n \rangle$ such that $(a_{i}, a_{i+1}) \in Attacks$, $a_i \in AR$, $1 \leq i \leq n-1$, $a_1 = a_n$, and $|AR^{V}_{a_1, a_n}| = n-1$. ${\cal C}(AF)$ denotes all attack cycles of AF and $AR^{C}$ denotes the arguments that occur in an attack cycle $C$.
\end{definition}
Now, we define the concept of a \emph{non-cyclic expansion}.
\begin{definition}[Non-Cyclic Expansion]
    A non-cyclic expansion of two argumentation frameworks $AF = (AR, Attacks)$ and $AF' = (AR', Attacks')$ (denoted by $AF \preceq_{NC} AF'$) is an expansion $AF \preceq_E AF'$, for which it holds true that ${\cal C}(AF') = {\cal C}(AF)$.
    
\end{definition}
\noindent This allows us to define the \emph{rational man's expansion}.
\begin{definition}[Rational Man's Expansion]
    A rational man's argumentation expansion of two argumentation frameworks $AF = (AR, Attacks)$ and $AF' = (AR', Attacks')$ (denoted by $AF \preceq_{RM} AF'$) is an expansion $AF \preceq_E AF'$, for which the following conditions hold true:
    \begin{enumerate}
        \item $AF \preceq_{N} AF'$;
        \item $AF \preceq_{NC} AF'$;
        \item $\forall a \in AR, b \in AR' \setminus AR$, such that $b$ is reachable from $a$, it holds true that $\forall C \in {\cal C}(AF'), a \not \in AR^{'C}$.
    \end{enumerate}
\end{definition}
Colloquially speaking, a rational man's expansion is a normal expansion in which \emph{i)} no additional cycles are added to the initial argumentation framework and no additional arguments are added to existing cycles, and \emph{ii)} no newly added arguments are reachable from cycles.

We prove that the \emph{rational man's expansion} guarantees weak reference independence, given complete, preferred, stage2, CF2, grounded, ideal, and eager argumentation semantics.
The proof relies on the observation that given any argumentation framework, each argument that is in a strongly admissible set is by definition always also in at least one extension of any of the aforementioned argumentation semantics.

\begin{proposition}
    Let $\sigma_{x}$ be an argumentation semantics, such that $x \in \{complete, preferred, semi-stable, stage, \\ stage2, CF2, grounded, ideal, eager\}$.
    For every two argumentation frameworks $AF = (AR, \\ Attacks)$ and $AF' = (AR',  Attacks')$, such that $AF \preceq_{RM} AF'$ the following statement holds true:
    \begin{align*}
        \forall E \in \sigma_x(AF), \exists E' \in \sigma_x(AF')
        \text{ such that } E' \not \subseteq AR \lor E' = E
    \end{align*}
\end{proposition}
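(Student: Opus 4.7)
The approach is to exploit three structural consequences of $AF \preceq_{RM} AF'$. Letting $N := AR' \setminus AR$, I would first observe that: (i) by non-cyclicity, ${\cal C}(AF') = {\cal C}(AF)$, so the sub-framework $AF' \downarrow_N$ is a DAG and in fact no cycle of $AF'$ meets $N$; (ii) by condition~(3), no argument of $AR$ lying on a cycle of $AF$ has any forward path in $AF'$ into $N$; (iii) consequently, the cyclic core of $AR$ is ``insulated forward'' from $N$, and $AF'$ looks like $AF$ with an acyclic appendage attached only to the cycle-free portion of $AR$.

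Given $E \in \sigma_x(AF)$, the plan is to construct $E' \in \sigma_x(AF')$ whose restriction to $AR$ is either exactly $E$, giving $E' = E$, or strictly contains some element of $N$, giving $E' \not\subseteq AR$. I would split into two subcases. In the easy subcase, no $b \in N$ attacks $E$ in $AF'$: then $E$ is conflict-free and defended in $AF'$, and I would extend $E$ greedily by traversing $AF' \downarrow_N$ in topological order, admitting each new argument $b$ into the working set whenever $b$ is acceptable with respect to it (for the admissibility-based semantics complete/preferred/semi-stable/grounded/ideal/eager) or whenever adding $b$ preserves conflict-freeness and enlarges the range (for the range/naive-based semantics stage/stage2/CF2). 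Because $N$ is acyclic and decisions taken on $N$ cannot loop back into the cyclic core of $AR$, this traversal terminates with a set whose restriction to $AR$ is still $E$; a per-semantics verification then shows this set is a $\sigma_x$-extension of $AF'$, using that the extra maximality or minimality conditions (maximality among admissible sets, of range, minimality of a complete extension, etc.) are determined separately on the cyclic core of $AR$ (unchanged) and on the acyclic appendage (where the greedy choice is forced).

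The hard subcase is when some $b \in N$ attacks $E$ in $AF'$, so $E$ is no longer admissible. Here I would invoke the hint preceding the proposition: the attackers of $E$ from $N$ sit entirely inside the acyclic appendage, so the iterative ``ground-up'' construction of the maximal strongly admissible set of $AF'$ reaches and incorporates at least one such new attacker (or a new argument defending one). Since each argument of a strongly admissible set of $AF'$ is contained in at least one extension of every semantics in the list, this produces the required $E' \in \sigma_x(AF')$ with $E' \not\subseteq AR$, regardless of whether $E' \cap AR$ relates to $E$ in any nice way. The most delicate part of the case analysis will be stage2 and CF2, because their SCC-recursive definition has to be unfolded; however the SCC partition of $AF'$ refines that of $AF$ only by adjoining singleton SCCs drawn from $N$ (since ${\cal C}(AF') = {\cal C}(AF)$ forbids any new non-trivial SCC), so the recursion can be carried out along this refinement and reduced to the naive-based base case handled in the easy subcase.
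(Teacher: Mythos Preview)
Your hard subcase—invoking strongly admissible sets to produce an $E'\not\subseteq AR$—is exactly the paper's argument, and is the correct core idea. The gap is in your easy subcase: the greedy traversal of $N$ need not terminate in a $\sigma_x$-extension of $AF'$, because adding new arguments can render \emph{old} arguments in $AR\setminus E$ acceptable, which completeness- and maximality-based semantics must then include. Take $AF=(\{b,c\},\{(c,b),(c,c)\})$, whose grounded (and preferred, complete, ideal, eager, semi-stable) extension is $E=\{\}$, and $AF'=(\{b,c,d\},\{(c,b),(c,c),(d,c)\})$; one checks $AF\preceq_{RM}AF'$. No element of $N=\{d\}$ attacks $E$, so you are in the easy subcase, and your traversal yields $\{d\}$. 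But $\{d\}$ is not complete in $AF'$: the unique attacker $c$ of $b$ is now attacked by $d$, so $b$ is acceptable w.r.t.\ $\{d\}$ and must be included. The unique grounded/preferred extension of $AF'$ is $\{d,b\}$, whose restriction to $AR$ is $\{b\}\neq E$. Your ``cyclic core of $AR$ vs.\ acyclic appendage $N$'' decomposition overlooks precisely the acyclic portion of $AR$, which $N$ can activate.

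The proposition is not violated here—$\{d,b\}\not\subseteq AR$ still witnesses it—but your construction does not establish it. The paper sidesteps the problem by arguing by contradiction without any explicit construction: if every $E'\in\sigma_x(AF')$ were contained in $AR$ and differed from $E$, then in particular no argument of $N$ would lie in any extension; but conditions (2) and (3) of $\preceq_{RM}$ place every element of $N$ outside the reach of any cycle of $AF'$, so some new argument is in a strongly admissible set and hence in some $\sigma_x$-extension, a contradiction. In effect this absorbs your easy subcase into the strongly-admissible argument you already deploy in the hard subcase, and avoids the need to verify, semantics by semantics, that a hand-built set is a $\sigma_x$-extension.
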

\begin{proof}
We provide a proof by contradiction.
    \begin{enumerate}
        \item Let us suppose the following statement holds true:
    \begin{align*}
        \exists E \in \sigma_x(AF), \text{ such that } \forall E' \in \sigma_x(AF')
        \text{ it holds true that } E' \subseteq AR \land E' \neq E
    \end{align*}
    \item Let us observe that by definition of $\sigma_x$ and because $AF \preceq_{RM} AF'$ implies $\forall a \in AR' \setminus AR$, $\forall C \in {\cal C}(AF')$, $a \not \in C$ and a is not reachable from any $c \in AR'_C$, it follows from \emph{1.} that the following statement holds true:
    \begin{align*}
        &{} \exists b \in AR' \setminus AR \text{ such that } b \text{ attacks } AR \text{ and } \\
        &{} b \in S \subseteq AR', \text{ such that } S \text{ is strongly admissible and } \forall E' \in \sigma(AF'), b \not \in E'
    \end{align*}
    \item By definition of $\sigma_x$, iff $\exists c \in AR'$, such that $c \in S \subseteq AR'$ and $S$ is strongly admissible, then $\exists E' \in \sigma(AF'), c \in E'$.
    Hence, $\exists E' \in \sigma_{x}(AF')$, such that $b \in E'$. Note that $b \in AR' \setminus AR$. Contradiction.
    \end{enumerate}
\end{proof}

Let us note that allowing for expansions that add \emph{even} cycles to an argumentation framework cannot guarantee reference independence in the case of many argumentation semantics.
For example, given $AF = (\{a, b\}, \{(a, b)\})$ and its expansion $AF' = (\{a, b, c, d\}, \{(a, b), (b, c), \\(c, d), (d, a)\}$, grounded, ideal, and eager semantics, as well as the skeptical reasoning modes of complete, preferred, semi-stable, stage, stage2, and CF2 semantics return $\{\{a\}\}$ for $AF$ and $\{\{\}\}$ for $AF'$, as depicted in Figure~\ref{fig:cycle-examples-1}.
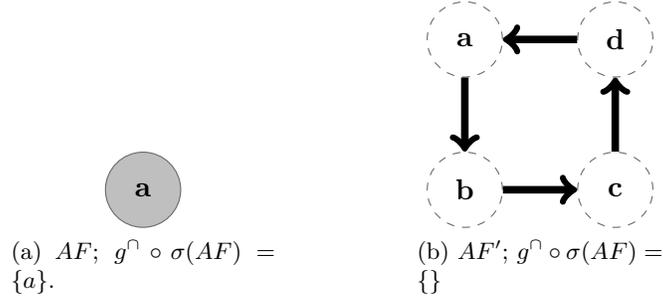
\begin{figure}
    \subfloat[$AF$; $g^{\cap} \circ \sigma(AF) = \{a\}$.]{
        \begin{tikzpicture}[
            noanode/.style={dashed, circle, draw=black!60, minimum size=10mm, font=\bfseries},
            anode/.style={circle, fill=lightgray, draw=black!60, minimum size=10mm, font=\bfseries},
            ]
            \node[draw=none]    (NV1)    at(0,4)  {};
            \node[anode]    (A)    at(1.5,4)  {a};
            \node[draw=none]    (NV2)    at(3,4)  {};
        \end{tikzpicture}
    }
    \hspace{50pt}
    \centering
    \subfloat[$AF'$; $g^{\cap} \circ \sigma(AF) = \{\}$]{
        \begin{tikzpicture}[
            noanode/.style={dashed, circle, draw=black!60, minimum size=10mm, font=\bfseries},
            anode/.style={circle, fill=lightgray, draw=black!60, minimum size=10mm, font=\bfseries},
            ]
            \node[noanode]    (A)    at(0,2)  {a};
            \node[noanode]    (B)    at(0,0)  {b};
            \node[noanode]    (C)    at(2,0)  {c};
            \node[noanode]    (D)    at(2,2)  {d};
            \path [->, line width=1mm]  (A) edge node[left] {} (B);
            \path [->, line width=1mm]  (B) edge node[left] {} (C);
            \path [->, line width=1mm]  (C) edge node[left] {} (D);
            \path [->, line width=1mm]  (D) edge node[left] {} (A);
        \end{tikzpicture}
    }
\caption{Importance of absence of even-length cycles, given, \emph{e.g.}, any of the surveyed semantics and skeptical acceptance. $AF \not \preceq_{RM} AF'$.}
\label{fig:cycle-examples-1}
\end{figure}
Also, it is not sufficient that only cycles that include at least one argument $arg \in AR$ and at least one argument $arg' \in AR' \setminus AR$ are not allowed in an expansion $AF \preceq_{NC} AF'$.
This can be shown by introducing the following example. Given $AF = (\{a\}, \{\})$ and $AF' = (\{a, b, c, d\}, \{(b, a), (c, b), (d, c), (b, d)\})$, grounded, ideal, and eager semantics, as well as \emph{skeptical} complete, preferred, semi-stable, stage, stage2, and CF2 semantics return $\{\{a\}\}$ for $AF$ and $\{\{\}\}$ for $AF'$, as depicted in Figure~\ref{fig:cycle-examples-2}.
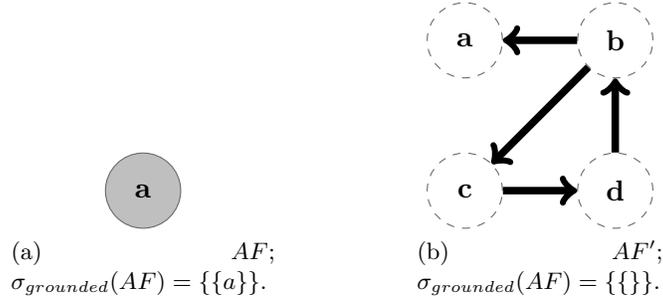
\begin{figure}
    \subfloat[$AF$; $\sigma_{grounded}(AF) = \{\{a\}\}$.]{
        \begin{tikzpicture}[
            noanode/.style={dashed, circle, draw=black!60, minimum size=10mm, font=\bfseries},
            anode/.style={circle, fill=lightgray, draw=black!60, minimum size=10mm, font=\bfseries},
            ]
            \node[draw=none]    (NV1)    at(0,4)  {};
            \node[anode]    (A)    at(1.5,4)  {a};
            \node[draw=none]    (NV2)    at(3,4)  {};
        \end{tikzpicture}
    }
    \hspace{50pt}
    \centering
    \subfloat[$AF'$; $\sigma_{grounded}(AF) = \{\{\}\}$.]{
        \begin{tikzpicture}[
            noanode/.style={dashed, circle, draw=black!60, minimum size=10mm, font=\bfseries},
            anode/.style={circle, fill=lightgray, draw=black!60, minimum size=10mm, font=\bfseries},
            ]
            \node[noanode]    (A)    at(0,2)  {a};
            \node[noanode]    (B)    at(2,2)  {b};
            \node[noanode]    (C)    at(0,0)  {c};
            \node[noanode]    (D)    at(2,0)  {d};
            \path [->, line width=1mm]  (B) edge node[left] {} (A);
            \path [->, line width=1mm]  (B) edge node[left] {} (C);
            \path [->, line width=1mm]  (C) edge node[left] {} (D);
            \path [->, line width=1mm]  (D) edge node[left] {} (B);
        \end{tikzpicture}
    }
\caption{Importance of absence of cycles among arguments $Args \subseteq (AR' \setminus AR)$, given, \emph{e.g.} grounded semantics. $AF \not \preceq_{RM} AF'$.}
\label{fig:cycle-examples-2}
\end{figure}
Given the presented findings, it is obvious that if \emph{expansion} and \emph{deletion} (the removal of arguments) of an argumentation framework are allowed in any argumentation scenario, cycles should be avoided altogether.
For example, assuming preferred semantics, the argumentation framework $AF = (\{a, b, c\}, \{(a, b), (b, c), (c, a)\})$ implies (among others) the preference $\{ \} \succeq \{ a \}$. Removing the argument $c$ from $AF$ gives us $AF' = (\{a, b\}, \{(a, b)\}) $, with $AF' \preceq_{N} AF$.
As $AF'$ implies (among others) $\{ a \} \succeq \{ \}$, the preference relation implied by $AF$ is inconsistent with the preference relation implied by $AF'$.
While it is clear that it is, in many scenarios, impractical to avoid cycles (or cyclic expansions) altogether, the findings highlight the importance of further research on \textquote{loop-busting} methods to ensure reference independence.

\section{Rational Man's Argumentation, Belief Revision, and Dialogues}
\label{dialoges}
To highlight the relevance of the presented research, this section provides two examples that illustrate how the newly established principles and expansions can be applied.

\begin{example}[Argumentation dialogues]
    In a multi-agent scenario, let us assume we have a decision-maker agent ${\cal A}_1$ that receives advise from a consultant agent ${\cal A}_2$.
    In this context, ${\cal A}_1$ presents its argumentation framework $AF = (AR, Attacks)$ to ${\cal A}_2$, who then proposes changes by providing $AF' = (AR', Attacks')$, with $AF'$ being a normal expansion of $AF$ ($AF \preceq_N AF'$).
    Subsequently, ${\cal A}_1$ can accept or reject the changes.
    ${\cal A}_2$ can propose two types of changes:
    \begin{description}
        \item[Set-expanding changes.] ${\cal A}_2$ only shows ${\cal A}_1$ that additional options to choose from exist and how these options should be integrated into the argumentation framework.
        \item[Belief-revising changes.] ${\cal A}_2$ advises ${\cal A}_1$ to change its beliefs about the decision options contained in $AF$ and may in addition propose set-expanding changes.
    \end{description}
    ${\cal A}_2$ might want to deceive ${\cal A}_1$ by proposing changes that ${\cal A}_2$ labels as \emph{set-expanding} but that are also \emph{belief-revising}.
    For example, ${\cal A}_1$ presents the following argumentation framework to ${\cal A}_2$:
    \begin{align*}
        AF = (\{a, b, c\}, \{(b, a), (c, a)\}))
    \end{align*}
    Then, ${\cal A}_2$ proposes the following:
    \begin{align*}
        AF' = (\{a, b, c, d, e, f\}, \{(b, a), (c, a), (d, c), (e, d) (f, e) (d, f)\}))
    \end{align*}
    If ${\cal A}_2$ labels this proposal as \emph{set-expanding}, ${\cal A}_2$ is deceiving ${\cal A}_1$ to think that ${\cal A}_1$ is barely considering new options and not revising the assessment of the existing options.
    However, given the work presented above, ${\cal A}_1$ can first detect that $AF \npreceq_{RM} AF'$ ($AF \npreceq_{NC} AF'$) and then check if its preferences over the initial arguments in $AF$ are still consistent, given the decision that is made based on $AF'$. As this is not the case, ${\cal A}_1$ can, for example, decide to stick to the initial decision that it made based on $AF$.
    Figure~\ref{fig:final-example-2} depicts the argumentation graphs of $AF$ and $AF'$.
    \begin{figure}
        \subfloat[$AF$; $g^{\cap} \circ \sigma(AF) = \{b, c\}$.]{
            \begin{tikzpicture}[
                noanode/.style={dashed, circle, draw=black!60, minimum size=10mm, font=\bfseries},
                anode/.style={circle, fill=lightgray, draw=black!60, minimum size=10mm, font=\bfseries},
                ]
                \node[draw=none]    (NV1)    at(-0,5,4)  {};
                \node[noanode]    (A)    at(0,2)  {a};
                \node[anode]    (B)    at(0,4)  {b};
                \node[anode]    (C)    at(2,4)  {c};
                \node[draw=none]    (NV2)    at(2.5,4)  {};
                \path [->, line width=1mm]  (B) edge node[left] {} (A);
                \path [->, line width=1mm]  (C) edge node[left] {} (A);
            \end{tikzpicture}
        }
        \hspace{50pt}
        \centering
        \subfloat[$AF'$; $g^{\cap} \circ \sigma(AF') = \{b\}$.]{
            \begin{tikzpicture}[
                noanode/.style={dashed, circle, draw=black!60, minimum size=10mm, font=\bfseries},
                anode/.style={circle, fill=lightgray, draw=black!60, minimum size=10mm, font=\bfseries},
                ]
                \node[noanode]    (A)    at(0,2)  {a};
                \node[anode]    (B)    at(0,4)  {b};
                \node[noanode]    (C)    at(2,4)  {c};
                \node[noanode]    (D)    at(2,2)  {d};
                \node[noanode]    (E)    at(4,2)  {e};
                \node[noanode]    (F)    at(4,4)  {f};
                \path [->, line width=1mm]  (B) edge node[left] {} (A);
                \path [->, line width=1mm]  (C) edge node[left] {} (A);
                \path [->, line width=1mm]  (D) edge node[left] {} (C);
                \path [->, line width=1mm]  (E) edge node[left] {} (D);
                \path [->, line width=1mm]  (F) edge node[left] {} (E);
                \path [->, line width=1mm]  (D) edge node[left] {} (F);
            \end{tikzpicture}
        }
    \caption{$AF \preceq_N AF'$, but $AF \npreceq_{NC} AF'$.} %
    \label{fig:final-example-2}
    \end{figure}
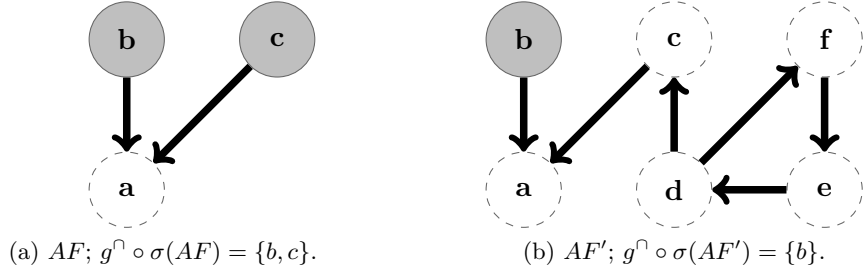
\end{example}
\begin{example}[Argument Mining]
    The ability to assess whether argumentation-based inference is reference independent can potentially be useful in argument mining scenarios, in which argumentation graphs are generated from natural language text or other data sources~\cite{Lippi:2016:AMS:2909066.2850417}.
    Let us introduce a scenario where an argument miner uses machine learning techniques for natural language processing to generate argumentation frameworks from text -- for example, from legal documents -- and then hands them over to an agent that resolves the argumentation frameworks to inform its decision-making.
    However, the argumentation agent is not accepting the frameworks under any condition; instead, it is assessing the frameworks and their relation with each other to determine if reference independence is violated.
    The argumentation agent then provides the results of these assessments to the argument miner, who can use the information in different ways.
    If reference independence is violated, it can either re-assess the corresponding text and suggest an alternative, reference independent interpretation, or consider the text as not useful and label it accordingly to increase its ability to focus on more useful texts in the future. Figure~\ref{fig:architecture} depicts the architecture of the proposed system.
    \begin{figure}
        \centering
        \includegraphics[width=0.4\textwidth]{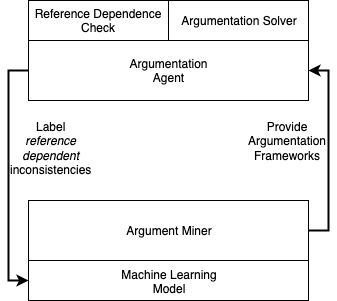}
        \caption{Labeling Reference-dependent Argumentation Graphs to improve Argument Mining.} 
        \label{fig:architecture}
    \end{figure}
    As an example, let us assume the argument miner creates argumentation frameworks based on an evolving online discussion on whether a policy should be implemented or not (denoted by argument $p$).
    At time $t_0$ the argument miner detects an argument $a$ that attacks the policy implementation proposal: $AF_0 = (\{p, a\}, \{(a, p)\})$.
    The argumentation agent -- using, for example, skeptical complete semantics $\sigma^{\cap}_{complete}$ -- resolves $AF$ to $\{ a \}$, \emph{i.e.}, it decides the policy should not be implemented.
    At time $t_1$, the argument miner detects the additional arguments $b$ and $c$, as well as the additional attacks $(a, b)$, $(b, c)$, and $(c, a)$.
    The argumentation agent resolves the framework $AF_1 = (\{p, a, b, c\}, \{(a, p), (a, b), (b, c), (c, a)\})$ as $\{ \}$.
    Now, it is clear that $AF_0 \preceq_{N} AF_1$, but $AF_0 \not \preceq_{RM} AF_1$.
    It is also clear that the preference relations implied by $g \circ \sigma^{\cap}_{complete}(AF_0)$ and $g \circ \sigma^{\cap}_{complete}(AF_0)$ are inconsistent, \emph{i.e.}, $g^{\cap} \circ \sigma^{\cap}_{complete}(AF_0)$ implies $\{ a \} \succeq \{ \}$ and $g^{\cap} \circ \sigma^{\cap}_{complete}(AF_1)$ implies $\{ \} \succeq \{ a \}$.
    Hence, the argumentation agent can label $AF_1$ as faulty or not useful and provide this information to the argument miner, who can then attempt to find alternative formal interpretations of the discussion, or move on to a different discussion.
    Figure~\ref{fig:mining-example} depicts the argumentation graphs of $AF_0$ and $AF_1$.
    \begin{figure}
        \subfloat[$AF_0$; $g^{\cap} \circ \sigma^{\cap}_{complete}(AF_0) = \{ a \}$.]{
            \begin{tikzpicture}[
                noanode/.style={dashed, circle, draw=black!60, minimum size=10mm, font=\bfseries},
                anode/.style={circle, fill=lightgray, draw=black!60, minimum size=10mm, font=\bfseries},
                ]
                \node[draw=none]    (NV1)    at(-2.5,4)  {};
                \node[noanode]    (P)    at(0,2)  {p};
                \node[anode]    (A)    at(0,4)  {a};
                \node[draw=none]    (NV2)    at(2.5,4)  {};
                \path [->, line width=1mm]  (A) edge node[left] {} (P);
            \end{tikzpicture}
        }
        \hspace{50pt}
        \centering
        \subfloat[$AF_1$; $g^{\cap} \circ \sigma^{\cap}_{complete}(AF_1) = \{ \}$.]{
            \begin{tikzpicture}[
                noanode/.style={dashed, circle, draw=black!60, minimum size=10mm, font=\bfseries},
                anode/.style={circle, fill=lightgray, draw=black!60, minimum size=10mm, font=\bfseries},
                ]
                \node[draw=none]    (NV1)    at(-1.5,4)  {};
                \node[noanode]    (P)    at(0,2)  {p};
                \node[noanode]    (A)    at(0,4)  {a};
                \node[noanode]    (B)    at(2,4)  {b};
                \node[noanode]    (C)    at(2,2)  {c};
                \node[draw=none]    (NV2)    at(3.5,4)  {};
                \path [->, line width=1mm]  (A) edge node[left] {} (P);
                \path [->, line width=1mm]  (A) edge node[left] {} (B);
                \path [->, line width=1mm]  (B) edge node[left] {} (C);
                \path [->, line width=1mm]  (C) edge node[left] {} (A);
            \end{tikzpicture}
        }
    \caption{$AF_0 \preceq_N AF_1$, but $AF_0 \npreceq_{RM} AF_1$ ($AF_0 \npreceq_{NC} AF_1$).}
    \label{fig:mining-example}
    \end{figure}
\end{example}

\section{Related Work: Preference-based Argumentation and Rational Man's Expansions}
\label{related}
In our work, we derive implicit preference orders on sets of arguments from abstract argumentation frameworks.
Hence, it makes sense to put our work in the context of argumentation approaches that \emph{explicitly} define preferences.
Amgoud's and Cayrol's \emph{preference-based argumentation}~\cite{Amgoud2002} can be considered the most foundational work that advances this research direction.
Hence, we relate our work to preference-based argumentation and confirm the intuition that the explicit definition of preferences does not guarantee reference independence by formal proof.
Let us first introduce a definition of a preference-based argumentation framework.
\begin{definition}[Preference-based Argumentation Framework~\cite{Amgoud2002}]
    A preference-based argumentation framework is a triplet $(AR, Attacks, Prefs)$, whereby $AR$ and $Attacks$ are arguments and attacks, defined according to Definition~\ref{arg-f} and $Prefs$ define a partial or total ordering on $AR \times AR$.
\end{definition}

In a preference-based argumentation framework, acceptability is determined as follows.
\begin{definition}[Preference-based Argumentation Framework~\cite{Amgoud2002}]
    Given a preference-based argumentation framework $AF_p = (AR, Attacks, Prefs)$, the set of acceptable arguments $Args_{acc} \subseteq AR$ is determined as follows by the preference-based argumentation function $\tau_{preferred}$:
    \begin{align*}
        \tau_{preferred}(AF) = \{ a | a \in AR, \text{ such that } \forall b \in AR, \text{ if } (b, a) \in Attacks \text{ then } a \succeq b \}
    \end{align*}
\end{definition}
Note that in this paper, we only consider the \emph{acceptability} status of arguments; \emph{i.e.}, in contrast to Amgoud and Cayrol, we do not distinguish between \emph{rejected arguments} and \emph{arguments in abeyance}. This simplification is motivated by the rational decision-maker's required ability to make clear, unambiguous decisions\footnote{Analogously, we also do not distinguish between rejected and undecided arguments in abstract argumentation.}.

To analyze preference-based argumentation in the context of reference independence, let us first define the concept of a normal expansion of a preference-based argumentation framework.
\begin{definition}
    An expansion $AF'_p = (AR', Attacks', Prefs')$ of a preference-based argumentation framework $AF_p = (AR, Attacks, Prefs)$ is \emph{normal} ($AF_p \preceq_{NP} AF'_p$) iff:
    \begin{itemize}
        \item $Attacks \subseteq Attacks'$ and
        \item $\forall (a, b) \in Attacks' \setminus Attacks, a \in AR' \setminus AR \lor b \in AR' \setminus AR$ and
        \item $Prefs \subseteq Prefs'$ and 
        \item $\forall (a \succeq b) \in Prefs' \setminus Prefs, a \in AR' \setminus AR \lor b \in AR' \setminus AR$.
    \end{itemize} 
\end{definition}
In words, considering the addition of preferences $Prefs$ to abstract argumentation frameworks, we assume that a normal expansion $AF'_p$ of $AF_p$ neither changes existing preferences defined in $Prefs$ nor adds additional preferences between any two arguments that exist in $AF_p$.
As non-cyclic expansions do not require a definition that is specific to preference-based argumentation, let us directly define the rational man's expansion in the context of preference-based argumentation.
\begin{definition}
    An expansion $AF'_p = (AR', Attacks', Prefs')$ of a preference-based argumentation framework $AF_p = (AR, Attacks, Prefs)$ is a \emph{rational man's expansion} ($AF_p \preceq_{RMP} AF'_p$) iff:
    \begin{itemize}
        \item $AF_p \preceq_{NP} AF'_p$
        \item $AF \preceq_{NC} AF'$
        \item $\forall a \in AR_a, b \in AR_a' \setminus AR_a$ such that $b$ is reachable from $a$, it holds true that $\forall C \in {\cal C}(AF'), a \not \in AR_a^{'C}$,
    \end{itemize}
    where $AF = (AR_a, Attacks_a)$ and $AF' = (AR'_a, Attacks'_a)$ are abstract argumentation frameworks, such that $AR_a = AR, Attacks_a = Attacks, AR'_a = AR', Attacks' = Attacks$.
\end{definition}
Note that in the definition above, we \textquote{map} preference-based argumentation frameworks to abstract argumentation frameworks, in order to be able to use definitions we have introduced for abstract argumentation frameworks.

Now, it can be easily shown that preference-based argumentation does not guarantee reference independence.
For this, we introduce the following proposition.
\begin{proposition}
    Let $g^{\cap} \circ \tau_{preferred}$ be an argumentation-based decision function for preference-based argumentation frameworks.
    For every two preference-based argumentation frameworks $AF_p$ and $AF'_p$, such that $AF_p = (AR, Attacks, Prefs)$, $AF'_p = (AR', Attacks', Prefs')$, and $AF_p \preceq_{NP} AF'_p$, the following statement holds true:
    \begin{align*}
        g^{\cap} \circ \tau_{preferred}(AF'_p) \subseteq AR \text{ does not imply } g^{\cap} \circ \tau_{preferred}(AF'_p) = g^{\cap} \circ \tau_{preferred}(AF_p)
    \end{align*}
\end{proposition}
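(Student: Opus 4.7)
The plan is to prove the proposition by constructing an explicit counter-example; a single suitably chosen pair $AF_p \preceq_{NP} AF'_p$ will suffice. Since the earlier results in the paper repeatedly used the triangle $\{(a,b),(b,c),(c,a)\}$ to refute reference independence in the abstract case, I would first try a similar shape here and tune only the added preferences.

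The construction I would propose is
\begin{align*}
AF_p &= (\{a\},\ \emptyset,\ \emptyset), \\
AF'_p &= (\{a,b,c\},\ \{(b,a),(c,b),(b,c)\},\ \{b \succeq a\}).
\end{align*}
First I would verify that $AF_p \preceq_{NP} AF'_p$: the attack and preference sets of $AF_p$ are empty and therefore trivially contained in those of $AF'_p$, and every added attack and every added preference involves at least one of the fresh arguments $b,c \in AR' \setminus AR$. Second, I would compute the outputs of $\tau_{preferred}$ directly from the definition. In $AF_p$, the argument $a$ has no attackers, so $\tau_{preferred}(AF_p) = \{a\}$. In $AF'_p$: argument $a$ is attacked by $b$, and we do not have $a \succeq b$, so $a$ is rejected; argument $b$ is attacked by $c$, and we have neither $b \succeq c$ nor $c \succeq b$, so $b$ is rejected; argument $c$ is attacked by $b$ and we do not have $c \succeq b$, so $c$ is rejected. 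Hence $\tau_{preferred}(AF'_p) = \emptyset$.

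Finally, I would observe that $g^{\cap} \circ \tau_{preferred}(AF'_p) = \emptyset \subseteq \{a\} = AR$, while $g^{\cap} \circ \tau_{preferred}(AF'_p) = \emptyset \neq \{a\} = g^{\cap} \circ \tau_{preferred}(AF_p)$, which contradicts the reference independence implication and completes the proof.

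The only non-routine step is the design of the counter-example: the preference $b \succeq a$ is needed so that the fresh argument $b$ genuinely defeats $a$, but then $b$ itself would stay accepted and violate the subset condition $\tau_{preferred}(AF'_p) \subseteq AR$. The trick is to introduce a mutually attacking pair $b,c$ with no preferences between them, so that $b$ and $c$ knock each other out of the accepted set while $b$ still suffices to remove $a$. Once this configuration is in place, all verifications are mechanical, and the argument does not rely on cycles of any particular parity, underlining that the obstruction to reference independence in preference-based argumentation is independent of the cyclic-expansion phenomenon identified for the abstract case.
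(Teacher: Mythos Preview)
Your counter-example is correct and the verifications go through: $AF_p \preceq_{NP} AF'_p$ holds, $\tau_{preferred}(AF_p)=\{a\}$, $\tau_{preferred}(AF'_p)=\emptyset$, and $\emptyset\subseteq AR$ with $\emptyset\neq\{a\}$. The paper also proceeds by counter-example but with a different pair: it starts from $AF_p=(\{a,b,c\},\{(a,b)\},\{a\succeq c,\,b\succeq c\})$ and expands to $AF'_p$ by adding a fresh $d$ together with attacks $(b,d),(d,a)$ and the preference $d\succeq c$, obtaining $\tau_{preferred}(AF_p)=\{a,c\}$ and $\tau_{preferred}(AF'_p)=\{c\}$. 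Your example is smaller and arguably cleaner; the paper's example has the incidental feature that the accepted set never collapses to $\emptyset$, which makes the ``preference reversal'' a little more vivid but is not logically required.

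Two small remarks on your closing discussion. First, the preference $b\succeq a$ is not actually needed: by the definition of $\tau_{preferred}$, argument $a$ fails to be accepted as soon as it has an attacker $b$ with $a\not\succeq b$; the direction $b\succeq a$ plays no role. Your example works equally well with $Prefs'=\emptyset$. Second, your expansion \emph{does} add a cycle (the 2-cycle between $b$ and $c$), so it does not by itself show that the failure is ``independent of the cyclic-expansion phenomenon''. The paper makes that separate point in the subsequent proposition, using a genuinely acyclic expansion $AF_p \preceq_{RMP} AF'_p$; your example establishes the present proposition but not that stronger observation.
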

\begin{proof}
    The proposition can be proven by counter-example.
    We introduce the following preference-based argumentation frameworks:
    \begin{itemize}
        \item $AF_p = (\{a, b, c\}, \{(a, b)\}, (a \succeq c, b \succeq c)\})$;
        \item $AF'_p = (\{a, b, c, d\}, \{(a, b), (b, d), (d, a)\}, (a \succeq c, b \succeq c, d \succeq c))\}$.
    \end{itemize}
    We can see that $AF_p \preceq_{NP} AF'_p$. $g^{\cap} \circ \tau_{preferred}$ resolves the frameworks as follows:
    \begin{enumerate}
        \item $g^{\cap} \circ \tau_{preferred}(AF_p) = \{a, c\}$.
        \item $g^{\cap} \circ \tau_{preferred}(AF'_p) = \{c\}$.
    \end{enumerate}
    From $g^{\cap} \circ \tau_{preferred}(AF'_p) \subseteq AR$ and $g^{\cap} \circ \tau_{preferred}(AF'_p) \neq g^{\cap} \circ \tau_{preferred}(AF_p)$ it follows that $g^{\cap} \circ \tau_{preferred}(AF'_p) \subseteq AR \text{ does not imply } g^{\cap} \circ \tau_{preferred}(AF'_p) = g^{\cap} \circ \tau_{preferred}(AF_p)$.
\end{proof}
Figure~\ref{fig:vale-pref-1} depicts the argumentation frameworks used in the proof.
\begin{figure}
    \subfloat[$AF_p$, with preferences $(a \succeq c, b \succeq c)$; $g^{\cap} \circ \tau_{preferred}(AF) = \{a, c\}$.]{
        \begin{tikzpicture}[
            noanode/.style={dashed, circle, draw=black!60, minimum size=10mm, font=\bfseries},
            anode/.style={circle, fill=lightgray, draw=black!60, minimum size=10mm, font=\bfseries},
            ]
            \node[draw=none]    (NV1)    at(-0,5,4)  {};
            \node[anode]    (A)    at(0,2)  {a};
            \node[noanode]    (B)    at(0,4)  {b};
            \node[anode]    (C)    at(2,4)  {c};
            \node[draw=none]    (NV2)    at(2.5,4)  {};
            \path [->, line width=1mm]  (A) edge node[left] {} (B);
        \end{tikzpicture}
    }
    \hspace{50pt}
    \centering
    \subfloat[$AF'_p$, with preferences $(a \succeq c, b \succeq c, d \succeq c)$; $g^{\cap} \circ \tau_{preferred}(AF'_p) = \{ c \}$.]{
        \begin{tikzpicture}[
            noanode/.style={dashed, circle, draw=black!60, minimum size=10mm, font=\bfseries},
            anode/.style={circle, fill=lightgray, draw=black!60, minimum size=10mm, font=\bfseries},
            ]
            \node[noanode]    (A)    at(0,2)  {a};
            \node[noanode]    (B)    at(0,4)  {b};
            \node[anode]    (C)    at(2,4)  {c};
            \node[noanode]    (D)    at(2,2)  {d};
            \path [->, line width=1mm]  (A) edge node[left] {} (B);
            \path [->, line width=1mm]  (B) edge node[left] {} (D);
            \path [->, line width=1mm]  (D) edge node[left] {} (A);
        \end{tikzpicture}
    }
\caption{Normal, but cyclic expansion: $AF_p \preceq_{NP} AF'_p$, but $AF_p \npreceq_{NC} AF'_p$.}
\label{fig:vale-pref-1}
\end{figure}
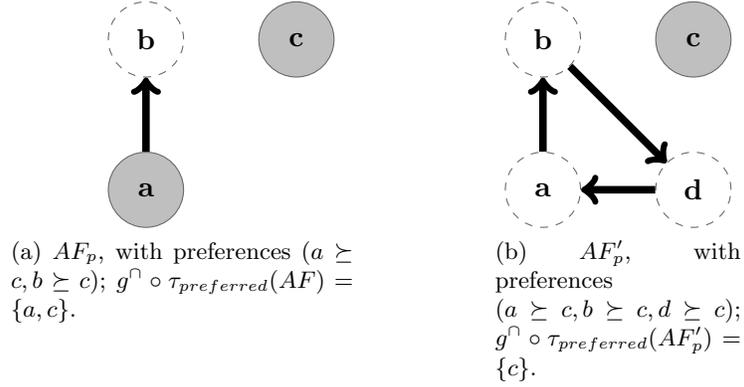

Also, it can be proven that given two preference-based argumentation frameworks $AF_p$ and $AF'_p$ even when $AF'_p$ is a rational man's expansion of $AF_p$ ($AF_p \preceq_{RMP} AF'_p$), reference dependence is not guaranteed.
\begin{proposition}
    Let $g^{\cap} \circ \tau_{preferred}$ be any argumentation-based decision function for preference-based argumentation frameworks.
    For every two preference-based argumentation frameworks $AF_p$ and $AF'_p$, such that $AF_p = (AR, Attacks, Prefs)$, $AF'_p = (AR', Attacks', Prefs')$, and $AF_p \preceq_{RMP} AF'_p$, the following statement holds true:
    \begin{align*}
        g^{\cap} \circ \tau_{preferred}(AF'_p) \subseteq AR \text{ does not imply } g^{\cap} \circ \tau_{preferred}(AF'_p) = g^{\cap} \circ \tau_{preferred}(AF_p)
    \end{align*}
\end{proposition}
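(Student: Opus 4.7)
My plan is to follow the same pattern as the preceding proposition and exhibit a concrete counter-example, since the statement is existential. The key leverage point is that $\tau_{preferred}$ is entirely local: an argument $x$ is accepted iff every attacker $y$ of $x$ satisfies $x \succeq y$, and this check does not consult whether $y$ itself is accepted. Consequently a new argument can be simultaneously kept out of the new extension (so that $g^{\cap} \circ \tau_{preferred}(AF'_p) \subseteq AR$ is preserved) and still ``knock out'' a previously accepted existing argument, which is exactly the phenomenon we need.

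Concretely, I would take $AF_p = (\{a, b\}, \{(a, b)\}, \{b \succeq a\})$, for which $\tau_{preferred}(AF_p) = \{a, b\}$: $a$ has no attackers, and $b$ resists its sole attacker $a$ via $b \succeq a$. I would then expand by a single new argument $d$ together with the attacks $(d, b)$ and $(a, d)$ but no new preferences between distinct arguments, obtaining $AF'_p = (\{a, b, d\}, \{(a, b), (d, b), (a, d)\}, \{b \succeq a\})$. Now $b$ is attacked by $d$ and we do not have $b \succeq d$, so $b$ is no longer accepted; and $d$ is attacked by $a$ and we do not have $d \succeq a$, so $d$ is not accepted either. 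Hence $\tau_{preferred}(AF'_p) = \{a\}$, giving $g^{\cap} \circ \tau_{preferred}(AF'_p) = \{a\} \subseteq AR$ while $\{a\} \neq \{a, b\} = g^{\cap} \circ \tau_{preferred}(AF_p)$.

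It then remains to verify $AF_p \preceq_{RMP} AF'_p$. Normality holds because every added attack has $d \in AR' \setminus AR$ as an endpoint and no preferences between previously existing arguments were added or changed. The underlying abstract frameworks $(\{a, b\}, \{(a, b)\})$ and $(\{a, b, d\}, \{(a, b), (d, b), (a, d)\})$ are both acyclic, so ${\cal C}(AF) = {\cal C}(AF') = \emptyset$; this gives $AF \preceq_{NC} AF'$ and makes the third clause of RMP (on reachability from cycles through existing arguments) vacuous.

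The main obstacle — and the reason the counter-example from the previous proposition cannot simply be recycled — is that RMP outlaws precisely the device used there, namely adding a cycle through the new argument $d$ in order to keep $d$ unaccepted. The workaround I rely on is to let an \emph{existing} argument $a$ defeat $d$ along a single new acyclic attack $(a, d)$, unshielded by preference; this suppresses $d$ without creating any cycle, while $d$ is still free to undermine $b$ because no preference protects $b$ from the newly added attacker. This dual role of $a$ — a source of the DAG, yet still able to attack $d$ — is the essential ingredient that makes the example go through under RMP.
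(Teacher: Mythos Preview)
Your proof is correct and follows the same approach as the paper, namely a direct counter-example. The paper uses $AF_p = (\{a, b, c\}, \{(a, b)\}, \{a \succeq c, b \succeq c\})$ and $AF'_p = (\{a, b, c, d\}, \{(a, b), (c, d), (d, a)\}, \{a \succeq c, b \succeq c, d \succeq c\})$, whereas your example is more economical: two initial arguments rather than three, a single new argument, and you suppress the newcomer $d$ by letting the existing argument $a$ attack it along an acyclic edge rather than introducing an auxiliary argument for that purpose. Both constructions exploit the same key observation you articulate explicitly---that $\tau_{preferred}$-acceptance is purely local, so a new argument can simultaneously be excluded from the extension and still knock out a previously accepted argument without any cycle being created.
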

\begin{proof}
    Again, the proposition can be proven by counter-example.
    We introduce the following preference-based argumentation frameworks:
    \begin{itemize}
        \item $AF_p = (\{a, b, c\}, \{(a, b)\}, (a \succeq c, b \succeq c)\})$;
        \item $AF'_p = (\{a, b, c, d\}, \{(a, b), (c, d), (d, a)\}, (a \succeq c, b \succeq c, d \succeq c))\}$.
    \end{itemize}
    We can see that $AF_p \preceq_{RMP} AF'_p$. $g^{\cap} \circ \tau_{preferred}$ resolves the frameworks as follows:
    \begin{enumerate}
        \item $g^{\cap} \circ \tau_{preferred}(AF_p) = \{a, c\}$.
        \item $g^{\cap} \circ \tau_{preferred}(AF'_p) = \{c\}$.
    \end{enumerate}
    From $g^{\cap} \circ \tau_{preferred}(AF'_p) \subseteq AR$ and $g^{\cap} \circ \tau_{preferred}(AF'_p) \neq g^{\cap} \circ \tau_{preferred}(AF_p)$ it follows that $g^{\cap} \circ \tau_{preferred}(AF'_p) \subseteq AR \centernot \implies g^{\cap} \circ \tau_{preferred}(AF'_p) = g^{\cap} \circ \tau_{preferred}(AF_p)$.
\end{proof}
The argumentation frameworks used in the proof are depicted by Figure~\ref{fig:vale-pref-2}.
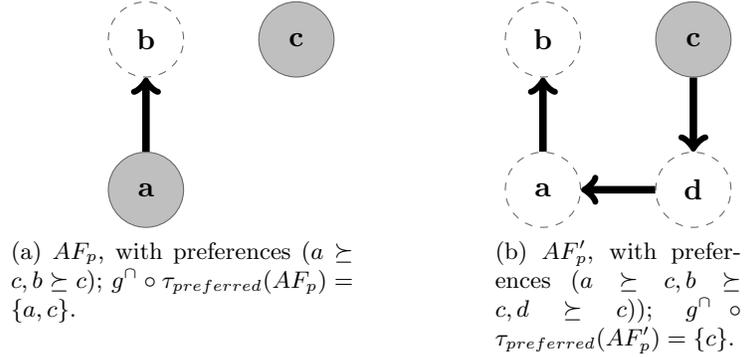
\begin{figure}
    \subfloat[$AF_p$, with preferences $(a \succeq c, b \succeq c)$; $g^{\cap} \circ \tau_{preferred}(AF_p) = \{a, c\}$.]{
        \begin{tikzpicture}[
            noanode/.style={dashed, circle, draw=black!60, minimum size=10mm, font=\bfseries},
            anode/.style={circle, fill=lightgray, draw=black!60, minimum size=10mm, font=\bfseries},
            ]
            \node[draw=none]    (NV1)    at(-0,5,4)  {};
            \node[anode]    (A)    at(0,2)  {a};
            \node[noanode]    (B)    at(0,4)  {b};
            \node[anode]    (C)    at(2,4)  {c};
            \node[draw=none]    (NV2)    at(2.5,4)  {};
            \path [->, line width=1mm]  (A) edge node[left] {} (B);
        \end{tikzpicture}
    }
    \hspace{50pt}
    \centering
    \subfloat[$AF'_p$, with preferences $(a \succeq c, b \succeq c, d \succeq c))$; $g^{\cap} \circ \tau_{preferred}(AF'_p) = \{ c \}$.]{
        \begin{tikzpicture}[
            noanode/.style={dashed, circle, draw=black!60, minimum size=10mm, font=\bfseries},
            anode/.style={circle, fill=lightgray, draw=black!60, minimum size=10mm, font=\bfseries},
            ]
            \node[noanode]    (A)    at(0,2)  {a};
            \node[noanode]    (B)    at(0,4)  {b};
            \node[anode]    (C)    at(2,4)  {c};
            \node[noanode]    (D)    at(2,2)  {d};
            \path [->, line width=1mm]  (A) edge node[left] {} (B);
            \path [->, line width=1mm]  (C) edge node[left] {} (D);
            \path [->, line width=1mm]  (D) edge node[left] {} (A);
        \end{tikzpicture}
    }
\caption{Violation of reference independence in preference-based argumentation in absence of cycles.}
\label{fig:vale-pref-2}
\end{figure}
As preference-based argumentation is a special case of value-based argumentation as introduced by Bench-Capon~\cite{bench2003persuasion}, it is also clear that normal, non-cyclic expansions cannot guarantee rationality for value-based argumentation.
Let us highlight that in contrast to these works, which introduce explicit preference relations on arguments, our work analyzes argumentation in the context of the implicit preference relations on sets of arguments that are implied by argumentation semantics.

\section{Conclusion and Future Work}
\label{conclusion}
In this work, we observe that abstract argumentation semantics imply preferences orders on sets of arguments.
Hence, we explore these preference orders from the perspective of microeconomic theory, and in particular the \emph{rational man} paradigm.
This paper shows that abstract argumentation semantics typically do not guarantee reference independent decision-making according to the \emph{rational man paradigm} and consequently uncovers a gap between abstract argumentation semantics and economic rationality.
Thereby, our research focuses on foundational work at the intersection of abstract argumentation and (bounded) economic rationality.
Our work sheds new light on the question to what extent and under which circumstances the application of abstract argumentation returns \textquote{reasonable} results from the perspective of economic rationality.
Taking into account the rich body of formal works on both argumentation and (boundedly) rational decision-making, plenty of opportunities to extend our work exist.
In particular, we consider the following research directions as promising future work:
\begin{itemize}
    \item \textbf{'Loop-busting' to ensure economic rationality in temporal argumentation.} \\
    In this paper, we have shown that abstract argumentation approaches are typically economically not rational when considering the normal expansion of argumentation frameworks and that economic rationality -- \emph{i.e.}, reference independence -- can be achieved by avoiding the addition of cycles in normal expansions.
    However, it can be assumed that many scenarios require well-defined approaches to handling argumentation cycles in an economically rational (reference independent) manner.
    To devise such approaches, it should be possible to build on a fundament of works on the resolution of cycles in argumentation graphs (also called 'loop-busting')~\cite{10.1093/logcom/exu012}, as well as on methods that enforce that particular arguments are in an extension~\cite{10.5555/3007337.3007364}.
    \item \textbf{Economic rationality and advanced argumentation frameworks.} \\
    In addition to preference-based and value-based argumentation, a range of other works extend Dung's notion of an argumentation framework, for example by assigning weights or intervals to attacks (\emph{e.g.}, probabilistic~\cite{li2011probabilistic} and possibilistic~\cite{nieves2011possibilistic} argumentation).
    Given that the rational man's expansion as established in this paper does not guarantee reference independence in the case of preference-/value-based argumentation, the exploration of the intersection of these approaches and economic rationality can be considered promising future research.
    \item \textbf{Bounded Rationality and Abstract Argumentation.} \\ As mentioned in the introduction, from the perspective of formal argumentation, it can be interesting to relax the \emph{clear preferences} property of the economically rational decision-maker, for example to consider three-valued labeling and \textquote{undecided} status of arguments, or to support quantitative notions of uncertainty.
    Such work can potentially be evaluated from an economics perspective, as a systematic relaxation of rationality constraints that can create new formal models of bounded rationality. In this context, the recently introduced family of weak admissible set-based semantics can potentially be useful~\cite{BaumannBU20}. (However, let us informally observe that the semantics introduced in~\cite{BaumannBU20} do not satisfy weak reference independence by considering the example argumentation frameworks $AF = (\{a\}, \{\}), AF' = (\{a, b, c\}, \{(a, b), (b, c), (c, a)\}), \\ AF \preceq_N AF'$.)
\end{itemize}

\appendix
\section*{Appendices}

\subsection*{Appendix A. Strong Monotony Properties and Reference Independence}
\label{app-monotony}
This appendix provides the following proofs, given maximal conflict-free semantics and normal expansions:
\begin{itemize}
    \item Strong monotony implies strong reference independence, but not vice versa.
    \item Strong rational monotony does not imply strong reference independence and vice versa.
    \item Strong cautious monotony does not imply strong reference independence and vice versa.
\end{itemize}
Let us first prove that strong reference independence does not imply strong monotony.
\begin{proposition}[Strong Reference Independence does not Imply Strong Monotony]
    There exist an argumentation semantics $\sigma$ whose extensions are maximal conflict-free sets (w.r.t. set inclusion) and argumentation frameworks $AF = (AR, Attacks), AF' = (AR', Attacks')$, such that $AF \preceq_N AF'$, such that the following statement does not hold true:
    \begin{align*}
        {}& \text{If } \forall E \in \sigma(AF), \forall E' \in \sigma(AF'), (E' \not \subseteq AR \lor E' = E) \\
        {}& \phantom{eee} \text{ then } \\
        {}& (\forall E \in \sigma(AF),  \forall E' \in \sigma(AF'), E \subseteq E')
    \end{align*}
\end{proposition}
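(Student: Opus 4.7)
The plan is to give a proof by counter-example, and indeed the very same pair of frameworks used in the weak analogue should already witness the strong separation. The underlying reason is that in the weak analogue the counter-example featured unique extensions on both sides, so the distinction between ``there exists $E'$'' and ``for all $E'$'' collapses, and the witness therefore lifts to the strong setting without adjustment.

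Concretely, I would take $AF = (\{a\}, \{\})$ and $AF' = (\{a, b\}, \{(b, a)\})$, noting that $AF \preceq_N AF'$ because the only added item is the argument $b$ together with an attack from $b$ (a new argument) onto $a$. For $\sigma$ I would pick CF2 semantics (alternatively naive semantics), which is universally defined and whose extensions are maximal conflict-free sets. A short direct computation, using the SCC-recursive definition on $AF'$ (its SCCs are $\{b\}$ and $\{a\}$, with $\{b\}$ unattacked), yields $\sigma_{cf2}(AF) = \{\{a\}\}$ and $\sigma_{cf2}(AF') = \{\{b\}\}$.

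I would then verify the two halves separately. For the hypothesis of the implication (strong reference independence), the only pair to inspect is $E = \{a\}$, $E' = \{b\}$: since $b \notin AR = \{a\}$, we have $E' \not\subseteq AR$, so the disjunction $E' \not\subseteq AR \lor E' = E$ holds. For the conclusion (strong monotony), the same pair gives $\{a\} \not\subseteq \{b\}$, falsifying the universally quantified requirement $E \subseteq E'$. Hence the implication fails on this instance, which proves the proposition.

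The only thing to be careful about is that the chosen semantics really does produce the claimed unique extensions on these small frameworks and really does return maximal conflict-free sets in general; this is the single technical obligation, and for CF2 it reduces to applying Definition~\ref{cf-2} once to $AF'$. If one prefers to avoid even that small computation, naive semantics gives $\sigma_{naive}(AF') = \{\{a\}, \{b\}\}$, and strong reference independence still holds by a two-case check (when $E' = \{a\}$ one has $E' = E$; when $E' = \{b\}$ one has $E' \not\subseteq AR$), while strong monotony still fails via the pair $(\{a\}, \{b\})$. I expect no further obstacle; the essential work is simply observing that the weak-case witness is already strong enough because both argumentation frameworks admit a single extension.
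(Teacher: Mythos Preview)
Your proposal is correct and follows essentially the same approach as the paper: the paper's proof in Appendix~A uses exactly the pair $AF = (\{a\}, \{\})$, $AF' = (\{a, b\}, \{(b, a)\})$ together with CF2 semantics, computing $\sigma_{cf2}(AF) = \{\{a\}\}$ and $\sigma_{cf2}(AF') = \{\{b\}\}$ and verifying the two halves precisely as you do. Your additional observation that naive semantics also serves as a witness is a small bonus not present in the paper.
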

\begin{proof}
    Let us provide a proof by counter-example.
    \begin{enumerate}
        \item Let $AF = (AR, Attacks) = (\{a\}, \{\})$, $AF' = (\{a, b\}, \{(b, a)\})$ and let us take CF2 semantics, denoted by $\sigma_{cf2}$, as an example of a semantics whose extensions are maximal conflict-free sets. Note that $AF \preceq_N AF'$.
        \item $\sigma_{cf2}(AF) = \{\{a\}\}$ and $\sigma_{cf2}(AF') = \{\{b\}\}$. Let $E = \{a\}, E \in \sigma_{cf2}(AF)$. $\exists E' \in \sigma(AF')$ such that $\{a\} \not \subseteq E'$. The condition for strong monotony does not hold.
        \item $\forall E' \in \sigma(AF')$ it holds true that $E' \not \subseteq AR$. The condition for strong reference independence holds. We have proven the proposition.
    \end{enumerate}
\end{proof}

Strong monotony implies strong reference independence.
\begin{proposition}[Strong Monotony Implies Strong Reference Independence]~\label{strong-monotony-implies-ref-dep}
    For all argumentation semantics $\sigma$ whose extensions are maximal conflict-free sets (w.r.t. set inclusion) and every two argumentation frameworks $AF = (AR, Attacks), AF' = (AR', \\ Attacks')$, such that $AF \preceq_N AF'$, the following statement holds true:
    \begin{align*}
       {}& \text{If }  (\forall E \in \sigma(AF), \forall E' \in \sigma(AF'), E \subseteq E') \\
       {}& \phantom{eee} \text{ then } \\
       {}& (\forall E \in \sigma(AF), \forall E' \in \sigma(AF'), E' \not \subseteq AR \lor E' = E)
    \end{align*}
\end{proposition}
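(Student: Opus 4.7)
The plan is to mirror the proof of Proposition~\ref{monotony-implies-ref-dep} almost verbatim, substituting the universal quantifier on $E'$ for the existential one; the underlying argument carries over since it analyzes an arbitrary pair $(E, E')$ and does not exploit the existential choice. Concretely, I assume the antecedent $\forall E \in \sigma(AF), \forall E' \in \sigma(AF'), E \subseteq E'$, fix an arbitrary $E \in \sigma(AF)$ and $E' \in \sigma(AF')$, and split on whether $E = E'$ or $E \subsetneq E'$.

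The $E = E'$ case is immediate, so the work is in the strict case. Here I pick any $a \in E' \setminus E$ and argue $a \in AR' \setminus AR$, which yields $E' \not\subseteq AR$ and discharges the conclusion. To rule out $a \in AR$, I use that $AF \preceq_N AF'$ implies $Attacks' \cap (AR \times AR) = Attacks$, so conflict-freeness of any subset of $AR$ is invariant between $AF$ and $AF'$. Since $E \cup \{a\} \subseteq E'$ and $E'$ is conflict-free in $AF'$, the set $E \cup \{a\}$ is conflict-free in $AF'$; if $a \in AR$, then $E \cup \{a\} \subseteq AR$ and the invariance gives conflict-freeness in $AF$, contradicting maximality of $E$ in $\sigma(AF)$.

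The main obstacle is not a deep combinatorial one but the subtle verification that normal expansions preserve the conflict relation on the common argument set $AR$; once this is spelled out, the maximality of $E$ in $AF$ forces every new element in $E'$ to lie in $AR' \setminus AR$. Because the argument makes no appeal to whether $E'$ was chosen existentially or universally, and because universal definability of $\sigma$ (inherited from strong monotony) guarantees that $\sigma(AF')$ is nonempty so the universal statement is not vacuously trivialized in a misleading way, the reasoning transports directly to the strong setting without further machinery.
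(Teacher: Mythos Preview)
Your proposal is correct and follows essentially the same approach as the paper's proof: fix arbitrary $E \in \sigma(AF)$ and $E' \in \sigma(AF')$, use $E \subseteq E'$ from the antecedent, and in the strict-inclusion case invoke maximal conflict-freeness of $E$ in $AF$ together with the fact that a normal expansion does not alter attacks within $AR$ to conclude that any $a \in E' \setminus E$ must lie in $AR' \setminus AR$, yielding $E' \not\subseteq AR$. If anything, you spell out the invariance-of-conflict step more explicitly than the paper does.
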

\begin{proof}
    \phantom{eee}
    \begin{enumerate}
        \item By definition, $\forall E \in \sigma(AF), E' \in \sigma(AF')$, $E$ is a maximal conflict-free (w.r.t. set inclusion) subset of $AR$ and $E'$ is a maximal conflict-free (w.r.t. set inclusion) subset of $AR'$. Because $AF \preceq_N AF'$, it follows that, if $E \subseteq E'$, we have the following cases:
        \\ \emph{i)} E = E' or
        \\ \emph{ii)} $E \subset E'$, which implies that $\exists a \in E'$, such that $a \not \in E, a \in AR' \setminus AR$, from which it follows that $E' \not \subseteq AR$.
        \item Consequently, by i) and ii) the following statement holds true:
        \begin{align*}
            \forall E \in \sigma(AF), \forall E' \in \sigma(AF'), \text{ if } E \subseteq E' \text{ then } E' \not \subseteq AR \lor E' = E
        \end{align*}
        Hence, the proposition holds true.
    \end{enumerate}
\end{proof}
However, we can prove that strong \emph{cautious} monotony does not imply strong reference independence.
\begin{proposition}[Strong Cautious Monotony does not Imply Strong Reference Independence]
        There exist an argumentation semantics $\sigma$ whose extensions are maximal conflict-free sets (w.r.t. set inclusion) and argumentation frameworks $AF = (AR, Attacks), AF' = (AR', Attacks')$, such that $AF \preceq_N AF'$, such that the following statement does not hold true:
    \begin{align*}
        &{} \text{If } \forall E \in \sigma(AF), (\{(a,b) \mid (a,b) \in Attacks', a \in AR' \setminus AR, b \in E \}=\emptyset \text{ implies } \\
        &{} \quad \forall E' \in \sigma(AF'), E \subseteq E') \\
        &{} \quad \quad \text{ then } \\
        &{} (\forall E \in \sigma(AF), \forall E' \in \sigma(AF'),  E' \not \subseteq AR \lor E' = E)
    \end{align*}
\end{proposition}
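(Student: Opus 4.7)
The plan is to provide a proof by counter-example using essentially the same construction that the authors already employed for the weak version of this implication, since that construction turns out to also refute the strong version. I would take $\sigma$ to be stage semantics, $AF = (\{c, d\}, \{(c, d), (d, c)\})$, and $AF' = (\{c, d, e\}, \{(c, d), (d, c), (d, e), (e, c), (e, e)\})$. Note that $AF \preceq_N AF'$, stage semantics is universally defined, and its extensions are maximal conflict-free sets (w.r.t.\ set inclusion), so the setting matches the hypotheses of the proposition.

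The first step is to compute the extensions. In $AF$, the conflict-free sets are $\emptyset$, $\{c\}$, $\{d\}$, with ranges $\emptyset$, $\{c,d\}$, $\{c,d\}$; so $\sigma_{stage}(AF) = \{\{c\}, \{d\}\}$. In $AF'$, the self-attack $(e,e)$ forbids $e$ from any conflict-free set, leaving $\emptyset$, $\{c\}$, $\{d\}$, and the ranges are $\emptyset$, $\{c,d\}$, $\{c,d,e\}$; hence $\sigma_{stage}(AF') = \{\{d\}\}$.

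The second step is to verify that the strong cautious monotony premise, specialized to this pair $(AF, AF')$, holds. For $E = \{c\} \in \sigma_{stage}(AF)$, the attack $(e,c)$ with $e \in AR' \setminus AR$ makes the set $\{(a,b) \mid (a,b) \in Attacks', a \in AR' \setminus AR, b \in E\}$ nonempty, so the inner implication is vacuously satisfied. For $E = \{d\} \in \sigma_{stage}(AF)$, no new argument attacks $d$, so the set is empty; but the only $E' \in \sigma_{stage}(AF')$ is $\{d\}$, and $\{d\} \subseteq \{d\}$, so the implication's consequent holds.

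The third step is to exhibit the violation of strong reference independence: take $E = \{c\} \in \sigma_{stage}(AF)$ and $E' = \{d\} \in \sigma_{stage}(AF')$. Then $E' = \{d\} \subseteq \{c,d\} = AR$ and $E' \neq E$, so the condition $E' \not\subseteq AR \lor E' = E$ fails. I do not expect any real obstacle here; the only subtlety worth flagging is that the strong versions of both properties use the $\forall E'$ quantifier on $\sigma(AF')$, and my example has $|\sigma_{stage}(AF')| = 1$, which makes both quantifications trivial to check against the single extension $\{d\}$.
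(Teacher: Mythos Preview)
Your proof is correct: the computations of $\sigma_{stage}(AF)$ and $\sigma_{stage}(AF')$ are right, the strong cautious monotony premise is verified for both $E=\{c\}$ (vacuously) and $E=\{d\}$ (since $\{d\}\subseteq\{d\}$), and the pair $E=\{c\}$, $E'=\{d\}$ witnesses the failure of strong reference independence.

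The approach is the same as the paper's---a stage-semantics counterexample with a self-attacking new argument $e$---but the specific frameworks differ. You reused the weak-version counterexample $AF=(\{c,d\},\{(c,d),(d,c)\})$, $AF'=(\{c,d,e\},\{(c,d),(d,c),(d,e),(e,c),(e,e)\})$, whereas the paper's strong-version proof uses a slightly leaner pair, $AF=(\{c,d\},\{(c,d)\})$ and $AF'=(\{c,d,e\},\{(c,d),(d,e),(e,e),(e,c)\})$, giving $\sigma_{stage}(AF)=\{\{c\}\}$ and $\sigma_{stage}(AF')=\{\{c\},\{d\}\}$. With only one extension in $\sigma_{stage}(AF)$, the paper has just one $E$ to check and the inner implication is vacuous for it; your example requires checking two values of $E$, but this is immaterial. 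Either counterexample works.
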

\begin{proof}
    Let us provide a proof by counter-example.
    \begin{enumerate}
        \item Let $AF = (AR, Attacks) = (\{c, d\}, \{(c, d)\})$, $AF' = (\{c, d, e\}, \{(c, d), (d, e), (e, e), (e, c)\})$ and let us take stage semantics, denoted by $\sigma_{stage}$, as an example of a semantics whose extensions are maximal conflict-free sets. Note that $AF \preceq_N AF'$.
        \item $\sigma_{stage}(AF) = \{\{c\}\}$ and  $\sigma_{stage}(AF') = \{\{c\}, \{d\}\}$.
        \item Given the only extension $E = \{c\}, E \in \sigma_{stage}(AF)$, $\{(a,b) \mid (a,b) \in Attacks', a \in AR' \setminus AR, b \in E \}=\emptyset$ does not hold true and hence, $\forall E' \in \sigma_{stage}(AF'), \{(a,b) \mid (a,b) \in Attacks', a \in AR' \setminus AR, b \in E \}=\emptyset \text{ implies } E \subseteq E'$ holds true.
        \item However, it does not hold true that $\forall E \in \sigma(AF), \forall E' \in \sigma(AF'), (E' \not \subseteq AR \lor E' = E)$. We have proven the proposition.
    \end{enumerate}
\end{proof}
In a similar manner, we can prove that strong reference independence does not imply strong cautious monotony.
\begin{proposition}[Strong Reference Independence does not Imply Strong Cautious Monotony]
    There exist an argumentation semantics $\sigma$ whose extensions are maximal conflict-free sets (w.r.t. set inclusion) and argumentation frameworks $AF = (AR, Attacks), AF' = (AR', Attacks')$, such that $AF \preceq_N AF'$, such that the following statement does not hold true:
    \begin{align*}
         &{} \text{If } (\forall E \in \sigma(AF), \forall E' \in \sigma(AF'), E' \not \subseteq AR \lor E' = E) \\
         &{} \quad \quad \text{ then } \\
         &{} \forall E \in \sigma(AF), (\{(a,b) \mid (a,b) \in Attacks', a \in AR' \setminus AR, b \in E \}=\emptyset \text{ implies } \\
         &{} \quad \forall E' \in \sigma(AF'), E \subseteq E')
    \end{align*}
\end{proposition}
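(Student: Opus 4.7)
My plan is to prove the statement by exhibiting a single counter-example, which suffices since the statement is existential (``there exist an argumentation semantics $\sigma$ and argumentation frameworks $AF, AF'$''). I would simply recycle the witness used in the preceding proposition for the weak case: stage semantics $\sigma_{stage}$, together with $AF = (\{c, d, e\}, \{(c, d), (d, e), (e, c)\})$ and $AF' = (\{c, d, e, f\}, \{(c, d), (d, e), (e, c), (f, d)\})$. Stage semantics is maximal-conflict-free-set-based as required, and $AF \preceq_N AF'$ by inspection (no attack between existing arguments is added; only $(f, d)$ is introduced, with $f$ fresh).

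The key computation I would carry out next is $\sigma_{stage}(AF) = \{\{c\}, \{d\}, \{e\}\}$ and $\sigma_{stage}(AF') = \{\{e, f\}\}$. The first is immediate from the 3-cycle structure (each singleton has range equal to $\{c, d, e\}$, which is maximal). The second is the only mildly delicate bookkeeping step: I would enumerate the conflict-free subsets of $AR'$ (essentially $\{c\}, \{d\}, \{e\}, \{f\}, \{c, f\}, \{e, f\}$, plus the empty set) and compare their ranges, confirming that only $\{e, f\}$ attains the maximal range $\{c, d, e, f\} = AR'$. In particular, $\{c, f\}$ only has range $\{c, d, f\}$ and is therefore strictly dominated.

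With the extensions in hand, the rest is straightforward. For the antecedent of the implication (strong reference independence), I would observe that the unique extension $E' = \{e, f\}$ of $AF'$ contains $f \notin AR$, hence $E' \not\subseteq AR$, so the disjunction $E' \not\subseteq AR \lor E' = E$ holds trivially for every pair $(E, E')$ with $E \in \sigma_{stage}(AF)$ and $E' \in \sigma_{stage}(AF')$. For the failure of the consequent (strong cautious monotony), I would pick $E = \{c\} \in \sigma_{stage}(AF)$ and note that the only attack originating from a fresh argument is $(f, d)$, which does not hit $E$; thus $\{(a, b) \in Attacks' \mid a \in AR' \setminus AR, b \in E\} = \emptyset$, and yet $\{c\} \not\subseteq \{e, f\}$, violating the required conclusion. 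The only real obstacle is carrying out the range comparison for $AF'$ carefully enough to be sure $\{e, f\}$ is the unique stage extension; everything else is direct verification.
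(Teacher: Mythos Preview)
Your proposal is correct and follows essentially the same approach as the paper: the same semantics ($\sigma_{stage}$), the same pair of frameworks, the same extension computations, and the same witness $E=\{c\}$ for the failure of strong cautious monotony. Your write-up is in fact slightly more careful than the paper's, which contains a typo ($E=\{a\}$ instead of $E=\{c\}$) and omits the explicit range verification for $AF'$ that you sketch.
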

\begin{proof}
    We provide the following proof by counter-example.
    \begin{enumerate}
        \item Let $AF = (AR, Attacks) = (\{c, d, e\}, \{(c, d), (d, e), (e, c)\})$, \\ $AF' = (\{c, d, e, f\}, \{(c, d), (d, e), (e, c), (f, d)\})$ and let us take stage semantics, denoted by $\sigma_{stage}$, as an example of a semantics whose extensions are conflict-free sets. Note that $AF \preceq_N AF'$.
        \item $\sigma_{stage}(AF) = \{\{c\}, \{d\}, \{e\}\}$ and  $\sigma_{stage}(AF') = \{\{e, f\}\}$.
        \item Given the extension $E \in \sigma_{stage}(AF), E = \{a\}$, $\{(a,b) \mid (a,b) \in Attacks', a \in AR' \setminus AR, b \in E \}=\emptyset$ holds true.
        \item Given $E' = \{e, f\}$ as the only extension in $\sigma(AF')$ it holds true that $E' \not \subseteq AR$; the result does not violate the strong reference independence property.
        \item However, given $E' = \{e, f\}$ as the only extension in $\sigma(AF')$ and given $E = \{c\}, E \in \sigma_{stage}(AF)$, we have $E \not \subseteq E'$, which violates strong cautious monotony. We have proven the proposition.
    \end{enumerate}
\end{proof}
Let us now prove that strong rational monotony does not imply strong reference independence.
\begin{proposition}[Strong Rational Monotony does not Imply Strong Reference Independence]
    There exist an argumentation semantics $\sigma$ whose extensions are maximal conflict-free sets (w.r.t. set inclusion) and argumentation frameworks $AF = (AR, Attacks), AF' = (AR', Attacks')$, such that $AF \preceq_N AF'$, such that the following statement does not hold true:
    \begin{align*}
             &{} \quad \text{If } \forall E \in \sigma(AF), (\{(a,b) \mid (a,b) \in Attacks', a \in UE'_{new}, b \in E \}=\emptyset \text{ implies } \\
             &{} \quad \forall E' \in \sigma(AF'), E \subseteq E') \\ 
             &{} \quad \quad \text{ then } \\
             &{} \quad (\forall E \in \sigma(AF), \forall E' \in \sigma(AF'), (E' \not \subseteq AR \lor E' = E))),
        \end{align*}
        where $UE'_{new} = \bigcup_{E' \in \sigma(AF')} (E' \setminus AR)$.
\end{proposition}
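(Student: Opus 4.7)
My plan is to prove the proposition by counter-example, reusing the very construction that the paper already introduced earlier in the section to motivate why strong reference independence is too strict. The key intuition is that strong rational monotony is a much weaker precondition than it looks: as soon as every new argument that ever appears in some extension of $AF'$ attacks every extension of $AF$, the hypothesis of strong rational monotony becomes vacuously true, while strong reference independence can easily fail by having two distinct single-argument extensions over the original arguments.

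Concretely, I plan to take a semantics $\sigma$ whose extensions are maximal conflict-free sets (for instance CF2 or stage) and let
\begin{align*}
    AF &= (\{a,b\}, \{(a,b),(b,a)\}),\\
    AF' &= (\{a,b,c\}, \{(a,b),(b,a),(b,c),(c,b),(c,a),(a,c)\}).
\end{align*}
Clearly $AF \preceq_N AF'$. Since each framework consists of a single SCC, CF2 (equivalently, for this shape, naive or stage) returns $\sigma(AF) = \{\{a\},\{b\}\}$ and $\sigma(AF') = \{\{a\},\{b\},\{c\}\}$. The only new argument is $c$, and $c$ appears in an extension of $AF'$, so $UE'_{new} = \{c\}$.

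I would then verify the two requirements separately. First, strong rational monotony holds vacuously: for each $E \in \sigma(AF)$, the set $\{(x,y) \mid (x,y)\in Attacks',\, x\in UE'_{new},\, y\in E\}$ is nonempty because $(c,a)$ and $(c,b)$ are both in $Attacks'$, so the implication defining strong rational monotony is satisfied for $E=\{a\}$ and $E=\{b\}$ with nothing to check on the conclusion side. Second, strong reference independence fails: take $E=\{a\}\in \sigma(AF)$ and $E'=\{b\}\in \sigma(AF')$; then $E' \subseteq AR$ and $E'\neq E$, violating the requirement $E'\not\subseteq AR \lor E' = E$.

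I do not foresee any real obstacle here; the main subtlety is just to pick a semantics that satisfies the maximal-conflict-freeness assumption under which the other propositions in this subsection are stated, and to point out that the \emph{vacuous} truth of the rational monotony premise is exactly what lets strong reference independence break. The construction is essentially the same three-cycle-between-mutually-attacking-arguments example that earlier in the paper motivated replacing strong reference independence by its weak version, which makes the counter-example natural and keeps the proof aligned with the narrative already developed in Section~\ref{paradigms}.
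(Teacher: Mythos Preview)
Your proposal is correct and follows essentially the same approach as the paper: a counter-example in which the strong rational monotony premise holds vacuously (the new argument in $UE'_{new}$ attacks every extension of $AF$) while strong reference independence fails because $\sigma(AF')$ contains two distinct singleton extensions lying inside $AR$. The paper's own proof uses stage semantics with the slightly asymmetric frameworks $AF=(\{c,d\},\{(c,d),(d,c)\})$ and $AF'=(\{c,d,e\},\{(c,d),(d,c),(d,e),(e,d),(e,c)\})$, yielding $\sigma_{stage}(AF')=\{\{d\},\{e\}\}$, but the mechanism is identical to your fully symmetric three-clique, and your choice has the advantage of recycling the very example already used in Section~\ref{paradigms}.
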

\begin{proof}
    Let us provide a proof by counter-example.
        \begin{enumerate}
        \item Let $AF = (AR, Attacks) = (\{c, d\}, \{(c, d), (d, c)\})$ and let $AF' = (AR', Attacks') = (\{c, d, e\}, \{(c, d), (d, c), (d, e), (e, d), (e, c)\})$. Let us take stage semantics, denoted by $\sigma_{stage}$, as an example of a semantics whose extensions are maximal conflict-free sets. Note that $AF \preceq_N AF'$.
        \item $\sigma_{stage}(AF) = \{\{c\}, \{d\}\}$ and $\sigma_{stage}(AF') = \{\{d\}, \{e\}\}$.
        \item Given $\forall E \in \sigma(AF)$, $\{(a,b) \mid (a,b) \in Attacks', a \in UE'_{new}, b \in E \}=\emptyset$ does not hold true. Hence, $\{(a,b) \mid (a,b) \in Attacks', a \in UE'_{new}, b \in E \}=\emptyset \text{ implies } E \subseteq E'$ holds true.
        However, $\exists E \in \{\{c\}, \{d\}\}$ such that it does not hold true that $(\forall E' \in \sigma(AF'), (E' \not \subseteq AR \lor E' = E))$. We have proven the proposition.
    \end{enumerate}
\end{proof}
Finally, let us prove that strong reference independence does not imply strong rational monotony.
\begin{proposition}[Strong Reference Independence does not Imply Strong Rational Monotony]
    There exist an argumentation semantics $\sigma$ whose extensions are maximal conflict-free sets (w.r.t. set inclusion) and argumentation frameworks $AF = (AR, Attacks), AF' = (AR', Attacks')$, such that $AF \preceq_N AF'$, such that the following statement does not hold true:
    \begin{align*}
             &{}\text{If } (\forall E \in \sigma(AF), \forall E' \in \sigma(AF'), E' \not \subseteq AR \lor E' = E)) \\
             &{} \quad \quad \text{then} \\
             &{}\forall E \in \sigma(AF), (\{(a,b) \mid (a,b) \in Attacks', a \in UE'_{new}, b \in E \}=\emptyset \\
             &{} \quad \text{ implies } \forall E' \in \sigma(AF'), E \subseteq E'),
    \end{align*}
    where $UE'_{new} = \bigcup_{E' \in \sigma(AF')} (E' \setminus AR)$.
\end{proposition}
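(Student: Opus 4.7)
The plan is to reuse, essentially verbatim, the counter-example from the weak version of this implication (the proof that weak reference independence does not imply weak rational monotony). Take stage semantics as $\sigma_{stage}$, whose extensions are maximal conflict-free sets, and the normal expansion
\[
AF = (\{c,d,e\}, \{(c,d),(d,e),(e,c)\}), \qquad AF' = (\{c,d,e,f\}, \{(c,d),(d,e),(e,c),(f,d)\}).
\]
One computes $\sigma_{stage}(AF) = \{\{c\},\{d\},\{e\}\}$ and $\sigma_{stage}(AF') = \{\{e,f\}\}$.

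First I would verify strong reference independence. The only $\sigma_{stage}$-extension of $AF'$ is $E' = \{e,f\}$, and since $f \in AR'\setminus AR$ we have $E' \not\subseteq AR$. Hence the disjunction $E' \not\subseteq AR \lor E' = E$ is satisfied for every pair $(E,E') \in \sigma_{stage}(AF) \times \sigma_{stage}(AF')$, establishing the hypothesis of the implication we want to refute.

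Next I would exhibit the failure of strong rational monotony. Here $UE'_{new} = \bigcup_{E' \in \sigma_{stage}(AF')}(E'\setminus AR) = \{f\}$, and the only attack originating from $f$ in $Attacks'$ is $(f,d)$. So for $E = \{c\} \in \sigma_{stage}(AF)$,
\[
\{(a,b) \mid (a,b) \in Attacks',\; a \in UE'_{new},\; b \in E\} \;=\; \emptyset,
\]
but $E = \{c\} \not\subseteq \{e,f\} = E'$, so the strong-rational-monotony conclusion $\forall E' \in \sigma(AF'), E \subseteq E'$ fails. This contradicts the ``then'' branch of the implication, completing the counter-example.

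There is no real obstacle: the same framework pair already does the job in the weak case, and strengthening ``$\exists E'$'' to ``$\forall E'$'' only makes strong reference independence easier to obtain here (because every extension of $AF'$ contains the new argument $f$ and thus automatically satisfies $E' \not\subseteq AR$), while the violation of rational monotony requires only a single offending $E' \in \sigma(AF')$ and so transfers unchanged. The only thing to double-check is the stage-semantics computation of $\sigma_{stage}(AF')$, in particular that ranges like $\{e,f\}^+ = \{c,d\}$ give $\{e,f\} \cup \{e,f\}^+ = \{c,d,e,f\} = AR'$ while no other conflict-free set attains the maximal range, ruling out competing extensions.
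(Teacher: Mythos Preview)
Your proposal is correct and matches the paper's own proof essentially verbatim: the paper uses the same stage-semantics counter-example $AF=(\{c,d,e\},\{(c,d),(d,e),(e,c)\})$ and $AF'=(\{c,d,e,f\},\{(c,d),(d,e),(e,c),(f,d)\})$, verifies $E'=\{e,f\}\not\subseteq AR$ for strong reference independence, and then picks $E=\{c\}$ to exhibit the failure of strong rational monotony. Your additional justification of the stage computation and of $UE'_{new}=\{f\}$ is a welcome elaboration but does not depart from the paper's argument.
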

\begin{proof}
    Let us provide a proof by counter-example.
    \begin{enumerate}
        \item Let $AF = (AR, Attacks) = (\{c, d, e\}, \{(c, d), (d, e), (e, c)\})$, \\ $AF' = (\{c, d, e, f\}, \{(c, d), (d, e), (e, c), (f, d)\})$ and let us take stage semantics, denoted by $\sigma_{stage}$, as an example of a semantics whose extensions are maximal conflict-free sets. Note that $AF \preceq_N AF'$.
        \item $\sigma_{stage}(AF) = \{\{c\}, \{d\}, \{e\}\}$ and  $\sigma_{stage}(AF') = \{\{e, f\}\}$.
        \item $\forall E' \in \sigma(AF')$, it holds true that $E' \not \subseteq AR$.
        \item However, given the extension $E \in \sigma_{stage}(AF)$, $E = \{c\}$ it holds true that $\{(a,b) \mid (a,b) \in Attacks', a \in UE'_{new}, b \in E \}=\emptyset$ and $\exists E' \in \sigma(AF')$, such that $E \not \subseteq E'$, which violates strong rational monotony.
    \end{enumerate}
\end{proof}
\subsection*{Appendix B. Monotony and Reference Independence In Maximal Admissible Set-Based Semantics}
Let us prove that weak monotony implies weak reference independence and strong monotony implies strong reference independence in the case of maximal admissible set-based semantics.
\begin{proposition}[Weak Monotony Implies Weak Reference Independence]\label{prop-set-based}
    For all argumentation semantics $\sigma$ whose extensions are maximal admissible sets (w.r.t. set inclusion) and every two argumentation frameworks $AF = (AR, Attacks), AF' = (AR', Attacks')$, such that $AF \preceq_N AF'$, the following statement holds true:
    \begin{align*}
       {}& \text{If }  (\forall E \in \sigma(AF) \text{ it holds true that } \exists E' \in \sigma(AF'), E \subseteq E') \\
       {}& \phantom{eee} \text{ then } \\
       {}& (\forall E \in \sigma(AF) \text{ it holds true that } \exists E' \in \sigma(AF'), E' \not \subseteq AR \lor E' = E)
    \end{align*}
\end{proposition}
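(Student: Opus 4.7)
The plan is to mirror the structure of the proof of Proposition~\ref{monotony-implies-ref-dep} (the maximal conflict-free case), replacing the maximality argument on conflict-free sets with a maximality argument on admissible sets, and exploiting the fact that $AF \preceq_N AF'$ does not alter any attack between two arguments of $AR$.

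First I would fix an arbitrary $E \in \sigma(AF)$ and apply the weak monotony hypothesis to obtain some $E' \in \sigma(AF')$ with $E \subseteq E'$. I then split into two cases. If $E = E'$, the conclusion holds trivially. If $E \subsetneq E'$, I aim to establish $E' \not\subseteq AR$, which I will do by contradiction: assume $E' \subseteq AR$ and derive that $E'$ is admissible in $AF$, thereby strictly extending the maximal admissible set $E$ and contradicting the assumption that $E \in \sigma(AF)$.

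The main technical step — and the place where admissibility makes the argument subtler than the conflict-free case — is showing that $E'$ remains admissible when viewed inside $AF$. Conflict-freeness is immediate: $E' \subseteq AR$ and by normal expansion $Attacks \cap (AR \times AR) = Attacks' \cap (AR \times AR)$, so any internal conflict in $AF$ would already be a conflict in $AF'$. For the defense condition, let $a \in E'$ and let $b \in AR$ attack $a$ in $AF$; then by normal expansion $b$ also attacks $a$ in $AF'$, so some $c \in E'$ attacks $b$ in $AF'$. Because $E' \subseteq AR$ and $b \in AR$, the pair $(c,b)$ lies in $AR \times AR$, and normal expansion guarantees $(c,b) \in Attacks$ as well. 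Hence every attacker in $AF$ of an argument in $E'$ is counter-attacked from within $E'$ in $AF$, so $E'$ is admissible in $AF$.

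The final step is to conclude the contradiction: $E \subsetneq E' \subseteq AR$ and $E'$ admissible in $AF$ means that $E$ is not a maximal admissible subset of $AR$, contradicting $E \in \sigma(AF)$. Hence under the assumption $E \subsetneq E'$ we must have $E' \not\subseteq AR$, completing the case analysis and the proof. The hardest point, as indicated, is handling the defense clause under normal expansion; the observation that attacks between two elements of $AR$ are preserved in both directions is exactly what the normal-expansion hypothesis buys us, and this is where the proof in the admissible case diverges from the conflict-free one.
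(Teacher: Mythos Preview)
Your proof is correct and follows essentially the same approach as the paper: both argue by contradiction that if $E \subsetneq E' \subseteq AR$ then $E$ fails to be a maximal admissible set in $AF$. Your version is more explicit than the paper's in justifying why admissibility of $E'$ transfers from $AF'$ back to $AF$ under a normal expansion, but the underlying idea is identical.
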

\begin{proof}
    Let us provide a proof by contradiction.
\begin{enumerate}
    \item $\forall E \in \sigma(AF), E' \in \sigma(AF')$ if $E \subseteq E'$, then we have the following cases:
    \\ \emph{i)} E = E' or
    \\ \emph{ii)} $E \subset E'$.
    \item We need to contradict \emph{ii)}. From \emph{ii)} it follows that if $E' \subseteq AR$ then $\exists a \not \in E, a \in AR' \cap AR$, such that $E \cup \{a\}$ is admissible.
    \item It follows that $E$ is not a maximal admissible set (w.r.t. set inclusion) of $AF$. Contradiction.
\end{enumerate}
\end{proof}

The same proof applies to the proposition that strong monotony implies strong reference independence in the case of maximal admissible set-based semantics.
\begin{corollary}
    For all argumentation semantics $\sigma$ whose extensions are maximal admissible sets (w.r.t. set inclusion) and every two argumentation frameworks $AF = (AR, Attacks), AF' = (AR', Attacks')$, such that $AF \preceq_N AF'$, the following statement holds true:
    \begin{align*}
       {}& \text{If }  (\forall E \in \sigma(AF), \forall E' \in \sigma(AF'), E \subseteq E') \\
       {}& \phantom{eee} \text{ then } \\
       {}& (\forall E \in \sigma(AF), \forall E' \in \sigma(AF'), E' \not \subseteq AR \lor E' = E)
    \end{align*}
\end{corollary}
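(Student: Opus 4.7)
The plan is to observe that the corollary follows from essentially the same pointwise argument as Proposition~\ref{prop-set-based}, since the structural difference between the strong and weak versions is only in whether we existentially or universally quantify over $E'$. The implication to be established is: given arbitrary $E \in \sigma(AF)$ and $E' \in \sigma(AF')$ with $E \subseteq E'$, it must hold that $E' \not\subseteq AR$ or $E' = E$. Once this is shown for an arbitrary pair, universally quantifying recovers exactly the consequent of the corollary.

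First, I would fix $E \in \sigma(AF)$ and $E' \in \sigma(AF')$. By the strong monotony hypothesis, $E \subseteq E'$. I would then split into the cases $E = E'$ (trivially done) and $E \subsetneq E'$. In the latter case, I aim to derive a contradiction from the assumption $E' \subseteq AR$, which would force the remaining disjunct $E' \not\subseteq AR$ to hold.

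The core step — and the only place any real content enters — is the lemma that if $E' \subseteq AR$ and $E'$ is admissible in $AF'$, then $E'$ is already admissible in $AF$. This uses $AF \preceq_N AF'$: since no attacks among arguments of $AR$ are added, the set of attackers of any $b \in E'$ coming from $AR$ is identical in $AF$ and $AF'$, and since $E' \subseteq AR$ these are all the attackers of $E'$ in $AF$; admissibility of $E'$ in $AF'$ against these attackers transfers to $AF$. From $E \subsetneq E' \subseteq AR$ with $E'$ admissible in $AF$, it follows that $E$ is not maximal admissible in $AF$, contradicting $E \in \sigma(AF)$.

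The main obstacle, such as it is, is merely justifying the transfer of admissibility under normal expansion cleanly; this is routine but must be stated, because the proof of Proposition~\ref{prop-set-based} glosses over it by simply asserting ``$E \cup \{a\}$ is admissible.'' I would state the transfer as a small observation and invoke it, keeping the whole argument to a few lines. Since the existential-versus-universal distinction does not affect this pointwise reasoning, the corollary is essentially immediate from the proof technique of Proposition~\ref{prop-set-based} and could reasonably be labelled ``proof analogous.''
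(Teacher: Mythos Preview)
Your proposal is correct and follows essentially the same approach as the paper, which simply states that the proof follows from the proof of Proposition~\ref{prop-set-based}. Your version is in fact slightly more careful, since you make explicit the admissibility-transfer step (that $E' \subseteq AR$ admissible in $AF'$ is admissible in $AF$ under a normal expansion), which the paper's proof of Proposition~\ref{prop-set-based} leaves implicit when it asserts that ``$E \cup \{a\}$ is admissible.''
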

The proof follows from the proof of Proposition~\ref{prop-set-based}.

Let us now provide the proof that weak rational monotony implies weak reference independence for maximal admissible set-based semantics.
\begin{proposition}[Weak Rational Monotony Implies Weak Reference Independence]\label{rational-monotony-not-ref-dep-2}
    For every argumentation semantics $\sigma$ whose extensions are maximal admissible sets (w.r.t. set inclusion) and every two argumentation frameworks $AF = (AR, Attacks), AF' = (AR', Attacks')$, such that $AF \preceq_N AF'$, the following statement holds true:
    \begin{align*}
             &{} \forall E \in \sigma(AF), \\
             &{} \quad (\text{If } (\{(a,b) \mid (a,b) \in Attacks', a \in UE'_{new}, b \in E \}=\emptyset \\
              &{} \quad \quad \text{ implies } \exists E' \in \sigma(AF'), \text{ such that } E \subseteq E') \\ 
             &{} \quad \quad \text{ then } \\
             &{}\quad (\exists E' \in \sigma(AF'), \text{ such that } (E' \not \subseteq AR \lor E' = E)))
        \end{align*},
        where $UE'_{new} = \bigcup_{E' \in \sigma(AF')} (E' \setminus AR)$.
\end{proposition}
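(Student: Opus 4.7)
The plan is to mirror the proof given for maximal conflict-free set-based semantics (Proposition on weak rational monotony implying weak reference independence earlier in the paper), but to substitute Proposition~\ref{prop-set-based} at the crucial step, since that is exactly the maximal-admissible-set analogue of the fact previously used.

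Fix any $E \in \sigma(AF)$ and split on whether the \emph{new-extension attack set} $T = \{(a,b) \mid (a,b) \in Attacks', a \in UE'_{new}, b \in E\}$ is empty.

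Case 1, $T \neq \emptyset$: pick $(a,b) \in T$. By the definition of $UE'_{new} = \bigcup_{E' \in \sigma(AF')} (E' \setminus AR)$, there exists some $E' \in \sigma(AF')$ containing $a$, with $a \notin AR$. Hence $E' \not \subseteq AR$, and this $E'$ already witnesses weak reference independence for $E$.

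Case 2, $T = \emptyset$: the antecedent of weak rational monotony is satisfied for $E$, so the hypothesis gives us some $E' \in \sigma(AF')$ with $E \subseteq E'$. Now invoke Proposition~\ref{prop-set-based}: because $\sigma$ is maximal-admissible-set based and $AF \preceq_N AF'$, the existence of such an $E'$ forces $E' \not \subseteq AR$ or $E' = E$, which is precisely the weak reference independence conclusion for $E$.

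The two cases exhaust the possibilities, and the main ingredient beyond pure bookkeeping is Proposition~\ref{prop-set-based}; the only subtlety to watch is that the Case 1 argument does not itself rely on admissibility (it is purely about how $UE'_{new}$ is defined), so no additional hypothesis on $\sigma$ is required there. I expect no obstacles beyond the already-settled maximality argument of Proposition~\ref{prop-set-based}.
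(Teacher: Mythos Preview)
Your proposal is correct and follows essentially the same approach as the paper's own proof: both split on whether the attack set $T$ is empty, use the definition of $UE'_{new}$ to handle the nonempty case, and appeal to (the proof of) Proposition~\ref{prop-set-based} for the empty case. Your Case~1 is in fact slightly more explicit than the paper's in spelling out why $T \neq \emptyset$ yields some $E' \not\subseteq AR$, but the structure and key ingredients are identical.
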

\begin{proof}
    Given any $E \in \sigma(AF)$, we have two cases for which weak rational monotony is satisfied.
    \begin{description}
        \item[Case 1:] $\{(a,b) \mid (a,b) \in Attacks', a \in UE'_{new}, b \in E \} \neq \emptyset$: \\
        If $\{(a,b) \mid (a,b) \in Attacks', a \in UE'_{new}, b \in E \} \neq \emptyset$, then $\exists E' \in \sigma(AF'), \text{ such that } (E' \not \subseteq AR)$. Consequently, weak reference independence is satisfied.
         \item[Case 2:] $\{(a,b) \mid (a,b) \in Attacks', a \in UE'_{new}, b \in E \} = \emptyset$ and $\exists E' \in \sigma(AF'), \text{ such that } E \subseteq E'$: \\
         Because it holds true that $\exists E' \in \sigma(AF'), \text{ such that } E \subseteq E'$ and because all $\sigma$-extensions are maximal admissible sets (w.r.t. set inclusion), from the proof of Proposition~\ref{prop-set-based} it follows that weak reference independence is satisfied. We have proven the proposition.
    \end{description}
\end{proof}

\subsection*{Appendix C. Uniqueness in Argumentation Semantics}
\label{app-uniqueness}
This appendix provides an overview of which argumentation semantics satisfies the \emph{uniqueness} principle. We can rely on an analysis provided by Baumann as the basis of our overview~\cite{baumann2017nature}.
Note that we base the overview on the assumption that any argumentation framework is \emph{finite}.
Consequently, if, given an argumentation semantics, at least one extension is guaranteed given any finite argumentation framework, unique skeptical extensions are guaranteed.

\begin{table}[ht]
    \centering
    \def\arraystretch{1.5}
     \setlength\tabcolsep{5pt}
     \caption{Overview: compliance with the \emph{uniqueness} principle.}
     \vspace{5pt}
    \begin{tabular}{ | c | c | c |}
      \hline
      \phantom{e} & \emph{Credulous ($\sigma(AF)$)} & \emph{Skeptical ($\sigma^{\cap}(AF)$)} \\
      \hline
      Complete &  No  & Yes  \\
      Grounded &  Yes & Yes  \\
      Preferred &  No & Yes \\
      Stable &  No & No  \\
      Ideal &  Yes  & Yes \\
      Semi-stable & No & Yes \\
      Eager &  Yes & Yes \\
      Stage &  No & Yes \\
      CF2 &  No & Yes \\
      Stage2 &  No & Yes \\
      \hline
    \end{tabular}
    \label{principle-overview-1}
\end{table}

\acks{
This work was partially supported by the Wallenberg AI, Autonomous Systems and Software Program (WASP) funded by the Knut and Alice Wallenberg Foundation.
We thank Dov Gabbay, as well as the anonymous reviewers, for valuable feedback and discussions.}
\vskip 0.2in

\bibliography{references}
\bibliographystyle{acm}
\end{document}